\documentclass[11pt]{article}
%\pdfoutput=1
\usepackage{enumerate}
\usepackage{pdfsync}
\usepackage[OT1]{fontenc}

\usepackage[usenames]{color}
\usepackage{smile}
\usepackage[colorlinks,
            linkcolor=red,
            anchorcolor=blue,
            citecolor=blue
            ]{hyperref}
\usepackage{fullpage}
\usepackage[protrusion=true,expansion=true]{microtype}
\usepackage{pbox}
\usepackage{setspace}
\usepackage{tabularx}
\usepackage{float}
\usepackage{wrapfig,lipsum}
\usepackage{enumitem}
\usepackage{microtype}
\usepackage{graphicx}
\usepackage{subfigure}
\usepackage{pgfplots}
\usepackage{mathtools}
\usepackage{caption}
\usetikzlibrary{arrows,shapes,snakes,automata,backgrounds,petri}
\usepackage{booktabs} % for professional tables

%%%%%%%%%%%%%%%%%%%%%%%%%%
\allowdisplaybreaks
\usepackage{colortbl}
%\def \CC {}

%%%%%%%%%%%%%%%%%%%%%%%%%%%%%%%%%%%%%

\usepackage{xcolor}

\usepackage{makecell}
\usepackage{pifont}
\newcommand{\cmark}{\ding{51}}%
\newcommand{\xmark}{\ding{55}}%
\newcommand{\DKL}{D_{\mathrm{KL}}}
\DeclareMathOperator{\Ber}{Ber}

\makeatletter
\newcommand*{\rom}[1]{\expandafter\@slowromancap\romannumeral #1@}
\makeatother
\title{\huge Feel-Good Thompson Sampling for Contextual Dueling Bandits}
\author
{
    Xuheng Li\thanks{Department of Computer Science, University of California, Los Angeles, CA 90095, USA; e-mail: {\tt xuheng.li@cs.ucla.edu}}
    ~~~~
    Heyang Zhao\thanks{Department of Computer Science, University of California, Los Angeles, CA 90095, USA; e-mail: {\tt hyzhao@cs.ucla.edu}}
    ~~~~
    Quanquan Gu\thanks{Department of Computer Science, University of California, Los Angeles, CA 90095, USA; e-mail: {\tt qgu@cs.ucla.edu}}
}
%%%%%%%%%%%%%%%%%%%%%%%%%%
%\usepackage{enumitem}
%%%%%%%%%%%%%%%%%%%%%%%%%%%%%%%%%%%%%

\newcommand{\regret}{\mathrm{Regret}}
\newcommand{\BE}{\mathrm{BE}}
\newcommand{\FG}{\mathrm{FG}}
\newcommand{\LS}{\mathrm{LS}}

\newcommand{\rd}{\text{d}}
\newcommand{\dc}{\mathrm{dc}}
\definecolor{LightCyan}{rgb}{0.58, 0.94, 0.85}
\ifdefined\final
\usepackage[disable]{todonotes}
\else
\usepackage[textsize=tiny]{todonotes}
\fi
\setlength{\marginparwidth}{0.6in}

\begin{document}
    \date{}
    \maketitle

\begin{abstract}
Contextual dueling bandits, where a learner compares two options based on context and receives feedback indicating which was preferred, extends classic dueling bandits by incorporating contextual information for decision-making and preference learning.
Several algorithms based on the upper confidence bound (UCB) have been proposed for linear contextual dueling bandits. However, no algorithm based on posterior sampling has been developed in this setting, despite the empirical success observed in traditional contextual bandits.
In this paper, we propose a Thompson sampling algorithm, named FGTS.CDB, for linear contextual dueling bandits.
At the core of our algorithm is a new Feel-Good exploration term specifically tailored for dueling bandits. This term leverages the independence of the two selected arms, thereby avoiding a cross term in the analysis. We show that our algorithm achieves nearly minimax-optimal regret, i.e.,  $\tilde\cO(d\sqrt T)$, where $d$ is the model dimension and $T$ is the time horizon.
Finally, we evaluate our algorithm on synthetic data and observe that FGTS.CDB outperforms existing algorithms by a large margin.
\end{abstract}

\section{Introduction}

%The new demand for reinforcement learning with human feedback (RLHF, \citealt{ouyang2023training}) has boosted interest in the framework of dueling bandits~\citep{Yue2012TheKD}, where the learning agent interacts with the environment by selecting two arms from the action set and then receiving a binary preferential feedback.
Reinforcement learning from human feedback (RLHF) has become a popular methodology in the alignment of large language models (LLMs, \citealt{ouyang2023training, diao2023lmflow}).
In RLHF, it is often easier for the human user to compare two responses than providing a numerical reward/score based on a common standard.
Therefore, existing works on RLHF~\citep{zhu2023principled, ji2023provable} focus on a model where the learning agent has a dataset of users' preferences among several choices.
The preferences are often assumed to follow the Plackett-Luce (PL) model~\citep{soufiani2014computing, khetan2016data, ren2018pac}, where the probability of the user favoring a certain choice is proportional to the exponential of the reward function, and the special case where two choices are presented to the user is called the Bradley-Terry-Luce (BTL) model~\citep{hunter2004mm, luce2005individual}.
The online version of the preference-based model, called the \textit{dueling bandits}, has been studied extensively~\citep{Yue2012TheKD, Zoghi2014RelativeUC, komiyama2015regret} when the set of the action space is fixed and finite (i.e., the multi-armed dueling bandits).
%
%However, multi-armed dueling bandits have been unable to handle a number of practical situations in the interaction of human users with the LLM:
%During the interaction, the action set, which is the set of possible responses to the user's prompt, is often very large (possibly infinite) and prompt-dependent;
%additionally, the quality of the response is measured by how well it accomplishes the task proposed by the prompt rather than the response alone;
%it is also difficult for multi-armed dueling bandits to incorporate the structure of the reward function, even for the simplest case of linear function approximation.
Recently, a more general model, the \textit{(linear) contextual dueling bandit}~\citep{saha2021optimal, bengs2022stochastic}, has been proposed.
This model has important features including time-varying and possibly infinite action spaces, along with a context-dependent reward function with a linear structure, which capture important practical situations.
A number of algorithms have been proposed for (linear) contextual dueling bandits, including MaxInP~\citep{saha2021optimal}, CoLSTIM~\citep{bengs2022stochastic} and VACDB~\citep{di2023variance}, all of which are based on the upper confidence bound (UCB) technique for exploration.

Under the setting of traditional contextual bandits, Thompson sampling~\citep{thompson1933likelihood} is another technique for exploration apart from UCB-based methods, and superior empirical performance has been observed when applying Thompson sampling to various tasks~\citep{chapelle2011empirical, osband2017posterior}.
Instead of deterministically learning a model, in Thomson sampling, models are sampled from a posterior distribution constructed on historic observations.
It has been widely studied in both the multi-armed setting \citep{agrawal2012analysis, kaufmann2012thompson, agrawal2017near, jin2021mots} and the linear setting \citep{agrawal2013thompson}.
%However, existing theoretical analyses focus on the setting of non-contextual multi-armed bandits~\citep{agrawal2017near}, and it is unknown whether standard Thompson sampling algorithms, when applied to (linear) contextual bandits, can achieve the minimax-optimal regret similar to UCB-based algorithms. \todoq{need to revise the above two sentences}
Later, \citet{zhang2022feel} showed that the frequentist regret of linear Thompson sampling is suboptimal in the worst case and proposed a new variant of Thompson sampling called \textit{Feel-Good Thomson sampling} (FGTS) to overcome this issue.
The effectiveness of FGTS is theoretically justified:
when applied to linear contextual bandits, FGTS can achieve the minimax-optimal regret bound as UCB-based algorithms like LinUCB~\citep{li2010contextual} or OFUL~\citep{abbasi2011improved}. %\todoq{references}.

Despite the success of Thompson sampling algorithms in traditional contextual bandits, there have been few works that apply this technique to contextual dueling bandits.
The notable exception is a double Thompson sampling approach proposed by \citet{wu2016double}. However, this approach is limited to multi-armed dueling bandits and cannot be modified for (linear) contextual dueling bandits. %\todoq{mention the double thompson sampling paper. However, it is limited to mult-armed dueling bandits}.
In addition, it is also unknown whether algorithms based on Thompson sampling can achieve the same minimax-optimal regret bounds as UCB-based algorithms for contextual dueling bandits.
Therefore, we raise the following question:
\begin{center}
\textit{Is it possible to design an efficient algorithm for contextual dueling bandits based on Feel-Good Thompson sampling?}
\end{center}

In this paper, we affirmatively answer this question by solving the problem of linear contextual dueling bandits under the framework of Feel-Good Thompson sampling.
We summarize our contributions as follows:
\begin{itemize}[leftmargin=*]
\item We propose a new algorithm named FGTS.CDB for the problem of linear contextual dueling bandits, which is based on Feel-Good Thompson sampling.
Compared with existing FGTS algorithms for standard contextual dueling bandits~\citep{zhang2022feel}, we introduce a new Feel-Good exploration term designed specially for the comparison of two actions.
Compared with UCB-based approaches, our algorithm can handle the case of large action spaces more efficiently.
\item We prove that our algorithm enjoys a minimax-optimal regret bound of $\tilde\cO(d\sqrt T)$ in expectation, where $d$ is the feature dimensionality and $T$ is the number of rounds.
The new Feel-Good exploration term plays a crucial role in the proof by eliminating cross terms that arise from the comparison of actions.
\item We extend our analysis to the setting of general reward functions, and manage to recover the regret bound for several cases of interest, including the cases of finite action sets and finite model sets.
%\todoq{extend it to nonlinear setting?}
\item We conduct experiments to compare our algorithms with several strong baselines, including MaxInP, MaxPairUCB~\citep{saha2021optimal}, CoLSTIM~\citep{bengs2022stochastic} and VACDB~\citep{di2023variance}. We observe that the performance of FGTS.CDB is significantly better than all baselines.
\end{itemize}

\noindent\textbf{Notation.}
We use plain case letters to denote scalars and lowercase boldface letters to denote vectors.
We use $\langle\cdot, \cdot\rangle$ to denote the inner product of vectors.
For a vector $\xb$, $\|\xb\|_2$ denotes its $\ell_2$-norm.
We use $[N]$ as a shorthand for the set $\{1, 2, \dots, N\}$.
We use standard asymptotic notations including $\cO(\cdot)$, $\Omega(\cdot)$ and $\Theta(\cdot)$, while $\tilde\cO(\cdot)$ $\tilde\Omega(\cdot)$ and $\tilde\Theta(\cdot)$ hide logarithmic factors.

\newcolumntype{g}{>{\columncolor{LightCyan!40}}c}
\begin{table*}[ht!]
\caption{Comparison of our algorithm, FGTS.CDB, against existing algorithms for linear contextual dueling bandits. FGTS.CDB is the first algorithm for linear contextual dueling bandits using the technique of Thompson sampling. Our algorithm is also the first that can be easily applied to the case of infinite action spaces (modification for MaxInP is more complex). The regret bounds hold for linear contextual dueling bandits of $T$ rounds, with $d$-dimensional feature vectors and the action space of size $K$. } 
\centering\small
\begin{tabular}{gggg}
\toprule
\rowcolor{white} Algorithm & Main technique & Infinite action space? & Regret\\
\midrule
\rowcolor{white} \makecell{MaxInP\\\small{\citep{saha2021optimal}}} & UCB + Adaptive Selection  & \cmark & $\tilde\cO(d\sqrt T)$\\
\rowcolor{white} \makecell{CoLSTIM\\\small{\citep{bengs2022stochastic}}} & Perturbed UCB & \xmark & $\tilde\cO(d\sqrt T)$\\
\rowcolor{white} \makecell{Sta'D\\\small{\citep{saha2021optimal}}} & SupLinUCB + Adaptive Selection & \xmark & $\tilde\cO(\sqrt{dT\log K})$\\
\rowcolor{white} \makecell{SupCoLSTIM\\\small{\citep{bengs2022stochastic}}} & Perturbed SupLinUCB & \xmark & $\tilde\cO(\sqrt{dT\log K})$\\
\rowcolor{LightCyan!40}\makecell{~~~~~~~FGTS.CDB~~~~~~~\\\small{(This work)}} & Feel-Good Thompson Sampling & \cmark & $\tilde\cO(d\sqrt T)$\\
\bottomrule
\end{tabular}
\end{table*}
%\todoh{Needs to elaborate more: Why it is UCB}

\section{Related Work}

\noindent\textbf{Dueling bandits.}
First proposed by \citet{Yue2012TheKD}, the dueling bandit problem involves a learner sequentially selecting a pair of arms among multiple choices based on the noisy binary observations revealing the relative preference of the chosen arms. Under their multi-armed dueling bandit setting, \citet{Zoghi2014RelativeUC} proposed RUCB, a UCB-based algorithm which achieves an $\cO(K \log T / \Delta)$ regret upper bound where $K$ is the number of arms, $T$ is the number of rounds, and $\Delta$ stands for the gap between the best arm and the second-best arm. Later, \citet{komiyama2015regret} proposed RMED with a more sophisticated arm selection phase whose regret matches the lower bound with optimal constant. Relaxing the typical Condorcet winner setting where it is assumed that there is one arm that beats all the other arms, researchers also investigated other variants of multi-armed dueling bandits which assumed the existence of Copeland Winner \citep{Zoghi2015CopelandDB, wu2016double, komiyama2016copeland}, Borda winner \citep{jamieson2015sparse, falahatgar2017maximum, Heckel2016ActiveRF, Saha2021AdversarialDB, brandt2022finding, wu2023borda}, or von Neumann winner \citep{Dudk2015ContextualDB, balsubramani2016instance, Ramamohan2016DuelingBB}. 

\noindent\textbf{Contextual dueling bandits.}~There is also a large body of literature on contextual dueling bandits, where dueling bandits with contextual information is considered \citep{kumagai2017regret, saha2021optimal,saha2022efficient, bengs2022stochastic, di2023variance}. \citet{kumagai2017regret} studied dueling bandits with a cost function over a continuous space and achieved a dimension-free regret under the strong convexity and smoothness assumption. \citet{saha2021optimal} considered contextual dueling bandits with generalized linear classes and proposed an algorithm MaxInP with an $\tilde{\cO}(d\sqrt{T})$ regret and Sta'D with an $\tilde{\cO}(\sqrt{dT\log K})$ regret. \citet{bengs2022stochastic} proposed CoLSTIM and further extended it to the contextual linear stochastic transitivity model. Recently, \citet{di2023variance} proposed an action-elimination based algorithm VACDB, with a tighter variance-dependent regret bound. It is worth mentioning that all the existing algorithms for contextual dueling bandits need to either maintain a subset of eligible arms or maximize the randomly perturbed rewards over all the possible arms, which are only applicable to finite action space. 

\noindent\textbf{Feel-Good Thompson sampling (FGTS).} FGTS was proposed by \citet{zhang2022feel} to fill the gap between the practical effectiveness of Thompson sampling and a lack of frequentist-type regret guarantee.
When applied to linear contextual bandits, FGTS achieves a regret bound of $\tilde\cO(d\sqrt T)$ that matches the lower bound of $\Omega(d\sqrt T)$.
The analysis of this algorithm is based on the decoupling of arm selection with model parameters.
\citet{fan2023the} proposed a unified framework for the analysis of FGTS applied to a number of variants of linear contextual bandits.
Another line of works extends the idea of FGTS to reinforcement learning, including Model-based Optimistic Posterior Sampling (MOPS) for Markov decision processes~\citep{agarwal2022model} and conditional Posterior Sampling with Booster for two-player Markov games~\citep{xiong2022self}.
Our work is the first attempt to apply FGTS to contextual dueling bandits.

\noindent\textbf{Sampling-based algorithms for dueling bandits.}
\citet{wu2016double} proposed a double Thompson sampling algorithm for multi-arm dueling bandits which achieves a regret bound of $\cO(K^2\log T)$ where $K$ is the number of arms.
\citet{sui2017multi} also proposed an algorithm based on Thompson sampling that converted multi-dueling bandits to standard bandits.
However, these two algorithms cannot be modified for the setting of (linear) contextual dueling bandits because they depend on the count of comparison outcomes between the arms and $T$ is the number of rounds.
In contextual dueling bandits, this is infeasible because the set of arms are different across different rounds.
Other algorithms are based on the sampling of policies rather than model parameters.
For example, \citet{xiong2023gibbs} proposed a KL-constrained framework which uses Gibbs sampling.
Nonetheless, this work focuses on fine-tuning LLMs, and the regret studied in this work has an additional term that measures the difference between the learned policy and the original policy.
\citet{novoseller2020dueling} studied the application of posterior sampling in preference-based reinforcement learning.
However, the regret bound of the algorithm relies on the assumption of finite state and action sets and cannot be trivially extended to linear contextual dueling bandits.

\section{Problem Setting} \label{sec:setup}

In this work, we study the setting where the agent repeatedly interact with the agent to receive prompts and query preferences between the two chosen responses.

\noindent\textbf{Linear contextual dueling bandits.}
We focus on the setting of contextual dueling bandits with contextual information embodied in both the prompt and the action space, similar to~\citet{zhang2022feel}.
Let $\cX$ be the set of prompts and $\cA$ be the set of all possible responses.
During round $t$ in a total of $T$ rounds, the agent receives a prompt $x_t\in\cX$, along with a corresponding action space $\cA_t\subset\cA$ which can both vary across different rounds.
The agent then selects two responses (more commonly referred to as arms in the bandit context) $a_t^1, a_t^2\in\cA_t$ and receives a randomized preference $y_t$ whose distribution depends on an underlying reward function $r_*: \cX\times\cA\to\RR$.
$y_t=1$ represents the case where $a_t^1$ is preferred over $a_t^2$, and $y_t=-1$ otherwise.
We assume that the reward function class adopts a linear structure:
\begin{assumption}[Linear reward]\label{assumption:linear}
We assume that the reward function is parameterized by $r_{\btheta}=\langle\btheta, \bphi(s, a)\rangle$ for some known feature mapping $\bphi:\cX\times\cA\to\RR^d$. Specifically, the real value function is $r_*(x, a)=\langle\btheta_*, \bphi(x, a)\rangle$ for some vector $\btheta_*\in\RR^d$ hidden from the learning agent. We assume that $\|\bphi(s, a)\|_2\le1$ for all $(x, a)\in\cX\times\cA$, and $\|\btheta\|_2\le B$. Thus, the reward function is bounded by $|r_{\btheta}(\cdot, \cdot)|\le B$.
\end{assumption}

The setting we study is equivalent to those of previous works on contextual dueling bandits.
\citet{saha2021optimal} and \citet{bengs2022stochastic} considered a time-varying action space $\cS_t=\{\xb_1, \dots, \xb_K\}$ where each action is represented by a $d$-dimensional vector called the \textit{contextual vector}, and the reward function is defined as $r_*(\ab)=\langle\btheta_*, \ab\rangle$.
The contextual vector depends on both the prompt $x_t$ and the response $a_t$, and can be viewed as a counterpart of the feature mapping $\bphi(\cdot, \cdot)$ in Assumption \ref{assumption:linear}.
Compared with the contextual vector, our formulation is more general when considering other types of function approximations (see Section \ref{section:nonlinear_main}).

\noindent\textbf{Stochastic preference model.}
In this work, we assume that the preference $y_t$ follows a Bernoulli distribution according to the Bradley-Terry-Luce (BTL) model~\citep{hunter2004mm, luce2005individual}:
Given context $x_t$ and responses $a_t^1, a_t^2$, the probability of $a_t^1$ is preferred over $a_t^2$ is
\begin{align*}
\PP(y_t=1|x_t, a_t^1, a_t^2)&=\frac{\exp(r_*(x_t, a_t^1))}{\exp(r_*(x_t, a_t^1))+\exp(r_*(x_t, a_t^2))}=\exp(-\sigma(r_*(x_t, a_t^1)-r_*(x_t, a_t^2))),
\end{align*}
where $\sigma(z)=\log(1+\exp(-z))$.

Some other works study a more general setting called the Plackett-Luce (PL) model~\citep{soufiani2014computing, khetan2016data, ren2018pac}, where the learning agent selects $q\ge2$ arms $a_t^1, \dots, a_t^q\in\cA_t$ in round $t$ and receives the preference $o_t\in[q]$.
The probability of $a_t^j$ being preferred is
\[
\PP(o_t=j)=\frac{\exp(r_*(x_t, a_t^j))}{\sum_{j'=1}^q\exp(r_*(x_t, a_t^{j'}))}.
\]
The BTL model can be seen as a special case of the PL model by fixing $q=2$.
\citet{saha2021optimal} showed that under the PL model, the worst-case regret of any algorithm for dueling bandits is $\Omega(d\sqrt T)$, regardless of the choice of $q$.
Therefore, provided that the learner is permitted to select any number of arms, it suffices to design a minimax-optimal algorithm where two arms are selected in each round, which is shown to be true for our algorithm in Section~\ref{section:MainResults}.

\noindent\textbf{Learning Objective.}
Our goal is to minimize the cumulative average regret defined as
\[
\regret(T)\coloneqq\sum_{t=1}^T\bigg[r_*(x_t, a_t^*)-\frac{r_*(x_t, a_t^1)+r_*(x_t, a_t^2)}2\bigg],
\]
where $a_t^*=\argmax_{a\in\cA}r_*(x_t, a)$ is the optimal response at time $t$.
The regret we study is exactly the same as the regret studied in \citet{saha2021optimal} and \citet{bengs2022stochastic}.
The regret is also equivalent to the dueling bandit regret studied in~\citet{Yue2012TheKD}, defined as
\begin{align*}
\sum_{t=1}^T&\frac12\Big[\Big(\exp(-\sigma(r_*(x_t, a_t^*)-r_*(x_t, a_t^1)))-\frac12\Big)+\Big(\exp(-\sigma(r_*(x_t, a_t^*)-r_*(x_t, a_t^2)))-\frac12\Big)\Big],
\end{align*}
because $\exp(-\sigma(z))-1/2=\Theta(z)$ for $z\in[-2B, 2B]$.

\section{Algorithm Description}

We now present our algorithm, named FGTS.CDB, for linear contextual dueling bandits. The pseudocode is shown in Algorithm \ref{algorithm:FGTS.CDB}.

\begin{algorithm}[ht!]
\caption{FGTS.CDB}\label{algorithm:FGTS.CDB}
\begin{algorithmic}[1]
\STATE Given hyperparameters $\eta, \mu$. Initialize $S_0=\varnothing$.
\FOR{$t=1, \dots, T$}
\STATE Receive prompt $x_t$ and action space $\cA_t$.
\FOR{$j=1, 2$}
\STATE Sample model parameter $\btheta_t^j$ from the posterior distribution $p^j(\cdot|S_{t-1})$, defined in \eqref{eq:def_posterior}.
\STATE Select response $a_t^j=\argmax_{a\in\cA_t}\langle\btheta_t^j, \bphi(x_t, a)\rangle$.
\ENDFOR
\STATE Receive preference $y_t$.
\STATE Update dataset $S_t\gets S_{t-1}\cup\{(x_t, a_t^1, a_t^2, y_t)\}$.
\ENDFOR
\end{algorithmic}
\end{algorithm}

In our algorithm, the agent first samples model parameters $\btheta_t^1$ and $\btheta_t^2$ independently, following posterior distributions $p^1(\cdot|S_{t-1})$ and $p^2(\cdot|S_{t-1})$, respectively.
The posterior distributions are defined as
\begin{align}\label{eq:def_posterior}
p^j(\btheta|S_{t-1})\propto\exp\bigg(-\sum_{i=1}^{t-1}L^j(\btheta, x_i, a_i^1, a_i^2, y_i)\bigg)p_0(\btheta),
\end{align}
where $L^j$ is the likelihood function, and $p_0(\cdot)$ is the prior distribution.
Sampling from such a posterior distribution can be implemented via Langevin Monte Carlo (LMC), which has been studied extensively in the literature~\citep{roberts1996exponential, bakry2014analysis}.
Afterwards, actions $a_t^j$ are selected to maximize the inner product of the parameter $\btheta_t^j$ and the feature mapping $\bphi(x_t, a_t^j)$ for $j=1, 2$.
Finally, the agent receives the binary preference $y_t\in\{\pm1\}$ and augments the dataset with $(x_t, a_t^1, a_t^2, y_t)$.

\noindent\textbf{Feel-Good Thompson sampling.}
In our algorithm, the likelihood function is defined as
\begin{align*}
L^j(\btheta, x, a^1, a^2, y)&=\eta\sigma(y\langle\btheta, \bphi(x, a^1)-\bphi(x, a^2)\rangle)-\mu\max_{a'\in\cA}\langle\btheta, \bphi(x, a')-\bphi(x, a^{3-j})\rangle,
\end{align*}
where $\eta$ and $\mu$ are hyperparameters.
In the definition above, the first term can be treated as the log-likelihood function on the observation $(x, a^1, a^2, y)$; the second term encourages exploration of $\btheta$ with large reward in previous rounds, which is referred to as \textit{Feel-Good exploration} in the literature~\citep{zhang2022feel}.
Without the Feel-Good exploration term, i.e., when $\mu=0$, $L^j$ reduces to the likelihood function used in standard Thompson sampling algorithms.

\noindent\textbf{Comparison with FGTS for traditional contextual bandits.}
The differences between FGTS.CDB and existing FGTS algorithms for traditional contextual bandits~\citep{zhang2022feel} are twofold.
Firstly, due to the preferential feedback, the least-squares term in previous algorithms is naturally replaced with a term in the form of logistic regression.
The more important difference lies in the Feel-Good exploration terms.
In our Feel-Good exploration term, there is an additional inner product of the current model parameter $\btheta$ and the feature vector of the adversarial arm $\bphi(x, a^{3-j})$.
This additional term is a better design for the setting of contextual dueling bandits because the affecting factor of the observation $y_t$ is the difference between the rewards of two arms rather than the reward of a single arm.
Additionally, this term plays a crucial role in the proof, as we will show in Section \ref{section:proof_highlights}.

\noindent\textbf{Comparison with UCB-based algorithms.}
We highlight that besides the model learning technique, there is also a stark difference in the arm selection scheme between UCB-based algorithms (including MaxInP~\citep{saha2021optimal}, CoLSTIM~\citep{bengs2022stochastic}, VACDB~\citep{di2023variance}) and FGTS.CDB.
In UCB-based algorithms, arms are often selected based on a bonus term in the form of $\|\bphi(x_t, a_t^1)-\bphi(x_t, a_t^2)\|_{\bSigma^{-1}}$, for some positive definite matrix $\bSigma$, which encourages the selection of more separated arms.
Thus, the bonus term is essential for exploration in UCB-based algorithms, and results in the dependence of the selected arms.
In FGTS.UCB, however, the arms $a_t^1$ and $a_t^2$ are just maximizers of the learned reward function, and are independent conditioned on the history $S_{t-1}$.
This is possible because exploration is accomplished by Thompson sampling in our algorithm, and the bonus term becomes unnecessary.
In addition, the independence of arms $a_t^1$ and $a_t^2$ (conditioned on $S_{t-1}$) is a crucial property in our proof as we will show in Section \ref{section:proof_highlights}.

From the viewpoint of computational complexity, the arm selection scheme of FGTS.CDB is also superior to those of existing algorithms.
When the action space $\cA_t$ is infinite and continuous, the arm selection phase of FGTS.CDB can still be implemented by solving an optimization problem.
In contrast, MaxInP calculates a set of promising arms in each round, which causes additional computational overhead in the case of infinite action spaces.
CoLSTIM needs to take the maximum of randomly perturbed rewards corresponding to each contextual vector, which is infeasible when the number of arms is infinite.
Therefore, FGTS.CDB is the first algorithm that can be easily applied to the setting of infinite action spaces.%\footnote{There are two different cases of the infinite action space: 1. The action space provided at each time step, $\mathcal{A}_t$, is finite, but the entire action space $\mathcal{A}$ is infinite; 2. $\mathcal{A}_t$ is time-dependent, infinite and possibly adversarial. The second case is more general and challenging. MaxInP, CoLSTIM, and FGTS.CDB can all deal with the first case. CoLSTIM is not applicable to the second case due to the aforementioned reason. Both FGTS.CDB and MaxInP can be applied to the second case. However, MaxInP involves solving a constrained optimization problem which potentially makes the algorithm less efficient.}

\section{Main Results}\label{section:MainResults}

In this section, we present the regret bounds of Algorithm \ref{algorithm:FGTS.CDB}, which is minimax-optimal. We first introduce the following assumption about the prior distribution $p_0$:
\begin{assumption}\label{assumption:lipschitz}
The logarithm of the prior distribution is $L$-Lipschitz, i.e., for all $\btheta_1, \btheta_2\in\{\btheta:\|\btheta\|_2\le B\}$, we have
\begin{align*}
|\log p_0(\btheta_1)-\log p_0(\btheta_2)|\le L\|\btheta_1-\btheta_2\|_2.
\end{align*}
\end{assumption}
Assumption \ref{assumption:lipschitz} is satisfied for most commonly-used prior distributions, including the uniform distribution ($L=0$) and the Gaussian distribution $\cN(0, \sigma_0^2\Ib_d)$ restricted to the ball $\{\btheta:\|\btheta\|_2\le B\}$ ($L=B/\sigma_0^2$).

We now present the main theorem:
\begin{theorem}\label{theorem:main}
Under Assumptions \ref{assumption:linear} and \ref{assumption:lipschitz}, assume that the hyperparameters are selected as $\eta=0.25$ and $\mu=1/(10e^B\sqrt T)$, then the expectation of the regret of Algorithm \ref{algorithm:FGTS.CDB} satisfies
\[
\EE[\regret(T)]=\tilde\cO(d\sqrt T).
\]
\end{theorem}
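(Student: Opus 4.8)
The plan is to run the Feel-Good Thompson sampling argument of \citet{zhang2022feel} in the BTL/dueling setting, exploiting two features of FGTS.CDB: the new Feel-Good term (with its $-\bphi(x,a^{3-j})$ summand) and the conditional independence of $\btheta_t^1,\btheta_t^2$ given $S_{t-1}$. Throughout write $L_t^j(\btheta):=L^j(\btheta,x_t,a_t^1,a_t^2,y_t)=f_t(\btheta)-g_t(\btheta)$, where $f_t(\btheta)=\eta\sigma(y_t\langle\btheta,\bphi(x_t,a_t^1)-\bphi(x_t,a_t^2)\rangle)$ is the scaled negative log-likelihood of the BTL model at the realized data (so $\sigma(y\Delta)=-\log\PP_\btheta(y\mid\cdot)$), and $g_t(\btheta)=\mu\big(\max_{a'}\langle\btheta,\bphi(x_t,a')\rangle-\langle\btheta,\bphi(x_t,a_t^{3-j})\rangle\big)$ is the Feel-Good term; by Assumption~\ref{assumption:linear}, $|g_t|\le2B\mu$ and $e^{-f_t(\btheta)}\in[(1+e^{2B})^{-\eta},1]$, and I will treat $B=\cO(1)$ so the $e^{\cO(B)}$ factors (and $\mu^{-1}\asymp e^B\sqrt T$) are hidden inside $\tilde\cO(\cdot)$.

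\emph{Step 1 (aggregation bound).} Set $Z_t^j=\int\exp(-\sum_{i<t}L_i^j(\btheta))\,p_0(\btheta)\,\rd\btheta$, so $p_t^j\propto\exp(-\sum_{i<t}L_i^j)p_0$; telescoping $\log(Z_{t+1}^j/Z_t^j)=\log\EE_{\btheta\sim p_t^j}[e^{-L_t^j(\btheta)}]$ gives $-\log Z_{T+1}^j=-\sum_t\log\EE_{\btheta\sim p_t^j}[e^{-L_t^j(\btheta)}]$. Lower-bounding $Z_{T+1}^j$ by the integral over $\{\|\btheta-\btheta_*\|_2\le\rho\}$, using that $L_t^j$ is $(2\eta+2\mu)$-Lipschitz in $\btheta$ (as $\sigma$ and $\btheta\mapsto\max_{a'}\langle\btheta,\bphi(x_t,a')\rangle$ are $1$-Lipschitz and $\|\bphi\|_2\le1$), taking $\rho=1/T$, and using Assumption~\ref{assumption:lipschitz} for the estimate $\log(1/p_0(\{\|\btheta-\btheta_*\|_2\le\rho\}))=\tilde\cO(d)$, then summing over $j=1,2$ and taking expectations, I get
\[
\EE\bigg[\sum_{t=1}^T\sum_{j=1,2}\Big(-L_t^j(\btheta_*)-\log\EE_{\btheta\sim p_t^j}\big[e^{-L_t^j(\btheta)}\big]\Big)\bigg]\le\tilde\cO(d).
\]

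\emph{Step 2 (per-round bound, using the new Feel-Good term).} Fix $t,j$ and work conditionally on $S_{t-1}$. Using $e^{g_t}\le1+g_t+g_t^2$ (valid once $T$ is large) and $\log(1+x)\le x$, each summand is at least $\big(-f_t(\btheta_*)-\log\EE_{\btheta\sim p_t^j}[e^{-f_t(\btheta)}]\big)+\big(g_t(\btheta_*)-\EE^{f_t}_{\btheta\sim p_t^j}[g_t(\btheta)]\big)-\cO(\mu^2)$, where $\EE^{f_t}$ is the $e^{-f_t}$-reweighted expectation. Taking $\EE_{y_t}$ under the true BTL law and applying Jensen, the first bracket is $\ge c_B\,\EE_{\btheta\sim p_t^j}[\langle\btheta-\btheta_*,\bphi(x_t,a_t^1)-\bphi(x_t,a_t^2)\rangle^2]$ for a constant $c_B>0$ depending only on $B$ (the usual R\'enyi/Hellinger bound, valid for any $\eta\le1/2$). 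For the second bracket I would use $\max_{a'}\langle\btheta,\bphi(x_t,a')\rangle=\langle\btheta,\bphi(x_t,a_t^j(\btheta))\rangle$ (with $a_t^j(\btheta):=\argmax_a\langle\btheta,\bphi(x_t,a)\rangle$) and $\max_{a'}\langle\btheta_*,\bphi(x_t,a')\rangle=r_*(x_t,a_t^*)$, then take expectation over $\btheta_t^j$ and $\btheta_t^{3-j}$: since $\btheta_t^{3-j}$, hence $a_t^{3-j}$, is independent of the posterior-$j$ sample, every term containing $\bphi(x_t,a_t^{3-j})$ factors through $\EE[\bphi(x_t,a_t^{3-j})]$ so that no cross term between $\btheta_t^1$ and $\btheta_t^2$ is created, and a rearrangement gives
\begin{align*}
\EE\big[g_t(\btheta_*)-\EE^{f_t}_{\btheta\sim p_t^j}[g_t(\btheta)]\big]
&\ge\mu\,\EE\big[r_*(x_t,a_t^*)-r_*(x_t,a_t^j)\big]\\
&\quad-\mu\,\EE_{\btheta\sim p_t^j,\,\btheta_t^{3-j}}\big[\langle\btheta-\btheta_*,\bphi(x_t,a_t^j(\btheta))-\bphi(x_t,a_t^{3-j})\rangle\big]-\cO(\mu^2),
\end{align*}
the gap between $\EE^{f_t}$ and $\EE$ being absorbed (via Cauchy--Schwarz and the bounded oscillation of $e^{-f_t}$ on $\{\|\btheta\|_2\le B\}$) into a fraction of the Hellinger term plus an $\cO(\mu^2)$ remainder. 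The point: the residual prediction error appears in the \emph{difference} direction $\bphi(x_t,a_t^j(\btheta))-\bphi(x_t,a_t^{3-j})$, the same direction $\bphi(x_t,a_t^1)-\bphi(x_t,a_t^2)$ that the Hellinger term controls; a vanilla Feel-Good term $\mu\max_{a'}\langle\btheta,\bphi(x_t,a')\rangle$ would leave a term aligned with the single arm $\bphi(x_t,a_t^j)$, mismatched with the Hellinger geometry, and the decoupling below would fail.

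\emph{Step 3 (decoupling and conclusion).} Summing Step~2 over $t,j$, noting $\sum_{t,j}\EE[r_*(x_t,a_t^*)-r_*(x_t,a_t^j)]=2\,\EE[\regret(T)]$, and inserting into Step~1,
\[
2\mu\,\EE[\regret(T)]\ \le\ \tilde\cO(d)\ -\ c_B\!\sum_{t,j}\EE[V_t^j]\ +\ \mu\!\sum_{t,j}\EE[D_t^j]\ +\ \cO(\mu^2T),
\]
where $V_t^j=\EE_{\btheta_t^1,\btheta_t^2}\EE_{\btheta\sim p_t^j}[\langle\btheta-\btheta_*,\bphi(x_t,a_t^1)-\bphi(x_t,a_t^2)\rangle^2]$ is the (fully averaged) Hellinger term and $D_t^j=\EE_{\btheta\sim p_t^j,\btheta_t^{3-j}}[\langle\btheta-\btheta_*,\bphi(x_t,a_t^j(\btheta))-\bphi(x_t,a_t^{3-j})\rangle]$ is a \emph{coupled} prediction error (arm $a_t^j(\btheta)$ and functional share $\btheta$), whose \emph{decoupled} square is, up to constants, $V_t^j$. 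The linear decoupling lemma of \citet{zhang2022feel} (an elliptical-potential estimate with decoupling coefficient $\dc=\cO(d)$) yields $\sum_{t,j}\EE[D_t^j]\le\sqrt{\cO(d)\,T\sum_{t,j}\EE[V_t^j]}$, so by AM--GM $\mu\sum_{t,j}\EE[D_t^j]\le\frac{c_B}{2}\sum_{t,j}\EE[V_t^j]+\frac{\mu^2\cO(d)T}{2c_B}$. Since $\mu=\Theta(1/\sqrt T)$ makes $\mu^2T=\Theta(1)$, the last remainder and $\cO(\mu^2T)$ are $\tilde\cO(d)$ and the $\frac{c_B}{2}$ term is absorbed into $-c_B\sum_{t,j}\EE[V_t^j]$, so $2\mu\,\EE[\regret(T)]\le\tilde\cO(d)$, i.e.\ $\EE[\regret(T)]\le\tilde\cO(d)/\mu=\tilde\cO(d\sqrt T)$. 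The hard part will be Step~2 together with the decoupling in Step~3: getting the per-round bound with the right R\'enyi/Hellinger constants and, crucially, absorbing the likelihood-reweighting discrepancy without losing a factor of $T$ (which needs the BTL likelihood bounded away from $0$ and $1$ on $\{\|\btheta\|_2\le B\}$), while arranging — through the $-\bphi(x,a^{3-j})$ in $L^j$ and the conditional independence of $\btheta_t^1,\btheta_t^2$ — for the residual to be exactly a coupled prediction error in the difference direction that the $\cO(d)$ decoupling lemma trades for the Hellinger term already paid for in Step~1.
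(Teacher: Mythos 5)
Your proposal is correct and follows essentially the same route as the paper: the same regret decomposition into a Bellman-error term and the modified Feel-Good term (exploiting the $-\bphi(x,a^{3-j})$ summand and the conditional independence of $\btheta_t^1,\btheta_t^2$ given $S_{t-1}$), the same aggregation/potential bound $-Z_T^j=\tilde\cO(d)$, the same decoupling lemma with coefficient $\cO(d)$, and the same choice $\eta=\Theta(1)$, $\mu=\Theta(1/\sqrt T)$. The only differences are bookkeeping: the paper separates the Feel-Good and least-squares contributions inside the per-round log-moment via Cauchy--Schwarz plus Hoeffding's lemma (rather than your $e^{-f_t}$-reweighted expectation, whose discrepancy you absorb by the same AM--GM trick), and it invokes the decoupling inequality per round with a fixed $\lambda$ rather than after summing and optimizing.
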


The following theorem provides a regret lower bound for contextual dueling bandits:
\begin{theorem}\label{theorem:lower_bound}
Under Assumption~\ref{assumption:linear}, assume that for all $t\in[T]$, we have $\{\bphi(x_t, a):a\in\cA_t\}=\{\ub:\|\ub\|_2\le1\}$. We also assume that $T\ge\max\{72B^{-2}d^2, d\}$. Then for any algorithm for linear contextual dueling bandits, there exists $\btheta_*$ such that the expectation of the regret satisfies
\begin{align*}
\EE[\regret(T)]=\Omega(d\sqrt T).
\end{align*}
\end{theorem}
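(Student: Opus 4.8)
The plan is to prove the matching lower bound by a standard information-theoretic (Assouad-type) argument over a hypercube of hard instances, adapted to the logistic BTL feedback and to the fact that the achievable feature set is the whole unit ball. Fix a scale $\alpha$ (to be specified; morally $\alpha=\Theta(\sqrt{d/T})$) and, for each sign pattern $\xi\in\{\pm1\}^d$, set $\btheta_\xi=\alpha\xi$; the hypotheses on $T$ will guarantee $\|\btheta_\xi\|_2=\alpha\sqrt d\le B$, so each $\btheta_\xi$ is admissible under Assumption~\ref{assumption:linear}. Since at every round the features $\{\bphi(x_t,a):a\in\cA_t\}$ fill the entire unit ball, under instance $\btheta_\xi$ the optimal response $a_t^*$ has feature $\bphi(x_t,a_t^*)=\xi/\sqrt d$ and value $r_*(x_t,a_t^*)=\alpha\sqrt d$, regardless of $x_t$. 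It suffices to show $\frac1{2^d}\sum_{\xi}\EE_{\btheta_\xi}[\regret(T)]=\Omega(d\sqrt T)$, since then some $\btheta_*$ achieves this.

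The first step is a per-round regret lower bound valid for \emph{any} played feature $\ub$ in the unit ball: since $r_*(x_t,a_t^*)-\langle\btheta_\xi,\ub\rangle=\alpha\sqrt d-\alpha\langle\xi,\ub\rangle=\tfrac{\alpha\sqrt d}{2}\|\ub-\xi/\sqrt d\|_2^2+\tfrac{\alpha\sqrt d}{2}(1-\|\ub\|_2^2)\ge\tfrac{\alpha\sqrt d}{2}\|\ub-\xi/\sqrt d\|_2^2$, and every coordinate $i$ with $\mathrm{sign}(u_i)\ne\xi_i$ contributes at least $1/d$ to $\|\ub-\xi/\sqrt d\|_2^2$, the instantaneous regret of round $t$ is at least $\tfrac{\alpha}{4\sqrt d}$ times $\sum_{j=1,2}|\{i:\mathrm{sign}(u_{t,i}^j)\ne\xi_i\}|$, where $u_{t,i}^j$ is the $i$-th coordinate of $\bphi(x_t,a_t^j)$. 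Hence $\regret(T)\ge\tfrac{\alpha}{4\sqrt d}\sum_{i=1}^d\sum_{t=1}^T\sum_{j=1,2}\mathbf{1}\{\mathrm{sign}(u_{t,i}^j)\ne\xi_i\}$. Each $\mathrm{sign}(u_{t,i}^j)$ is a (possibly randomized) function of the history $S_{t-1}$, so the Assouad pairing of $\xi$ with its $i$-th flip $\xi^{(i)}$ gives, for every $i,t,j$, $\frac1{2^d}\sum_\xi\PP_{\btheta_\xi}[\mathrm{sign}(u_{t,i}^j)\ne\xi_i]\ge\frac12(1-\overline{\mathrm{TV}}_i)$, where $\overline{\mathrm{TV}}_i$ is the average over $\xi_{-i}$ of the total-variation distance between the laws of $S_T$ under $\btheta_\xi$ and $\btheta_{\xi^{(i)}}$ (using monotonicity of TV under coarsening $S_T\mapsto S_{t-1}$). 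Summing, $\frac1{2^d}\sum_\xi\EE_{\btheta_\xi}[\regret(T)]\ge\frac{\alpha T}{4\sqrt d}\bigl(d-\sum_{i=1}^d\overline{\mathrm{TV}}_i\bigr)$.

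The key step is to bound $\sum_i\overline{\mathrm{TV}}_i$. By Pinsker and the chain rule for KL, $\overline{\mathrm{TV}}_i^2$ is at most $\tfrac12\sum_{t=1}^T\EE[\DKL(\Ber(p_{t,\btheta_\xi})\,\|\,\Ber(p_{t,\btheta_{\xi^{(i)}}}))]$, where $p_{t,\btheta}=\exp(-\sigma(\langle\btheta,\bphi(x_t,a_t^1)-\bphi(x_t,a_t^2)\rangle))$ is the BTL preference probability, whose logit is $\langle\btheta,\bphi(x_t,a_t^1)-\bphi(x_t,a_t^2)\rangle$. Because $|\langle\btheta_\xi,\bphi(x_t,a_t^1)-\bphi(x_t,a_t^2)\rangle|\le2\|\btheta_\xi\|_2\le2B$, the logistic link is non-degenerate, so this KL is at most a constant $\kappa_B$ (depending only on $B$) times the squared difference of the two logits, which equals $\bigl(2\alpha\xi_i(u_{t,i}^1-u_{t,i}^2)\bigr)^2$. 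The crucial observation—this is exactly what separates a $d\sqrt T$ bound from a $\sqrt{dT}$ bound—is that the per-round information budget is shared across coordinates: $\sum_{i=1}^d(u_{t,i}^1-u_{t,i}^2)^2=\|\bphi(x_t,a_t^1)-\bphi(x_t,a_t^2)\|_2^2\le4$, so $\sum_{i=1}^d\sum_{t=1}^T(u_{t,i}^1-u_{t,i}^2)^2\le4T$. Therefore, by Cauchy--Schwarz, $\sum_{i=1}^d\overline{\mathrm{TV}}_i\le\sqrt{d\sum_i\overline{\mathrm{TV}}_i^2}\le C\alpha\sqrt{\kappa_B dT}$. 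Plugging back, $\frac1{2^d}\sum_\xi\EE_{\btheta_\xi}[\regret(T)]\ge\frac{\alpha T}{4\sqrt d}\bigl(d-C\alpha\sqrt{\kappa_B dT}\bigr)$; choosing $\alpha=c\sqrt{d/T}$ with $c$ a small enough absolute constant—this is where the hypotheses $T\ge72B^{-2}d^2$ and $T\ge d$ enter, to ensure simultaneously $\|\btheta_\xi\|_2\le B$ and that the bracket is at least $d/2$—yields $\frac1{2^d}\sum_\xi\EE_{\btheta_\xi}[\regret(T)]\ge\frac{\alpha\sqrt d\,T}{8}=\Omega(d\sqrt T)$.

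I expect the main obstacle to be precisely this shared-budget step. A naive per-coordinate KL bound that uses only $|u_{t,i}^1-u_{t,i}^2|\le2$ caps each $\overline{\mathrm{TV}}_i$ at order $\alpha\sqrt{\kappa_B T}$, which forces $\alpha=O(1/\sqrt T)$ and hence gives only $\Omega(\sqrt{dT})$; one must exploit that $a_t^1,a_t^2$ have features in the unit ball, so that $\sum_i(u_{t,i}^1-u_{t,i}^2)^2$ is bounded by an absolute constant rather than by $d$—the bandit analogue of the elliptical-potential phenomenon. A secondary point is that the algorithm may play arbitrary features in the ball rather than the ``corners'' $\xi/\sqrt d$; this is handled by the per-round regret bound above, which lower-bounds the regret of any ball action by its number of sign-mismatched coordinates (a feature with large ``spread-out'' mass simply incurs the $\Omega(\alpha\sqrt d)$-per-round regret of playing near the origin).
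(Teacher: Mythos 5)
Your proposal is correct, and it reaches the $\Omega(d\sqrt T)$ bound by a route that differs in its mechanics from the paper's, even though the information-theoretic core is shared. Both proofs put the hard instances on a scaled hypercube $\{\pm\Delta\}^d$ with $\Delta=\Theta(\sqrt{d/T})$ (using $T\ge 72B^{-2}d^2$ for admissibility), pair instances that flip one coordinate, and control the distinguishability by Pinsker plus a chain-rule decomposition in which the per-round Bernoulli KL is bounded by the squared gap of the logits $\langle\btheta-\btheta',\bphi(x_t,a_t^1)-\bphi(x_t,a_t^2)\rangle$ (the paper's Lemma \ref{lemma:DKL} gives the absolute constant $1/8$, so your $\kappa_B$ is not actually needed). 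Where you diverge is in the reduction from regret to testing and in how the shared information budget is enforced: you lower-bound the per-round regret by counting sign-mismatched coordinates of the played features and run a classical Assouad argument, then handle the budget-sharing by a single Cauchy--Schwarz over coordinates using $\sum_i(u_{t,i}^1-u_{t,i}^2)^2=\|\bphi(x_t,a_t^1)-\bphi(x_t,a_t^2)\|_2^2\le4$; the paper instead follows the unit-ball template of \citet{lattimore2020bandit}, introducing per-coordinate stopping times $\tau_i$ that cap $\sum_{t\le\tau_i}[(\phi_{ti}^1)^2+(\phi_{ti}^2)^2]$ at $2T/d$, working with the quadratic surrogates $U_{\btheta,i}$, and converting regret to these surrogates via the pointwise inequality $\sum_i(1/\sqrt d-\phi_{ti}^j\sign(\theta_i))\ge\tfrac{\sqrt d}2\sum_i(1/\sqrt d-\phi_{ti}^j\sign(\theta_i))^2$. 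Your identification of the shared-budget step as the crux is exactly right, and your Cauchy--Schwarz treatment buys a somewhat leaner argument: no stopping times, no bookkeeping of the overshoot terms, and no real use of the hypothesis $T\ge d$ (which the paper needs to absorb additive constants created by $\tau_i$). What the paper's stopping-time route buys is direct per-instance, per-coordinate control (the bound \eqref{eq:sum_U} holds for a single worst-case $\btheta$ before the regret conversion), hewing closely to an established proof pattern; your bound is obtained after averaging over the hypercube, which is equally sufficient for the existential claim in the theorem.
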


\begin{remark}
Theorem~\ref{theorem:lower_bound} shows that the regret lower bound of any algorithm for linear contextual dueling bandits is $\Omega(d\sqrt T)$.
Note that Theorem 3.1 in~\citet{bengs2022stochastic} also provides a regret lower bound of algorithms for linear contextual dueling bandits, but their lower bound is looser than ours by a factor of $\sqrt d$.
They applied the analysis for the case of bounded $\ell_\infty$-norm to the case of bounded $\ell_2$-norm, which yields loose inequalities~\citep{lattimore2020bandit}.
%According to Theorem \ref{theorem:main}, our algorithm matches the worst-case regret bound of $\Omega(d\sqrt T)$ when the action space is at least exponentially large~\citep{bengs2022stochastic}.
Combining Theorem~\ref{theorem:main} and Theorem~\ref{theorem:lower_bound}, we conclude that the regret bound of FGTS.CDB matches the worst-case regret.
Our regret bound also matches that of UCB-based algorithms including MaxInP~\citep{saha2021optimal} and CoLSTIM~\citep{bengs2022stochastic}.
More recently, \citet{di2023variance} proposed an algorithm that has a variance-dependent regret bound, and it is an interesting future direction to design a variance-aware sampling-based algorithms for contextual dueling bandits.
\end{remark}
\begin{remark}
Some other works assume that the size of the action space $K$ is finite and derive algorithms with the regret bound $\tilde\cO(\sqrt{dT\log K})$, including Sta'D~\citep{saha2021optimal} and SupCoLSTIM~\citep{bengs2022stochastic}.
We first note that the regret bound of $\tilde\cO(\sqrt{dT\log K})$ is not a contradiction against Theorem~\ref{theorem:main} due to the assumption of a total of $K$ arms.
More specifically, the proof of Theorem~\ref{theorem:lower_bound} involves contructing $\{\bphi(x_t, a):a\in\cA_t\}=\{\ub:\|\ub\|_2\le1\}$, so $K=2^d$, and $\tilde\cO(\sqrt{dT\log K})=\tilde\cO(d\sqrt T)$.
When $K$ is exponential in the model dimensionality $d$, which is more often the case in the setting of contextual dueling bandits, the regret bound of FGTS.CDB is at least as good as these algorithms.
In addition, our algorithm is more computationally efficient because it does not need to perform arm elimination or to apply random pertubations to each arm.
\end{remark}

\section{Extension to Nonlinear Reward}\label{section:nonlinear_main}

In this section, we relax the assumption of linear reward functions.
Instead, we make the following assumption about the reward function class:
\begin{assumption}\label{assumption:bounded}
The parameter space $\Theta$ is a measurable space with measure $\bar\mu$ and metric $d$. The model is well-specified, i.e., $\theta_*\in\Theta$. The reward function is uniformly bounded by $B$ and is $L_0$-Lipschitz in $\theta$.
\end{assumption}
We define a shorthand notation
\[
\Delta r_\theta(x, a^1, a^2)\coloneqq r_\theta(x, a^1)-r_\theta(x, a^2).
\]
In order to characterize the complexity of the reward function class, similar to~\citet{zhang2022feel}, we define the decoupling coefficient $\dc$ to be such that for any $\lambda>0$ and any joint distribution $P$ over $\Theta\times\cA\times\cA$, we have
\begin{align*}
&\EE_{(\theta, a^1, a^2)\sim P}[\Delta r_{\theta}(x, a^1, a^2)-\Delta r_{\theta_*}(x, a^1, a^2)]\\
&\le\lambda\dc+\frac{1}{4\lambda}\EE_{\tilde\theta\sim P|_\theta}\EE_{(a^1, a^2)\sim P|_{(a^1, a^2)}}[\Delta r_{\tilde\theta}(x, a^1, a^2)-\Delta r_{\theta_*}(x, a^1, a^2)]^2,
\end{align*}
where $P|_\theta$ and $P|_{(a^1, a^2)}$ are the marginal distributions of $P$.
We have shown in Lemma \ref{lemma:decoupling} that in the setting of linear reward functions, the decoupling coefficient is $d$.
Furthermore, if the size of the action space is $K$, then the decoupling coefficient is bounded by $K(K-1)/2=\cO(K^2)$, shown by Lemma 2 in~\citet{zhang2022feel}.

We now introduce modifications to FGTS.CDB for the nonlinear reward.
In accordance with the change of the reward function's structure, the arm selection scheme and the likelihood function are changed to
\begin{align*}
a_j^t&=\max_{a\in\cA_t}r_{\theta_t^j}(x_t, a),\\
L^j(\theta, x, a^1, a^2, y)&=\eta\sigma(y\Delta r_\theta(x, a^1, a^2))-\mu\max_{a'\in\cA}\Delta r_\theta(x, a', a^{3-j}).
\end{align*}
The following theorem characterizes the regret bound of FGTS.CDB with the aforementioned modifications:
\begin{theorem}\label{theorem:nonlinear}
Suppose that Assumptions \ref{assumption:lipschitz} and \ref{assumption:bounded} hold, and the hyperparameters are chosen as $\eta=0.25$ and
\[
\mu=\frac{e^{-B}}{5}\sqrt{\frac{-\log (p_0(\theta_*)\bar\mu(\{\theta:d(\theta, \theta_*)\le2/(L_0T)\})}{T\dc}}.
\]
Then the regret of FGTS.CDB satisfies
\begin{align*}
&\EE[\regret(T)]=\cO(\sqrt{-\log(p_0(\theta_*)\bar\mu(\{\theta:d(\theta, \theta_*)\le2/(L_0T)\})\cdot T\dc}).
\end{align*}
\end{theorem}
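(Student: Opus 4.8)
The plan is to reproduce the proof of Theorem~\ref{theorem:main}, changing only two ingredients: the linear dimension $d$ that arises there as the decoupling coefficient (Lemma~\ref{lemma:decoupling}) is replaced by the general $\dc$, and the $d\log(BT)$ ``covering'' term is replaced by $\kappa:=-\log\bigl(p_0(\theta_*)\,\bar\mu(\{\theta:d(\theta,\theta_*)\le\delta\})\bigr)$ with $\delta:=2/(L_0T)$. Write $\FG_t^j(\theta):=\max_{a'}\Delta r_\theta(x_t,a',a_t^{3-j})$ and $Z_{T+1}^j:=\int\exp\!\bigl(-\sum_{t\le T}L^j(\theta,x_t,a_t^1,a_t^2,y_t)\bigr)p_0(\theta)\,\rd\bar\mu(\theta)$, so that telescoping from $Z_1^j=1$ gives $\log Z_{T+1}^j=\sum_t\log\EE_{\theta\sim p^j(\cdot\mid S_{t-1})}[e^{-L^j(\theta,x_t,a_t^1,a_t^2,y_t)}]$. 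First I would lower bound $Z_{T+1}^j$ by keeping only the part of the integral over $\{d(\theta,\theta_*)\le\delta\}$: since $\Delta r_\theta$ is $2L_0$-Lipschitz in $\theta$ (Assumption~\ref{assumption:bounded}), each $L^j(\cdot,z)$ is $2(\eta+\mu)L_0$-Lipschitz, and with Assumption~\ref{assumption:lipschitz} for $\log p_0$ this yields $-\log Z_{T+1}^j\le\sum_t L^j(\theta_*,x_t,a_t^1,a_t^2,y_t)+\kappa+\cO(1)$, the $\cO(1)$ being $T\cdot 2(\eta+\mu)L_0\delta+L\delta=4(\eta+\mu)+\cO(1/T)$ --- this is exactly why $\delta=2/(L_0T)$. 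Second, since $\max_{a'}\Delta r_{\theta_*}(x_t,a',a_t^{3-j})=r_*(x_t,a_t^*)-r_*(x_t,a_t^{3-j})$, summing over $j\in\{1,2\}$ gives the \emph{Feel-Good identity} $\sum_j\sum_t\FG_t^j(\theta_*)=2\,\regret(T)$, hence $\sum_j\sum_t L^j(\theta_*,\cdot)=2\eta\sum_t\sigma\bigl(y_t\Delta r_{\theta_*}(x_t,a_t^1,a_t^2)\bigr)-2\mu\,\regret(T)$. Combining these, folding each $\eta\sigma(y_t\Delta r_{\theta_*})$ into the corresponding logarithm, and taking expectations gives
\[
2\mu\,\EE[\regret(T)]\le\sum_{j}\sum_t\EE\Bigl[\log\EE_{\theta\sim p^j}\exp\!\bigl(\eta(\sigma(y_t\Delta r_{\theta_*})-\sigma(y_t\Delta r_\theta))+\mu\FG_t^j(\theta)\bigr)\Bigr]+2\kappa+\cO(1),
\]
where $p^j$ abbreviates $p^j(\cdot\mid S_{t-1})$ and all $\Delta r$'s inside are at $(x_t,a_t^1,a_t^2)$.

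To bound one summand, condition on $S_{t-1}$ and the selected arms; Jensen's inequality in $y_t$ and Fubini move $\EE_{\theta\sim p^j}$ outside (note $\FG_t^j(\theta)$ is $y_t$-free), leaving the factor $\EE_{y_t\sim\Ber(e^{-\sigma(z_*)})}[e^{\eta(\sigma(y_tz_*)-\sigma(y_tz))}]$ with $z_*=\Delta r_{\theta_*}(x_t,a_t^1,a_t^2)$, $z=\Delta r_\theta(x_t,a_t^1,a_t^2)$. A direct computation evaluates this factor as $p_*^{1-\eta}q^{\eta}+(1-p_*)^{1-\eta}(1-q)^{\eta}$ with $p_*=1/(1+e^{-z_*})$, $q=1/(1+e^{-z})$, and a Hellinger-type estimate shows it is at most $e^{-c(z-z_*)^2}$ for $|z_*|,|z|\le 2B$ and $\eta=\tfrac14$, with $c=c(B)>0$. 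Using $\FG_t^j(\theta)\in[0,2B]$ and smallness of $\mu$ to write $e^{\mu\FG_t^j(\theta)}\le 1+\mu\FG_t^j(\theta)+\cO(\mu^2B^2)$, $e^{-c(z-z_*)^2}\le 1-c'(z-z_*)^2$ on the bounded range by convexity, and $\log x\le x-1$, I get, per round and per $j$ (after taking expectations),
\[
\EE\Bigl[\log\EE_{\theta\sim p^j}\exp\!\bigl(\eta(\sigma(y_t\Delta r_{\theta_*})-\sigma(y_t\Delta r_\theta))+\mu\FG_t^j(\theta)\bigr)\Bigr]\le\mu\,\EE\bigl[\EE_{\theta\sim p^j}\FG_t^j(\theta)\bigr]+\cO(\mu^2B^2)-c''\,\EE\bigl[\EE_{\theta\sim p^j}(\Delta r_\theta-\Delta r_{\theta_*})^2(x_t,a_t^1,a_t^2)\bigr],
\]
with $c''=c''(B)>0$.

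The third step is the reindexing the new Feel-Good term is built for. Since $\theta_t^1,\theta_t^2$ are sampled independently given $S_{t-1}$ and $\FG_t^j$ depends on $\theta_t^{3-j}$ only through $a_t^{3-j}$, one has $\EE[\EE_{\theta\sim p^j}\FG_t^j(\theta)]=\EE[\FG_t^j(\theta_t^j)]$; and as $a_t^j$ maximizes $r_{\theta_t^j}(x_t,\cdot)$, $\FG_t^j(\theta_t^j)=\Delta r_{\theta_t^j}(x_t,a_t^j,a_t^{3-j})$. The algebraic identity $\FG_t^j(\theta_*)-\FG_t^j(\theta_t^j)=[r_*(x_t,a_t^*)-r_*(x_t,a_t^j)]-(\Delta r_{\theta_t^j}-\Delta r_{\theta_*})(x_t,a_t^j,a_t^{3-j})$, summed over $j$ together with the Feel-Good identity, then gives the \emph{exact} equality $\sum_j\EE[\EE_{\theta\sim p^j}\FG_t^j(\theta)]=\sum_j\EE[(\Delta r_{\theta_t^j}-\Delta r_{\theta_*})(x_t,a_t^j,a_t^{3-j})]$, and $(a_t^j,a_t^{3-j})$ is $(a_t^1,a_t^2)$ up to order --- the same pair carried by the negative $c''$-term. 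Applying the decoupling coefficient to the joint law of $(\theta_t^j,a_t^j,a_t^{3-j})$ gives, for any $\lambda>0$, $\EE[(\Delta r_{\theta_t^j}-\Delta r_{\theta_*})(x_t,a_t^j,a_t^{3-j})]\le\lambda\dc+\tfrac1{4\lambda}\EE[\EE_{\theta\sim p^j}(\Delta r_\theta-\Delta r_{\theta_*})^2(x_t,a_t^1,a_t^2)]$, whose squared term coincides with the fresh-sample quantity above (since $P|_\theta=p^j$ and $P|_{(a^1,a^2)}$ is the arm marginal). Assembling the three steps, choosing $\lambda=\mu/(4c'')$ so that all estimation-error terms cancel, and summing over $t$ leaves $\EE[\regret(T)]\le\frac{\mu T\dc}{4c''}+\cO(\mu B^2T)+\frac{\kappa+\cO(1)}{\mu}$; the choice $\mu=\frac{e^{-B}}{5}\sqrt{\kappa/(T\dc)}$ from the theorem balances the first and last terms and yields $\EE[\regret(T)]=\cO(\sqrt{\kappa\cdot T\dc})$.

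I expect the reindexing in the third step to be the main obstacle: one must convert $2\,\regret(T)=\sum_j\sum_t\FG_t^j(\theta_*)$ into an estimation-error functional supported on exactly the pair $(a_t^1,a_t^2)$ compared in round $t$, so that the squared term produced by the decoupling coefficient matches, term for term, the negative quantity coming from the logistic-loss MGF bound. This matching is precisely what the adversarial arm inside the Feel-Good term provides: with the single-arm alternative $\max_{a'}r_\theta(x_t,a')$, $\sum_j\FG_t^j(\theta_*)$ would only be $2r_*(x_t,a_t^*)$, and the leftover would be a cross term coupling the two independently sampled models $\theta_t^1,\theta_t^2$, with no reason to be controllable. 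The remaining pieces --- the $\delta$-ball perturbation bound via Assumptions~\ref{assumption:lipschitz}--\ref{assumption:bounded}, the Hellinger-type MGF inequality for the tempered logistic loss, and the optimization over $\mu$ --- are routine adaptations of the corresponding steps in the proof of Theorem~\ref{theorem:main}.
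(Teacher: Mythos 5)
Your proposal is correct and follows essentially the same route as the paper's proof: a potential (partition-function) argument with the tempered logistic likelihood bounded via a Hellinger/strong-convexity estimate on $\sigma$ (the paper's Lemma \ref{lemma:sigma_second_diff}), the conditional independence of $\btheta_t^j$ and $a_t^{3-j}$ to decouple the Feel-Good term, the decoupling coefficient applied to the Bellman-error term, and a prior-mass/Lipschitz lower bound on the log-partition function at radius $2/(L_0T)$ (the paper's Lemmas \ref{lemma:bound_LS-FG} and \ref{lemma:ZT_bound_nonlinear}). The differences are only organizational — you carry $L^j(\theta_*)$ explicitly through a ``Feel-Good identity'' instead of working with $\Delta L^j$, and use first-order expansions in place of the paper's Cauchy--Schwarz plus Hoeffding step — and do not change the substance of the argument.
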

\begin{remark}
Theorem \ref{theorem:nonlinear} can be reduced to Theorem \ref{theorem:main} by noting that $-\log(p_0(\theta_*)\bar\mu(\{\theta:d(\theta, \theta_*)\le2/(L_0T)\}))=\tilde\cO(d)$, and $\dc=\cO(d)$.
Furthermore, if we assume that the size of the action space is bounded by $K$, then the regret bound is $\tilde\cO(K\sqrt{dT})$.
This regret bound is minimax-optimal if $K$ is treated as a constant, although the regret bounds of existing algorithms, e.g.,  Sta'D~\citep{saha2021optimal} and SupCoLSTIM~\citep{bengs2022stochastic}, scale with $\sqrt{\log K}$.
\end{remark}
\begin{remark}
Another case of interest is that the reward function has a linear structure, but $\Theta$ is a finite set of size $N$. In this case, if we choose the prior $p_0$ to be the uniform distribution on $\Theta$, then $-\log p_0(\theta_*)=\log N$, and $-\log(\bar\mu(\{\theta:d(\theta, \theta_*)\le2/L_0T\}))=0$ when $T$ is large enough. Therefore, the regret bound is $\cO(\sqrt{dT\log N})$.
\end{remark}

%\todoq{add a finite-arm corollary.}

%\section{Experiments}

%\noindent\textbf{Experiment Design.}
%In our experiments, we assume that the context $x_t$ is fixed, so the feature vector $\bphi(x, a)$ reduces to a vector $\ab\in\RR^d$ representing the arm of the bandit.
%We consider an action space $\{\pm1/\sqrt d\}^d$ whose size is exponential in $d$.
%The ground truth model parameter $\btheta_*$ is generated at random and normalized to be a unit vector.
%We perform experiments with $d=$~\textcolor{blue}{[TBD]}.

%\noindent\textbf{Algorithm Implementation.}
%Similar to \citet{xu2022langevin} and \citet{fan2023the}, we use the Langevin Monte Carlo (LMC, \citet{roberts1996exponential, bakry2014analysis}) to sample from the posterior distribution \eqref{eq:def_posterior}.
%For each round $t$ and $j=1, 2$, we run the following iteration for $s=1, \dots, K_t$:
%\begin{align}\label{eq:LMC_iter}\begin{aligned}
%\btheta_{t, s}^j&=\btheta_{t, s-1}^j-\eta_{t, s}\sum_{i=1}^{t-1}\nabla L^j(\btheta_{t, s-1}^j, x_i, a_i^1, a_i^2, y_i)\\
%&~+\eta_{t, s}\nabla\log p_0(\btheta_{t, s-1}^j)+\sqrt{2\eta_{t, s}}\bepsilon_{t, s}^j,
%\end{aligned}\end{align}
%where $\bepsilon_{t, s}^j$ follows the standard Gaussian distribution in $\RR^d$, and $\eta_{t, s}$ are stepsizes.
%In our experiments, we pick the prior distribution $p_0$ to be the Gaussian distribution $\cN(\zero, \sigma_0^2\Ib_d)$, i.e. $p_0(\btheta)\propto\exp(-\|\btheta\|_2^2/(2\sigma_0^2))$ and $\nabla\log p_0=-\btheta/\sigma_0^2$.

\section{Overview of Proof}\label{section:proof_highlights}

In this section, we present the key techniques in the proof of Theorem \ref{theorem:main}, and details are given in Appendix \ref{section:proof_linear}.
The proof of Theorem \ref{theorem:nonlinear} is similar and is given in Appendix \ref{section:proof_nonlinear}.
In Subsection \ref{subsection:RegretDecomp}, we first introduce a special regret decomposition scheme corresponding to our algorithm and discuss the advantage of our Feel-Good exploration term.
Then, in Subsection \ref{subsection:Potential}, we get into details of the analysis based on the difference of potentials between steps.

\subsection{Regret Decomposition}\label{subsection:RegretDecomp}

The proof for standard Thompson sampling~\citep{zhang2022feel} performs the following regret decomposition:
\begin{align*}
r_*(x_t, a_t^*)-r_*(x_t, a_t)=\underbrace{[r_\theta(x_t, a_t)-r_*(x_t, a_t)]}_{\text{Bellman Error}}-\underbrace{[\max_ar_\theta(x_t, a)-r_*(x_t, a_t^*)]}_{\text{Feel-Good Exploration}},
\end{align*}
where the Bellman Error term refers to the estimation error of $\theta$ evaluated on historic data, and the Feel-Good exploration term refers to the difference between the maximum reward corresponding to $\theta$ and $\theta_*$.
The Bellman Error term can be bounded using the decoupling technique which converts the joint expectation of the model sampling and trajectory into independent expectations.
The Feel-Good Exploration term adopts a crucial structure that does not explicitly contain $a_t$, so the decoupling trivially applies to the Feel-Good Exploration term.
However, the following two challenges arise when studying contextual dueling bandits due to the different definition of the regret: 1. How we can perform the regret decomposition, and 2. whether the Feel-Good Exploration term can also be decoupled.

\noindent\textbf{Challenge 1: Regret decomposition.}
The starting point of deriving a new regret decomposition is the Bellman Error term, which  should correspond to the estimation error on historic data.
As the likelihood in the posterior distribution in~\eqref{eq:def_posterior} is the inner product between $\btheta$ and the difference between the arms, the Bellman Error term is
\begin{align*}
\BE_t^j&=\langle\btheta_t^j-\btheta_*, \bphi(x_t, a_t^j)-\bphi(x_t, a_t^{3-j})\rangle.
\end{align*}
The remaining part of the regret is the Feel-Good Exploration term, which is
\begin{align*}
&\frac{\BE_t^1+\BE_t^2}{2}-\bigg[r_*(x_t, a_t^*)-\frac{r_*(x_t, a_t^1)+r_*(x_t, a_t^2)}{2}\bigg]\\
&=\frac12\Big[\max_{a\in\cA_t}\langle\btheta_t^1, \bphi(x_t, a)-\bphi(x_t, a_t^2)\rangle-\langle\btheta_*, \bphi(x_t, a_t^*)-\bphi(x_t, a_t^2)\rangle\Big]\\
&\qquad+\frac12\Big[\max_{a\in\cA_t}\langle\btheta_t^2, \bphi(x_t, a)-\bphi(x_t, a_t^1)\rangle-\langle\btheta_*, \bphi(x_t, a_t^*)-\bphi(x_t, a_t^1)\rangle\Big].
\end{align*}
Therefore, we set the Feel-Good Exploration term to be
\begin{align*}
\FG_t^j(\btheta)&=\max_{a\in\cA_t}\langle\btheta, \bphi(x_t, a)-\bphi(x_t, a_t^{3-j})\rangle-\langle\btheta_*, \bphi(x_t, a_t^*)-\bphi(x_t, a_t^{3-j})\rangle,
\end{align*}
and the regret decomposition is
\begin{align*}
&r_*(x_t, a_t^*)-\frac{r_*(x_t, a_t^1)+r_*(x_t, a_t^2)}2=\frac12[\BE_t^1+\BE_t^2-\FG_t^1(\btheta_t^1)-\FG_t^2(\btheta_t^2)],
\end{align*}
The regret decomposition inspires the design of the Feel-Good exploration term in the posterior distribution.
Without the additional term $\langle\btheta_t^j, \bphi(x_t, a_t^{3-j})\rangle$ in the likelihood function, the decomposition would contain additional cross terms $\langle\btheta_t^j-\btheta_*, \bphi(x_t, a_t^{3-j})\rangle$ that are hard to analyze.

\noindent\textbf{Challenge 2: Decoupling for both the Bellman Error and the Feel-Good Exploration term.}
We bound $\BE_t^j$ using a decoupling argument that is standard in the literature \citep{zhang2022feel, fan2023the}:
\begin{align}\label{eq:decoupling_main}
\EE[\BE_t^j]\le d\lambda+\frac1{4\lambda}\EE_{S_{t-1}, x_t, a_t^1, a_t^2}\Big[\EE_{\tilde\btheta\sim p^j(\cdot|S_{t-1})}\LS_t(\tilde\btheta)\Big],
\end{align}
where the inequality holds for any constant $\lambda>0$, and the least square term is defined as
\[
\LS_t(\btheta)=\langle\btheta-\btheta_*, \bphi(x_t, a_t^1)-\bphi(x_t, a_t^2)\rangle^2.
\]
Details of this inequality are given in Lemma \ref{lemma:decoupling}.

However, additional challenges arise in the analysis of the Feel-Good exploration term $\FG_t^j(\btheta_t^j)$ because it is different from its counterpart in standard contextual bandits, and the property that the Feel-Good Exploration term does not explicitly contain the trajectory no longer holds.

To counter this challange, we make the following two crucial observations:
\begin{itemize}[leftmargin=*]
\item[1.] The Feel-Good exploration term $\FG_t^j(\btheta_t^j)$ only contains parameter $\btheta_t^j$ and the adversarial arm $a_t^{3-j}$, while $a_t^j$ does not appear in its definition.
\item[2.] Conditioned on the history $S_{t-1}$, $\btheta_t^j$ and $a_t^{3-j}$ are independent because $a_t^{3-j}$ is determined by $\btheta_t^{3-j}$, which is independent with $\btheta_t^j$ according to Algorithm \ref{algorithm:FGTS.CDB}.
\end{itemize}
Therefore, $\btheta_t^j$ can also be decoupled with $a_t^j$:
\begin{align}\label{eq:FG_equality}
\EE[\FG_t^j(\btheta_t^j)]=\EE_{S_{t-1}, x_t, a_t^1, a_t^2}[\EE_{\tilde\btheta\sim p^j(\cdot|S_{t-1})}\FG_t^j(\tilde\btheta)].
\end{align}
Combining \eqref{eq:decoupling_main} and \eqref{eq:FG_equality}, we have
\begin{align}
&\EE[\BE_t^j-\FG_t^j(\btheta)]\le d\lambda+\EE_{S_{t-1}, x_t, a_t^1, a_t^2}\EE_{\tilde\btheta\sim p^j(\cdot|S_{t-1})}\Big[\frac{\LS_t(\tilde\btheta)}{4\lambda}-\FG_t^j(\tilde\btheta)\Big].\label{eq:suffices_to_bound}
\end{align}
To further bound \eqref{eq:suffices_to_bound}, we use a technique based on the analysis of the potential which is shown in Subsection \ref{subsection:Potential}.

\subsection{Potential-Based Analysis}\label{subsection:Potential}

Following \citet{zhang2022feel}, we consider a potential $Z_t^j$ for each round, defined as
\begin{align}\label{eq:def_Zt}
Z_t^j\coloneqq\EE_{S_t}\log\EE_{\tilde\btheta\sim p_0}W_t^j(\tilde\btheta|S_t),
\end{align}
where
\begin{gather}
W^j_t(\btheta|S_t)\coloneqq\exp\bigg(-\sum_{i=1}^t\Delta L^j(\btheta, x_i, a_i^1, a_i^2, y_i)\bigg),\label{eq:def_Wt}\\
\Delta L^j(\btheta, x, a^1, a^2, y)\coloneqq L^j(\btheta, x, a^1, a^2, y)-L^j(\btheta_*, x, a^1, a^2, y).\label{eq:def_DeltaLt}
\end{gather}
Then the posterior distribution satisfies
\begin{align}\label{eq:link_posterior_Wt}
p^j(\tilde\btheta|S_{t-1})=\frac{p_0(\tilde\btheta)W_{t-1}^j(\tilde\btheta|S_{t-1})}{\EE_{\tilde\btheta\sim p_0}W_{t-1}^j(\tilde\btheta|S_{t-1})}.
\end{align}
Using this expression, we can then obtain
\begin{align*}
&Z_t^j-Z_{t-1}^j=\EE_{S_t}\log\EE_{\tilde\btheta\sim p^j(\cdot|S_{t-1})}\exp(-\Delta L^j(\tilde\btheta, x_t, a_t^1, a_t^2, y_t)).
\end{align*}
Based on the above property and the design of $L^j$, we can establish the connection between \eqref{eq:suffices_to_bound} and the potential difference as follows:
\begin{lemma}\label{lemma:LS-FG_main}
Let $Z_t$ be defined in \eqref{eq:def_Zt}, and suppose $\eta\le1/2$. Then we have
\begin{align*}
&\EE_{S_{t-1}, x_t, a_t^1, a_t^2}\EE_{\tilde\btheta\sim p^j(\cdot|S_{t-1})}\Big[\frac{e^{-2B}\eta}{18\mu}\LS_t(\tilde\btheta)-\FG_t^j(\tilde\btheta)\Big]\le\mu^{-1}(Z_{t-1}^j-Z_t^j)+32\mu B^2,
\end{align*}
\end{lemma}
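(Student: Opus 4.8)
The plan is to upper bound the one-step potential gap $Z_t^j-Z_{t-1}^j$ by the right-hand side of the claimed inequality after clearing the factor $\mu^{-1}$. Starting from the identity $Z_t^j-Z_{t-1}^j=\EE_{S_t}\log\EE_{\tilde\btheta\sim p^j(\cdot|S_{t-1})}\exp(-\Delta L^j(\tilde\btheta,x_t,a_t^1,a_t^2,y_t))$ obtained above, I substitute the explicit form of $\Delta L^j$: writing $z=\langle\tilde\btheta,\bphi(x_t,a_t^1)-\bphi(x_t,a_t^2)\rangle$ and $z_*=\langle\btheta_*,\bphi(x_t,a_t^1)-\bphi(x_t,a_t^2)\rangle$, both in $[-2B,2B]$ by Assumption~\ref{assumption:linear}, and recalling that the maximum in $L^j(\btheta_*,\cdot)$ is attained at $a_t^*$, one gets $-\Delta L^j(\tilde\btheta,x_t,a_t^1,a_t^2,y_t)=-\eta(\sigma(y_tz)-\sigma(y_tz_*))+\mu\FG_t^j(\tilde\btheta)$. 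The factor $e^{\mu\FG_t^j(\tilde\btheta)}$ does not involve $y_t$, so after applying $\log u\le u-1$ and then exchanging $\EE_{y_t}$ with $\EE_{\tilde\btheta}$ — legitimate because the resampled $\tilde\btheta\sim p^j(\cdot|S_{t-1})$ is independent of $y_t$ given $S_{t-1},x_t,a_t^1,a_t^2$, which is precisely the parameter/arm decoupling built into Algorithm~\ref{algorithm:FGTS.CDB} — I arrive at
\begin{align*}
Z_t^j-Z_{t-1}^j\le\EE_{S_{t-1},x_t,a_t^1,a_t^2}\Big[\EE_{\tilde\btheta\sim p^j(\cdot|S_{t-1})}\big[e^{\mu\FG_t^j(\tilde\btheta)}A(\tilde\btheta)\big]-1\Big],\quad A(\tilde\btheta):=\EE_{y_t}\big[e^{-\eta(\sigma(y_tz)-\sigma(y_tz_*))}\big].
\end{align*}

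The crux is to control $A(\tilde\btheta)$. Plugging in the BTL probabilities $\PP(y_t=1)=e^{-\sigma(z_*)}$ and $\PP(y_t=-1)=e^{-\sigma(-z_*)}$ gives $A(\tilde\btheta)=e^{-(1-\eta)\sigma(z_*)-\eta\sigma(z)}+e^{-(1-\eta)\sigma(-z_*)-\eta\sigma(-z)}$. Each summand is the exponential of an affine function of $\eta$, so $A(\tilde\btheta)$ is convex in $\eta$; it equals $1$ at $\eta=0$ and, at $\eta=1/2$, equals $1-H^2$, where $H^2$ is the squared Hellinger distance between $\Ber(e^{-\sigma(z_*)})$ and $\Ber(e^{-\sigma(z)})$. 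Convexity therefore gives $A(\tilde\btheta)\le1-2\eta H^2$ for all $\eta\in[0,1/2]$ — this is the step that makes the bound uniform over $\eta\le1/2$ instead of only the symmetric choice. A direct computation on the logistic link over $[-2B,2B]$ — conveniently organized through the identity $H^2(\Ber(p),\Ber(q))=2\sin^2(\tfrac12(\arcsin\sqrt p-\arcsin\sqrt q))$, whose derivative carries only a half-power of the logistic variance and hence produces $e^{-2B}$ rather than $e^{-4B}$ — yields $H^2\ge c\,e^{-2B}\LS_t(\tilde\btheta)$ for an absolute constant $c$, so $A(\tilde\btheta)\le1-2c\,\eta e^{-2B}\LS_t(\tilde\btheta)$.

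It remains to combine the two estimates. Since $|\mu\FG_t^j(\tilde\btheta)|\le2B\mu\le1$ for $\mu=1/(10e^B\sqrt T)$, the inequality $e^u\le1+u+u^2$ gives $e^{\mu\FG_t^j(\tilde\btheta)}\le1+\mu\FG_t^j(\tilde\btheta)+4B^2\mu^2$. Multiplying by $A(\tilde\btheta)\le1-2c\,\eta e^{-2B}\LS_t(\tilde\btheta)$ and using $\LS_t(\tilde\btheta)\le16B^2$ and $|\FG_t^j(\tilde\btheta)|\le2B$, all cross terms are smaller than the leading $\LS_t$ term by a factor of order $\mu$; absorbing them (which at most halves the coefficient) and collecting the $O(B^2\mu^2)$ remainder, one obtains $\EE_{\tilde\btheta}[e^{\mu\FG_t^j}A]-1\le\mu\,\EE_{\tilde\btheta}[\FG_t^j(\tilde\btheta)]-\tfrac{\eta e^{-2B}}{18}\EE_{\tilde\btheta}[\LS_t(\tilde\btheta)]+32B^2\mu^2$, the round constants $1/18$ and $32$ absorbing $c$, the halving, and the $O(B^2\mu^2)$ terms. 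Taking the outer expectation, substituting into the first-paragraph bound on $Z_t^j-Z_{t-1}^j$, rearranging, and dividing by $\mu>0$ yields exactly the claim.

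I expect the second step to be the main obstacle: extracting from the BTL link a clean quadratic lower bound $1-A(\tilde\btheta)=\Omega(\eta e^{-2B})\LS_t(\tilde\btheta)$ with the correct dependence on both $\eta$ and $B$ and with a constant sharp enough that it still dominates $\tfrac{\eta e^{-2B}}{18}$ after the loss incurred by the $e^{\mu\FG_t^j}$ expansion; the convexity-in-$\eta$ reduction to the $\eta=1/2$ Hellinger computation is essential so that this covers the choice $\eta=0.25$ in Theorem~\ref{theorem:main}. The only other subtlety is the $\EE_{y_t}/\EE_{\tilde\btheta}$ exchange, which is valid precisely because $\tilde\btheta$ is resampled independently of the played pair $(a_t^1,a_t^2)$ given the history — the structural feature of FGTS.CDB highlighted in Section~\ref{section:proof_highlights}.
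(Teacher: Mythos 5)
Your skeleton is sound and largely parallels the paper's: you start from the same potential-difference identity, justify the $\EE_{y_t}$/$\EE_{\tilde\btheta}$ exchange by the conditional independence of the resampled $\tilde\btheta$ from $(a_t^1,a_t^2,y_t)$ given $S_{t-1},x_t$, compute the $y_t$-average of $\exp(-\Delta L^j)$ exactly, and then invoke a quantitative strict-convexity property of the logistic link. Your convexity-in-$\eta$ interpolation down to the $\eta=1/2$ Bhattacharyya/Hellinger point is a genuinely different (and elegant) substitute for the paper's Lemma~\ref{lemma:sigma_second_diff}, which instead bounds $\sigma((1-\eta)p+\eta q)-(1-\eta)\sigma(p)-\eta\sigma(q)\le-\tfrac{e^{-2B}}{8}\eta(1-\eta)(p-q)^2$ by a mean-value argument.

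The gap is quantitative, and you named it yourself: this route cannot reach the stated coefficient $\tfrac{e^{-2B}\eta}{18}$ on $\LS_t$, uniformly in $\mu$. First, the best \emph{uniform} constant in $H^2\ge c\,e^{-2B}\LS_t$ is about $c=1/32$ (in the regime $B\to0$, $z\to z_*$ one has $H^2\approx\tfrac{p(1-p)}{8}(z-z_*)^2$ with $p(1-p)\le1/4$ and $e^{-2B}\to1$), so $1-A(\tilde\btheta)\ge2\eta H^2$ yields at most $\tfrac{\eta e^{-2B}}{16}\LS_t$ even in the ideal case, and the chain you sketch (arcsine derivative $\tfrac12\sqrt{p(1-p)}$ plus a linear lower bound on $\sin$) lands nearer $\tfrac{1}{20}$--$\tfrac{1}{40}$. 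Second, your multiplicative expansion $e^{\mu\FG_t^j}\le1+\mu\FG_t^j+(\mu\FG_t^j)^2$ produces an $\FG\times\LS$ cross term that you absorb by giving up (by your own accounting) up to half of the $\LS$ coefficient, and it requires $|\mu\FG_t^j|\le1$ — note $|\FG_t^j|\le4B$, not $2B$ — i.e.\ a smallness condition on $\mu$ that is not among the lemma's hypotheses (the lemma assumes only $\eta\le1/2$ and is applied before $\mu$ is fixed). Combining the two losses puts your achievable coefficient strictly below $\tfrac1{18}$. The paper avoids both: after the exact identity and Lemma~\ref{lemma:sigma_second_diff} it splits $\exp(\mu\FG_t^j-\tfrac{\eta e^{-2B}}{16}\LS_t)$ by Cauchy--Schwarz \emph{inside} the log (doubling then halving the exponents, so the leading $\LS$ constant is untouched), treats the Feel-Good factor by Hoeffding's lemma (Lemma~\ref{lemma:Hoeffding}), valid for every $\mu$ and yielding exactly the $32\mu B^2$ term, and loses only the factor $1-\eta B^2e^{-2B}\ge1-\tfrac1{2e^2}$ in the quadratic expansion of the $\LS$ factor, which is how $\tfrac1{16}$ becomes $\tfrac1{18}$. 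As written, your argument would prove the lemma only with a larger denominator (still enough for $\tilde\cO(d\sqrt T)$ downstream after rescaling $\lambda$ and $\mu$, but not the statement as claimed) and under an extra restriction on $\mu$; to close the gap you would need either the paper's Cauchy--Schwarz/Hoeffding bookkeeping or a sharper, uniform Hellinger lower bound together with a cross-term treatment that costs $o(1)$ of the $\LS$ coefficient.
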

The proof of Lemma \ref{lemma:LS-FG_main} is detailed in Appendix \ref{subsection:bound_LS-FG}.
Inspired by Lemma \ref{lemma:LS-FG_main}, we can choose $\lambda=(9\mu e^{2B})/(2\eta)$ in \eqref{eq:suffices_to_bound}.
Taking the sum over $t$, noting that $Z_0^j=0$, through telescope sum we obtain that
\[
\sum_{t=1}^T\EE[\BE_t^j-\FG_t^j(\btheta_t^j)]\le\frac{-Z_T^j}\mu+\mu T\bigg(\frac{9de^{2B}}{2\eta}+32B^2\bigg).
\]
It then suffices to derive an upper bound for $-Z_T^j$, which is characterized by the following lemma:
\begin{lemma}\label{lemma:ZT_bound_main}
Let $Z_T^j$ be defined by \eqref{eq:def_Zt}, then we have
\begin{align*}
-Z_T^j=\tilde\cO(d),\quad j=1, 2.
\end{align*}
\end{lemma}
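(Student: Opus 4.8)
Since $-Z_T^j$ is the quantity to be upper bounded, the plan is to prove a lower bound on $Z_T^j=\EE_{S_T}\log\EE_{\tilde\btheta\sim p_0}W_T^j(\tilde\btheta\mid S_T)$ by the standard Feel-Good device of restricting the prior mass to a small ball around $\btheta_*$~\citep{zhang2022feel,fan2023the}, adapted to the likelihood $L^j$ used here. The structural fact I would exploit throughout is that $\Delta L^j(\btheta_*,\cdot)\equiv 0$ by \eqref{eq:def_DeltaLt}, so $W_T^j(\btheta_*\mid S_T)=1$ for every realization of $S_T$, and $W_T^j$ remains close to $1$ on a neighborhood of $\btheta_*$; the argument is identical for $j=1$ and $j=2$.

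Concretely, I would first fix an arbitrary realization of $S_T$ and a radius $\rho>0$, let $\mathcal{B}_\rho\coloneqq\{\btheta:\|\btheta-\btheta_*\|_2\le\rho\}\cap\{\btheta:\|\btheta\|_2\le B\}$, and use $W_T^j\ge 0$ to get
\[
\EE_{\tilde\btheta\sim p_0}W_T^j(\tilde\btheta\mid S_T)\ \ge\ p_0(\mathcal{B}_\rho)\cdot\inf_{\btheta\in\mathcal{B}_\rho}W_T^j(\btheta\mid S_T).
\]
The next step is to show $\inf_{\btheta\in\mathcal{B}_\rho}W_T^j(\btheta\mid S_T)\ge e^{-2(\eta+\mu)\rho T}$ uniformly in $S_T$: for any data point $(x,a^1,a^2,y)$ and any $\btheta\in\mathcal{B}_\rho$ one has $|\Delta L^j(\btheta,x,a^1,a^2,y)|\le 2(\eta+\mu)\rho$, because $\sigma$ is $1$-Lipschitz and $\|\bphi(x,a^1)-\bphi(x,a^2)\|_2\le 2$ control the logistic part, while $\btheta\mapsto\max_{a'\in\cA}\langle\btheta,\bphi(x,a')-\bphi(x,a^{3-j})\rangle$ is a supremum of affine maps with gradients of norm at most $2$, hence $2$-Lipschitz, which controls the Feel-Good part; summing this per-round bound over $t=1,\dots,T$ gives the claim. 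Taking logarithms and then $\EE_{S_T}$ (the bound being deterministic in $S_T$) yields $Z_T^j\ge\log p_0(\mathcal{B}_\rho)-2(\eta+\mu)\rho T$.

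It then remains to choose $\rho$ and lower bound the prior mass $p_0(\mathcal{B}_\rho)$. Taking $\rho=1/T$ makes $2(\eta+\mu)\rho T=2(\eta+\mu)=\cO(1)$ since $\eta=1/4$ and $\mu\le 1/10$. For $p_0(\mathcal{B}_\rho)$, Assumption~\ref{assumption:lipschitz} gives $\log p_0(\btheta)\ge\log p_0(\btheta_*)-L\rho$ on $\mathcal{B}_\rho$, and $\mathrm{Vol}(\mathcal{B}_\rho)\ge 2^{-d}\omega_d\rho^d$ (re-centering the ball at an interior point when $\btheta_*$ lies near the boundary sphere), where $\omega_d$ denotes the volume of the unit Euclidean ball in $\RR^d$; hence
\[
-\log p_0(\mathcal{B}_\rho)\ \le\ d\log 2-\log\omega_d+d\log T-\log p_0(\btheta_*)+L/T.
\]
Since $-\log\omega_d=\cO(d\log d)$, $d\log T=\tilde\cO(d)$, $L/T=\cO(1)$, and $-\log p_0(\btheta_*)=\tilde\cO(d)$ (which follows from Assumption~\ref{assumption:lipschitz} together with $\int_{\|\btheta\|_2\le B}p_0=1$; for the uniform prior $-\log p_0(\btheta_*)=\log\omega_d+d\log B$, whereupon $-\log\omega_d$ cancels and the bound collapses to $d\log(2TB)$), we conclude $-Z_T^j\le-\log p_0(\mathcal{B}_\rho)+\cO(1)=\tilde\cO(d)$ for $j=1,2$.

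The step that needs care is the uniform control of $\Delta L^j$ on $\mathcal{B}_\rho$: one must verify that the dueling-specific Feel-Good term $-\mu\max_{a'}\langle\btheta,\bphi(x,a')-\bphi(x,a^{3-j})\rangle$ stays Lipschitz in $\btheta$ with a \emph{dimension-free} constant in spite of the extra shift by $\bphi(x,a^{3-j})$, and that the prefactor $\eta+\mu$ is tracked so that the single choice $\rho\asymp 1/T$ simultaneously forces $2(\eta+\mu)\rho T=\cO(1)$ and $-d\log\rho=\tilde\cO(d)$; everything else here is a transcription of the standard Feel-Good potential argument, with the prior-mass estimate being routine once Assumption~\ref{assumption:lipschitz} is invoked.
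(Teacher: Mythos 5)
Your proposal is correct, and it follows the same overall skeleton as the paper's proof of this lemma: lower-bound $Z_T^j$ by restricting the prior integral to a small neighborhood of $\btheta_*$, control $\sum_t\Delta L^j$ there, and lower-bound the prior mass via the Lipschitzness of $\log p_0$. The difference is in how the per-round term is controlled on the neighborhood. You use a crude deterministic bound, $|\Delta L^j(\btheta,\cdot)|\le 2(\eta+\mu)\|\btheta-\btheta_*\|_2$, valid pointwise in $y_t$ because $\sigma$ is $1$-Lipschitz and the Feel-Good term is a supremum of $2$-Lipschitz affine maps; the paper instead splits $\Delta L^j$ into its conditional expectation and a deviation term, Taylor-expands $\sigma$ to get a quadratic bound $\tfrac{\eta}{2}\|\btheta-\btheta_*\|_2^2$ plus $2\mu\|\btheta-\btheta_*\|_2$, and bounds the expected deviation via $\EE\epsilon_t\le 1/2$. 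Your route is more elementary (no noise decomposition, no Taylor remainder) and fully suffices for the $\tilde\cO(d)$ claim; the paper's finer estimate buys slightly sharper constants, which it uses in the explicit bound of the restated theorem, and leads to its optimized radius $\delta\asymp d/(L+2(\mu+\eta)T)$ versus your simpler $\rho=1/T$ (both give $d\log T=\tilde\cO(d)$). Your prior-mass step also differs mildly: you inscribe a Euclidean ball of radius $\rho/2$, re-centered inward to handle $\btheta_*$ on the boundary of the support $\{\|\btheta\|_2\le B\}$ (a case the paper's inscribed-cube argument glosses over), at the cost of a $-\log\omega_d=\cO(d\log d)$ term that is still $\tilde\cO(d)$. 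Finally, like the paper, you need $-\log p_0(\btheta_*)=\tilde\cO(d)$ to state the conclusion in this form; you make that explicit and justify it from Assumption~\ref{assumption:lipschitz} and normalization, which is consistent with the paper's treatment.
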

Lemma \ref{lemma:ZT_bound_main} shows that the upper bound of $-Z_t^j$ only contains logarithmic factors of $T$ despite the sum of $T$ terms in its definition.
The proof of Lemma \ref{lemma:ZT_bound_main} is shown in Appendix \ref{subsection:ZT_bound} and uses a technique similar to that of Section 5.2 in \citet{zhang2022feel}.
Finally, by selecting $\eta=\Theta(1)$ and $\mu=\Theta(1/\sqrt T)$, we can obtain the regret bound of $\tilde\cO(d\sqrt T)$.

\section{Experiments}

%\todoq{1.5 page: baseline, dataset, parameter tuninig, results, abalation study.}

In this section, we investigate the performance of FGTS.CDB through simulation, comparing it with other efficient algorithms proposed for contextual dueling bandits. For each experiment, we run $T = 2500$ rounds. The underlying unknown parameter $\btheta^*$ is randomly generated and normalized to a unit vector. The dimension of feature vectors is set to $d = 5, 10, 15$. We generate a total of $|\cA_t| =32$ distinct arms with feature vectors randomly chosen from $\{-1, 1\}^d$ following the uniform distribution. In every round, given the arm pair selected by the algorithm, a response is generated following the random process described in Section \ref{sec:setup}. Each experiment comprises 10 independent runs. We report and plot the average cumulative regret in Figure \ref{figure:1}, along with the standard deviation shown in the shaded region. For simplicity, we choose the logistic function $\sigma(\cdot)$ as the link function. 

\subsection{Implementation Details of Different Algorithms}

\noindent\textbf{MaxInP.}
The maximum informative pair method introduced by \citet{saha2021optimal} maintains an active set of potential optimal arms in each round. Pairs are selected based on maximizing the uncertainty in the difference between the two arms. Instead of incorporating a warm-up period $\tau_0$ as part of their definitions, we initialize the covariance matrix $\bSigma_0 = \lambda \Ib$ for regularization. Empirically, we found that when $\lambda$ is set to 0.001, this approach shows no substantial impact on the performance compared to the warm-up strategy.

\noindent\textbf{MaxPairUCB.}
In this algorithm, we use the same MLE estimator as that of MaxInP. However, we use a different arm selection scheme defined as follows:
\[
(\xb_t, \yb_t)=\argmax_{(\xb, \yb)\in\cA_t\times\cA_t}\Big[\langle\hat\btheta_t, \xb_t+\yb_t\rangle+\beta\|\xb_t-\yb_t\|_{\hat\bSigma_t^{-1}}\Big],
\]
which is a variant of MaxInP without the need for an arm elimination phase proposed by \citet{di2023variance}, where $\cA_t$ is the decision set at round $t$.  

\noindent\textbf{CoLSTIM.}
This technique is proposed in \citet{bengs2022stochastic}. Initially, they augment each arm with randomly perturbed utilities and select the arm with the best estimation. They argue that this procedure yields improved empirical performance. The second arm is then selected by maximizing the sum of the estimated reward and the uncertainty term proposed by \citet{saha2021optimal}.

\noindent\textbf{VACDB.}
This approach, proposed by \citet{di2023variance}, adopts a layered arm elimination process. Arms in higher layers have lower estimation uncertainty, and the elimination process starts when all arms within a layer have lower uncertainty. The arm selection scheme is similar to that of MaxPairUCB, where arms are selected to maximize the same weighted sum within a certain layer.

\noindent\textbf{FGTS.CDB.}
The proposed sampling-based algorithm for contextual dueling bandits in this paper as presented in Algorithm \ref{algorithm:FGTS.CDB}. $\eta$ and $\mu$ is set to 1 and $T^{-1 / 2} \cdot \alpha$, respectively. Empirically, we generate the model parameter $\{\btheta_t^j\}_{j = 1, 2}$ by repeatedly taking the following stochastic gradient Langevin dynamics (SGLD) step:
\begin{align} 
\btheta_t^j &\gets \btheta_t^j - \delta \nabla_{\btheta} \bigg[\underbrace{\sum_{\tau = 1}^{t - 1} L^j (\btheta_t^j, x_\tau, a_\tau^1, a_\tau^2, y_\tau) 
 - \ln p_0(\btheta_t^j)}_{I(\btheta_t^j)}\bigg]+ \sqrt{2 \delta} \bxi_i,  \label{eq:SGLD}
\end{align}
where $\bxi_i \sim \cN(0, \Ib_d)$. If we set the step size $\delta$ to a sufficiently small number, the dynamic of $\btheta_t^j$ can be regarded as the solution of the following stochastic differential equation: 
\begin{align*} 
\rd \btheta_s = - \nabla I(\btheta_s) \rd s + \sqrt{2} \rd \Wb_s, 
\end{align*} where $W_s$ is the Brownian motion in $\RR^d$ with $W_0=0$. We denote by $q_s(\btheta) \rd \btheta$ the distribution of $\btheta_s$ at time $s$. Then it is known that $q_s$ satisfies the following Fokker-Planck equation: 
\begin{align*} 
\frac{\partial q_s}{\partial s} &= \nabla \cdot \left(q_s \nabla I(\btheta_s)\right) + \lambda \Delta q_s. 
\end{align*}
The stationary distribution of $\btheta_t^j$ after SGLD iterations can then be derived by setting $\partial q_s/\partial s = 0$, from which we have $\btheta_t^j$ converges to a sample from $p^j(\btheta) \propto \exp(-I(\btheta))$. 

In our experiment, we implement \eqref{eq:SGLD} with initial step size $\delta = 0.005$. After each round, we schedule the step size $\delta$ with decaying rate $0.99$ to stabilize the optimization process. 

\begin{figure*}[ht!]
    \begin{center}
    \subfigure[$d=5$]{
    \includegraphics*[width=0.32\textwidth]{./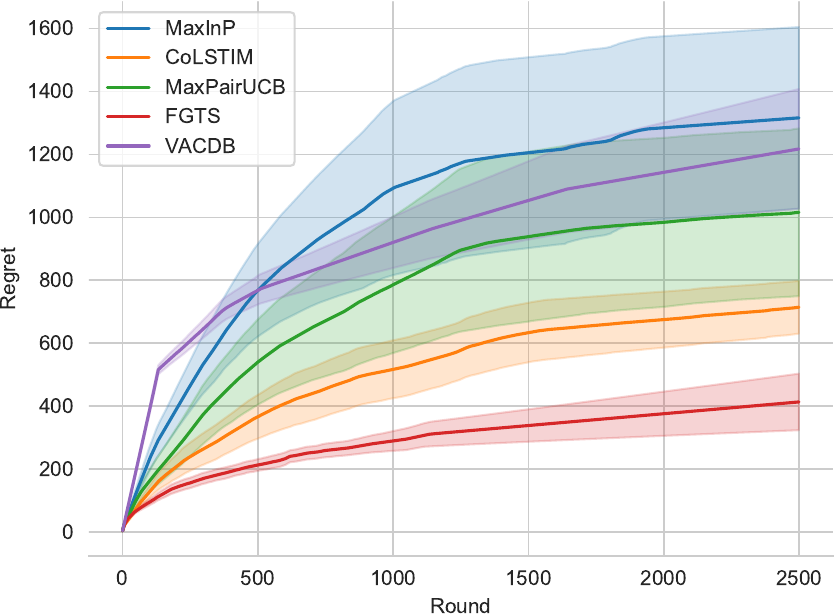}}
    \subfigure[$d=10$]{
    \includegraphics*[width=0.32\textwidth]{./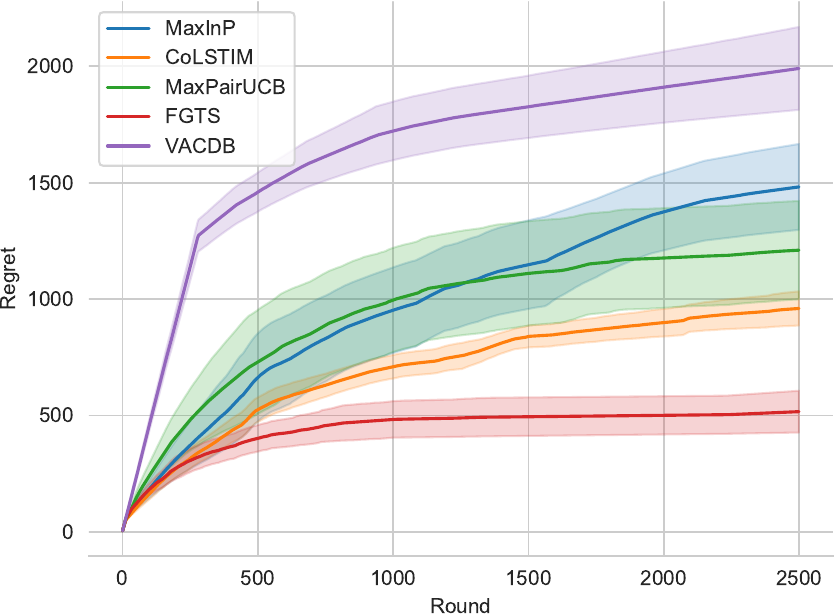}}
    \subfigure[$d=15$]{
    \includegraphics*[width=0.32\textwidth]{./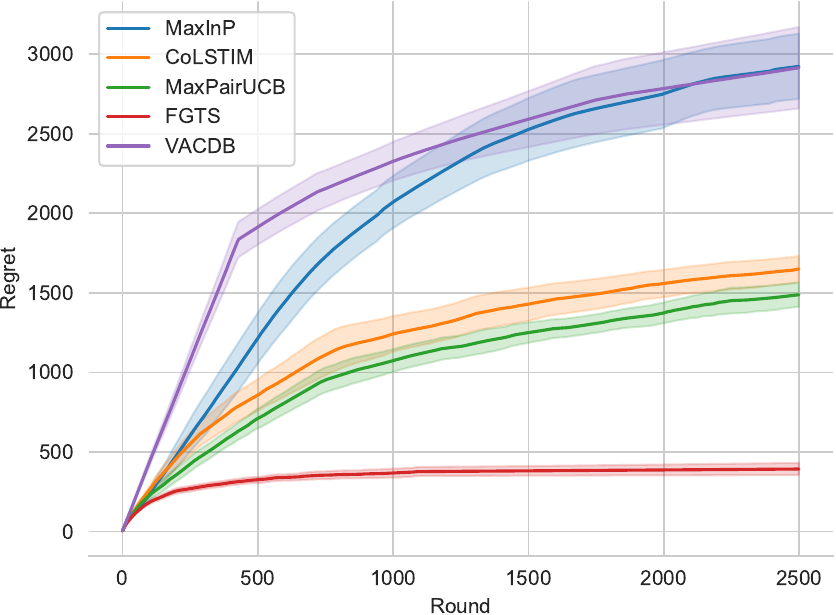}}
    \caption{Regret comparison with MaxInP, MaxPairUCB and CoLSTIM. \label{figure:1}}
    \end{center}
\end{figure*}

\subsection{Regret Comparison}
We plot the regret with respect to the number of rounds in Figure \ref{figure:1}. The results are averaged over 30 trials. In Figure \ref{figure:1}, we run FGTS.CDB with $\alpha = 0.1$. For the benchmarks, we select the hyperparameters, including the conficence radius in MaxInP and MaxPairUCB and the magnitude of perturbations in CoLSTIM, to be the best-performing hyperparameter within $\{10^{-2}, 10^{-1}, 10^0, 10^1\}$. It is shown that, FGTS.CDB outperforms the previous algorithms by a large margin with dimension $d$ set to 5, 10, 15. Additionally, the standard deviation of FGTS.CDB among different trials is also the smallest according to our experiments, which indicates that FGTS.CDB is more stable under different random seeds.

\subsection{Ablation Study}
One benefit of FGTS.CDB in empirical tasks is that the hyperparameters $\mu = T^{-1 / 2} \cdot \alpha$ and $\eta$ are independent of the feature dimension $d$, simplifying the tuning of parameters. In contrast, the confidence radius in UCB-based algorithms usually need to be carefully tuned to achieve the desirable performance~\citep{lattimore2020bandit}. In Figure \ref{figure:2}, we study the performance of FGTS.CDB when $d = 5, 10, 15$ under different values of $\alpha$. It is observed that the regret of FGTS.CDB is robust to different values of $\alpha$. Surprisingly, it turns out that FGTS.CDB performs well even without Feel-Good exploration ($\alpha = 0$). We conjecture that a Thompson-sampling-based algorithm may also work for contextual dueling bandit setting due to its stochastic nature, which also encourages the agent to explore different pairs of arms.  We leave the study of Thompson-sampling without Feel-Good exploration under contextual dueling bandit setting for future work. 

\begin{figure}[h!]
    \centering
    \subfigure[$d=5$]{
    \includegraphics*[width=0.32\textwidth]{./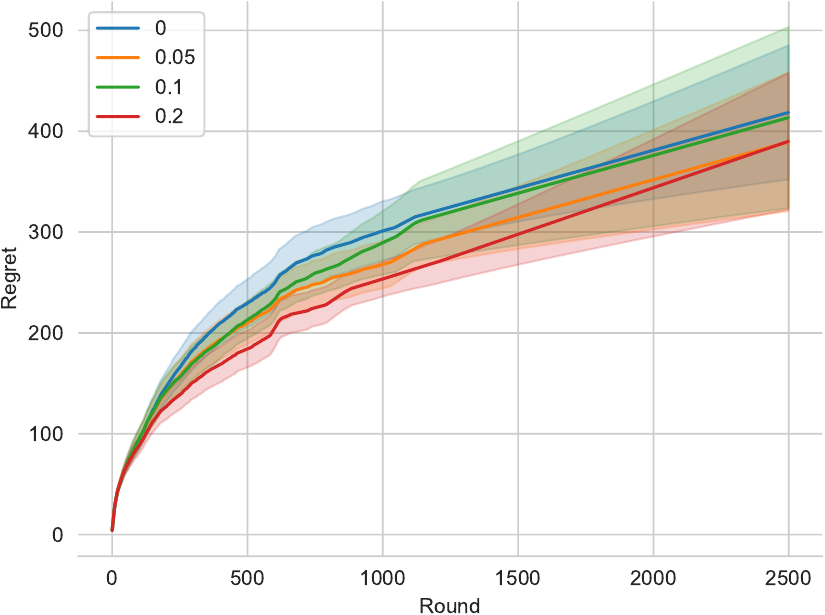}}
    \subfigure[$d=10$]{
    \includegraphics*[width=0.32\textwidth]{./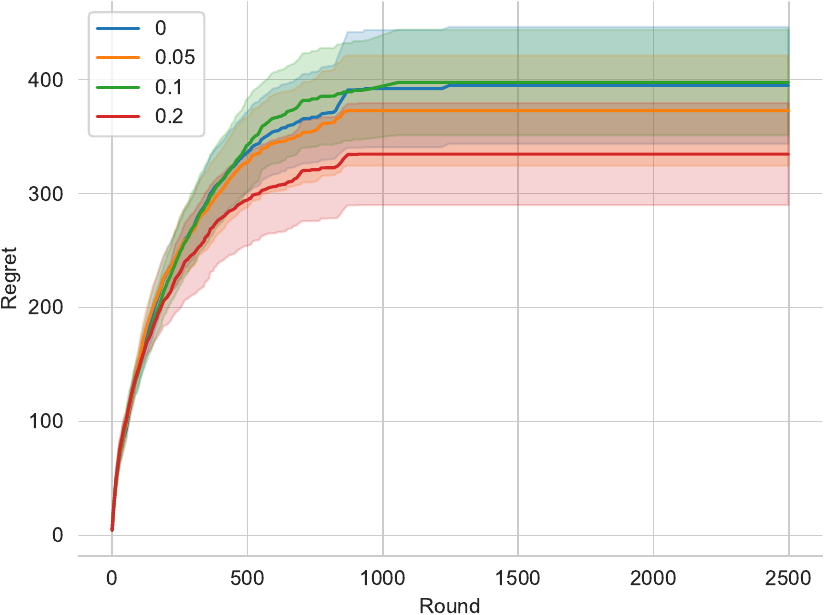}}
    \subfigure[$d=15$]{
    \includegraphics*[width=0.32\textwidth]{./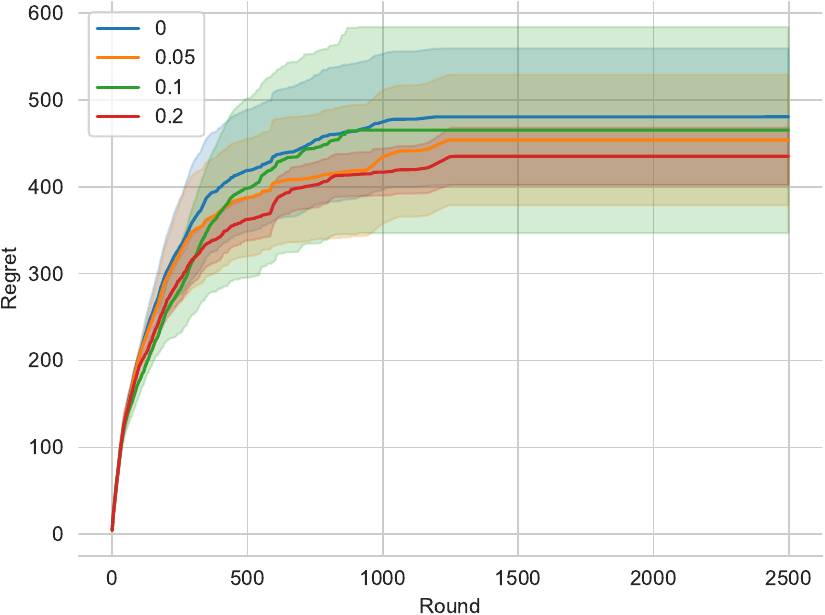}}
    \caption{Performance of FGTS.CDB over different $\alpha$.}
    \label{figure:2}
\end{figure}

\section{Conclusion}

In this work, we apply the technique of Feel-Good Thompson sampling to the setting of contextual dueling bandits.
We propose an algorithm, FGTS.CDB, whose pivotal design is a Feel-Good exploration term that contains an additional term representing the reward of the adversarial arm, compared with standard Thompson sampling.
We show that our algorithm achieves a nearly minimax-optimal regret bound.
Furthermore, experiments on synthetic data show that the performance of our algorithm based on FGTS is comparable with UCB-based algorithms.
As a future direction, it is interesting to explore the possibility of variance-aware algorithms based on the FGTS technique.
The extension of our algorithm to the setting of preference-based reinforcement learning is also an interesting topic to study.

%\newpage
\appendix
%\onecolumn

%\section{Experiment Details}\label{section:ExperimentDetails}
\section{Proof of Main Results}\label{section:proof_linear&nonlinear}

\subsection{Proof of Theorem \ref{theorem:main}}\label{section:proof_linear}

We first restate Theorem \ref{theorem:main} more precisely:
\begin{theorem}[Restatement of Theorem \ref{theorem:main}]\label{theorem:main_restate}
Assume that the hyperparameters are selected as $\mu=0.25$ and $\mu=1/(10e^B\sqrt T)$, and the logarithm of the prior distribution $p_0$ is $L$-Lipschitz. Then the expected regret is bounded by
\[
\sum_{t=1}^T\EE\bigg[r_*(x_t, a_t^*)-\frac{r_*(x_t, a_t^1)+r_*(x_t, a_t^2)}2\bigg]\le10e^Bd\sqrt T\bigg[2-\frac{\log p_0(\btheta_*)}{d}+\log\bigg(\frac{L}{\sqrt d}+\frac{1}{5e^B}\sqrt{\frac Td}+\frac{T}{2\sqrt d}\bigg)\bigg].
\]
\end{theorem}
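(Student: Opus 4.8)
The plan is to assemble the pieces laid out in Section~\ref{section:proof_highlights}. I would start from the exact per-round identity
\[
r_*(x_t, a_t^*)-\frac{r_*(x_t, a_t^1)+r_*(x_t, a_t^2)}2=\frac12\big[\BE_t^1+\BE_t^2-\FG_t^1(\btheta_t^1)-\FG_t^2(\btheta_t^2)\big],
\]
obtained by expanding $\BE_t^j=\langle\btheta_t^j-\btheta_*,\bphi(x_t,a_t^j)-\bphi(x_t,a_t^{3-j})\rangle$ and $\FG_t^j$ and noting that the adversarial-arm contributions $\langle\cdot,\bphi(x_t,a_t^{3-j})\rangle$ cancel. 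This reduces the problem to bounding $\sum_{t=1}^T\EE[\BE_t^j-\FG_t^j(\btheta_t^j)]$ for $j=1,2$ and averaging.

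For each $j$, I would first apply the decoupling inequality of Lemma~\ref{lemma:decoupling} — which uses that the decoupling coefficient of the linear class equals $d$ — to get $\EE[\BE_t^j]\le d\lambda+\frac1{4\lambda}\EE_{S_{t-1},x_t,a_t^1,a_t^2}\EE_{\tilde\btheta\sim p^j(\cdot|S_{t-1})}\LS_t(\tilde\btheta)$ for arbitrary $\lambda>0$. Then I would use the two structural facts that $\FG_t^j(\cdot)$ does not contain $a_t^j$, and that conditioned on $S_{t-1}$ the pair $(\btheta_t^j,a_t^{3-j})$ is independent because $a_t^{3-j}$ is a deterministic function of the independently drawn $\btheta_t^{3-j}$; the second fact gives the exact identity $\EE[\FG_t^j(\btheta_t^j)]=\EE_{S_{t-1},x_t,a_t^1,a_t^2}\EE_{\tilde\btheta\sim p^j(\cdot|S_{t-1})}\FG_t^j(\tilde\btheta)$, so the Feel-Good term is expressed through a fresh sample $\tilde\btheta$ exactly as the Bellman error. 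Combining the two, it remains to bound $d\lambda+\EE_{S_{t-1},x_t,a_t^1,a_t^2}\EE_{\tilde\btheta}[\LS_t(\tilde\btheta)/(4\lambda)-\FG_t^j(\tilde\btheta)]$.

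At this point I would choose $\lambda=9\mu e^{2B}/(2\eta)$, exactly the value making $1/(4\lambda)=e^{-2B}\eta/(18\mu)$ match the coefficient in Lemma~\ref{lemma:LS-FG_main}, which then bounds the bracketed expectation by $\mu^{-1}(Z_{t-1}^j-Z_t^j)+32\mu B^2$. Summing over $t$ and telescoping with $Z_0^j=0$ yields $\sum_{t=1}^T\EE[\BE_t^j-\FG_t^j(\btheta_t^j)]\le -Z_T^j/\mu+\mu T\big(9de^{2B}/(2\eta)+32B^2\big)$. I would then insert the explicit-constant version of Lemma~\ref{lemma:ZT_bound_main}, which bounds $-Z_T^j$ by $-\log p_0(\btheta_*)$ plus $d$ times a logarithm in $L$, $T$ and $d$ (obtained from a covering of $\{\|\btheta\|_2\le B\}$ together with the $L$-Lipschitz assumption on $\log p_0$ to lower-bound the prior mass of a small ball around $\btheta_*$). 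Finally, setting $\eta=0.25$ and $\mu=1/(10e^B\sqrt T)$ — the near-minimizer of the resulting expression of the form $C_1/\mu+C_2\mu T$ — and averaging over $j$ collects the claimed closed-form bound; matching the precise constants in the displayed inequality of Theorem~\ref{theorem:main_restate} is then bookkeeping.

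The main obstacle is not this assembly but the two lemmas it invokes. For Lemma~\ref{lemma:LS-FG_main} one must control the one-step potential drop $Z_{t-1}^j-Z_t^j=-\EE_{S_t}\log\EE_{\tilde\btheta\sim p^j(\cdot|S_{t-1})}\exp(-\Delta L^j(\tilde\btheta,x_t,a_t^1,a_t^2,y_t))$, carefully exploiting the precise form $L^j(\btheta,x,a^1,a^2,y)=\eta\sigma(y\langle\btheta,\bphi(x,a^1)-\bphi(x,a^2)\rangle)-\mu\max_{a'}\langle\btheta,\bphi(x,a')-\bphi(x,a^{3-j})\rangle$: the logistic part contributes, via curvature bounds on $\sigma$ over $[-2B,2B]$ and the fact that $\EE[y\,|\,x,a^1,a^2]$ is governed by $\sigma$ at $\btheta_*$, a negative multiple of $\LS_t$, while the Feel-Good part contributes the $\FG_t^j$ term — and it is precisely the adversarial-arm shift in $L^j$ that makes $\max_{a'}$ appear without spurious cross terms. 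For Lemma~\ref{lemma:ZT_bound_main} one must show that $Z_T^j$, though a sum of $T$ possibly negative increments, stays $\tilde\cO(d)$; this follows by restricting the inner $p_0$-expectation to a radius-$\Theta(1/T)$ ball around $\btheta_*$ on which $W_T^j$ is bounded below by a polynomial factor of $T$, and lower-bounding the prior mass of that ball by its volume times $p_0(\btheta_*)$ up to the Lipschitz correction.
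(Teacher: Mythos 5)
Your proposal is correct and follows essentially the same route as the paper: the identical regret decomposition into $\BE_t^j-\FG_t^j(\btheta_t^j)$, decoupling via Lemma \ref{lemma:decoupling} with $\lambda=9\mu e^{2B}/(2\eta)$, the conditional-independence argument for the Feel-Good term, telescoping the potential via Lemma \ref{lemma:LS-FG_main}, bounding $-Z_T^j$ via Lemma \ref{lemma:ZT_bound_main}, and substituting $\eta=0.25$, $\mu=1/(10e^B\sqrt T)$. Your sketches of the two supporting lemmas (curvature of $\sigma$ for the one-step potential drop, and the prior-mass lower bound on a small ball around $\btheta_*$) also match the paper's arguments, so only the constant bookkeeping remains.
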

The following Lemma bounds the regret with terms related with $Z_T^j$:
\begin{lemma}\label{lemma:main}
Under Assumption \ref{assumption:linear}, if $\eta\le1/2$, then we have
\[
\sum_{t=1}^T\EE\bigg[r_*(x_t, a_t^*)-\frac{r_*(x_t, a_t^1)+r_*(x_t, a_t^2)}2\bigg]\le\bigg(\frac{9de^{2B}}{2\eta}+32B^2\bigg)\mu T-\frac{Z_T^1+Z_T^2}{2\mu}.
\]
\end{lemma}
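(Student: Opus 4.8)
The plan is to assemble Lemma~\ref{lemma:main} from the pieces already developed in Section~\ref{section:proof_highlights}, essentially carrying out the telescoping argument sketched there. First I would invoke the regret decomposition established in Subsection~\ref{subsection:RegretDecomp}, namely
\[
r_*(x_t, a_t^*)-\frac{r_*(x_t, a_t^1)+r_*(x_t, a_t^2)}2=\frac12\big[\BE_t^1+\BE_t^2-\FG_t^1(\btheta_t^1)-\FG_t^2(\btheta_t^2)\big],
\]
so that summing over $t$ and taking expectations reduces the goal to bounding $\sum_{t=1}^T\EE[\BE_t^j-\FG_t^j(\btheta_t^j)]$ for each $j\in\{1,2\}$ separately, then averaging. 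This is the step where the design of the Feel-Good exploration term pays off: because $a_t^j$ does not appear in $\FG_t^j$ and $a_t^{3-j}$ is determined by $\btheta_t^{3-j}$ (independent of $\btheta_t^j$ given $S_{t-1}$), the decoupling identity \eqref{eq:FG_equality} holds exactly, while the Bellman error is handled by the decoupling inequality \eqref{eq:decoupling_main}. Combining the two yields \eqref{eq:suffices_to_bound}, valid for any $\lambda>0$.

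Next I would specialize $\lambda$ to align \eqref{eq:suffices_to_bound} with Lemma~\ref{lemma:LS-FG_main}. Lemma~\ref{lemma:LS-FG_main} bounds $\EE_{S_{t-1},x_t,a_t^1,a_t^2}\EE_{\tilde\btheta\sim p^j}[\tfrac{e^{-2B}\eta}{18\mu}\LS_t(\tilde\btheta)-\FG_t^j(\tilde\btheta)]$ by $\mu^{-1}(Z_{t-1}^j-Z_t^j)+32\mu B^2$, so choosing $\tfrac1{4\lambda}=\tfrac{e^{-2B}\eta}{18\mu}$, i.e.\ $\lambda=\tfrac{9\mu e^{2B}}{2\eta}$ (which is positive since $\eta\le1/2$ and $\mu>0$), makes the $\tfrac{\LS_t}{4\lambda}-\FG_t^j$ term in \eqref{eq:suffices_to_bound} exactly the quantity controlled by Lemma~\ref{lemma:LS-FG_main}. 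Substituting gives
\[
\EE[\BE_t^j-\FG_t^j(\btheta_t^j)]\le d\lambda+\mu^{-1}(Z_{t-1}^j-Z_t^j)+32\mu B^2=\frac{9d\mu e^{2B}}{2\eta}+32\mu B^2+\mu^{-1}(Z_{t-1}^j-Z_t^j).
\]

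Finally I would sum this over $t=1,\dots,T$. The $Z$-terms telescope: $\sum_{t=1}^T(Z_{t-1}^j-Z_t^j)=Z_0^j-Z_T^j=-Z_T^j$, using $Z_0^j=0$ (which follows directly from the definition \eqref{eq:def_Zt} since $W_0^j\equiv1$). The remaining terms contribute $T\mu\big(\tfrac{9de^{2B}}{2\eta}+32B^2\big)$, so $\sum_{t=1}^T\EE[\BE_t^j-\FG_t^j(\btheta_t^j)]\le\mu T\big(\tfrac{9de^{2B}}{2\eta}+32B^2\big)-\mu^{-1}Z_T^j$. Averaging over $j=1,2$ and dividing by $2$ for the regret decomposition constant yields exactly the claimed bound
\[
\sum_{t=1}^T\EE\bigg[r_*(x_t, a_t^*)-\frac{r_*(x_t, a_t^1)+r_*(x_t, a_t^2)}2\bigg]\le\bigg(\frac{9de^{2B}}{2\eta}+32B^2\bigg)\mu T-\frac{Z_T^1+Z_T^2}{2\mu}.
\]
There is no serious obstacle here since the substantive work is packaged into Lemma~\ref{lemma:LS-FG_main} (proved in Appendix~\ref{subsection:bound_LS-FG}) and the decoupling Lemma~\ref{lemma:decoupling}; the only points requiring mild care are verifying $Z_0^j=0$, checking that the chosen $\lambda$ is admissible (positive), and making sure the per-$j$ bounds are combined with the correct factor of $1/2$ from the regret decomposition. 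The main conceptual subtlety — that the Feel-Good term decouples despite depending on the trajectory through $a_t^{3-j}$ — has already been resolved via the independence of $\btheta_t^1$ and $\btheta_t^2$ built into Algorithm~\ref{algorithm:FGTS.CDB}.
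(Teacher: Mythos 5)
Your proposal is correct and follows essentially the same route as the paper's own proof of Lemma~\ref{lemma:main}: the regret decomposition into $\BE_t^j$ and $\FG_t^j$, decoupling via Lemma~\ref{lemma:decoupling} together with the conditional independence of $\btheta_t^j$ and $a_t^{3-j}$, the choice $\lambda=9\mu e^{2B}/(2\eta)$ to match Lemma~\ref{lemma:LS-FG_main}, and the telescoping sum with $Z_0^j=0$. The only difference is organizational (you cite the Section~\ref{section:proof_highlights} displays where the paper re-derives them inline), which does not change the argument.
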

We need the following lemma to characterize $Z_T^j$ in Lemma \ref{lemma:main}:
\begin{lemma}[Restatement of Lemma \ref{lemma:ZT_bound_main}]\label{lemma:ZT_bound}
Assume that the log of the prior distribution $p_0$ is $L$-Lipschitz. Then for $j=1, 2$, we have
\[
Z_T^j\ge\log p_0(\btheta_*)-d-d\log\frac{L+2(\mu+\eta)T}{\sqrt d}.
\]
\end{lemma}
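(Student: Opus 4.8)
The plan is to lower-bound $Z_T^j=\EE_{S_T}\log\EE_{\tilde\btheta\sim p_0}W_T^j(\tilde\btheta\mid S_T)$ by the standard ``shrink the normalizing integral to a tiny neighbourhood of the truth'' argument of \citet[Section~5.2]{zhang2022feel}, adapted to the dueling loss $L^j$. Since $W_T^j(\tilde\btheta\mid S_T)=\exp\big(-\sum_{i=1}^T\Delta L^j(\tilde\btheta,x_i,a_i^1,a_i^2,y_i)\big)$ with $\Delta L^j$ the increment of the loss between $\tilde\btheta$ and $\btheta_*$, it will suffice to show that $L^j(\cdot,x,a^1,a^2,y)$ is Lipschitz in $\btheta$, and then to integrate over a small neighbourhood of $\btheta_*$ on which both $W_T^j$ and the prior $p_0$ stay close to their values at $\btheta_*$.

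First I would bound the Lipschitz constant of $L^j$ in $\btheta$, uniformly in $(x,a^1,a^2,y)$: the logistic term $\eta\,\sigma(y\langle\btheta,\bphi(x,a^1)-\bphi(x,a^2)\rangle)$ has gradient of norm $\le\eta\cdot|\sigma'|\cdot\|\bphi(x,a^1)-\bphi(x,a^2)\|_2\le2\eta$ since $|\sigma'|\le1$ and $\|\bphi\|_2\le1$, while the Feel-Good term $-\mu\max_{a'\in\cA}\langle\btheta,\bphi(x,a')-\bphi(x,a^{3-j})\rangle$ is a pointwise maximum of affine functions whose slopes have norm $\le2$, hence is $2\mu$-Lipschitz. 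Therefore $|\Delta L^j(\tilde\btheta,\cdot)|\le2(\eta+\mu)\|\tilde\btheta-\btheta_*\|_2$ for every data point, which yields the data-independent bound $W_T^j(\tilde\btheta\mid S_T)\ge\exp\big(-2(\eta+\mu)T\|\tilde\btheta-\btheta_*\|_2\big)$.

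Next I would choose the neighbourhood. Set $M\coloneqq L+2(\mu+\eta)T$ and $\rho'\coloneqq\sqrt d/(2M)$, and let $U$ be an axis-aligned cube of side $2\rho'$ placed so that $U\subseteq\{\btheta:\|\btheta\|_2\le B\}$ and $\|\tilde\btheta-\btheta_*\|_2\le2\rho'\sqrt d$ for all $\tilde\btheta\in U$ (take the cube centered at a suitable point of the segment $\{s\btheta_*:s\in[0,1]\}$, which is possible since $\rho'\sqrt d$ is of order $1/T$, hence $\le B$). On $U$, the previous step gives $W_T^j(\tilde\btheta\mid S_T)\ge\exp(-4(\eta+\mu)\rho'\sqrt d\,T)$, and Assumption~\ref{assumption:lipschitz} gives $p_0(\tilde\btheta)\ge p_0(\btheta_*)\exp(-2L\rho'\sqrt d)$; hence for every fixed $S_T$,
\[
\EE_{\tilde\btheta\sim p_0}W_T^j(\tilde\btheta\mid S_T)\ge\int_U p_0(\tilde\btheta)\,W_T^j(\tilde\btheta\mid S_T)\,\rd\tilde\btheta\ge p_0(\btheta_*)\exp\!\big(-2\rho'\sqrt d\,M\big)(2\rho')^d.
\]
Since this lower bound does not depend on $S_T$, the outer $\EE_{S_T}$ is vacuous, and taking logarithms yields
\[
Z_T^j\ge\log p_0(\btheta_*)-2\rho'\sqrt d\,M+d\log(2\rho')=\log p_0(\btheta_*)-d-d\log\frac{M}{\sqrt d},
\]
where the last equality is exactly the choice $\rho'=\sqrt d/(2M)$; this holds for $j=1,2$, which is the claim.

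The step needing genuine care is the geometric placement of the cube: when $\btheta_*$ lies near the boundary of $\{\|\btheta\|_2\le B\}$ the cube cannot be centered at $\btheta_*$, so one shifts the center inward by at most $\rho'\sqrt d$, and this is precisely why the diameter estimate is $2\rho'\sqrt d$ (not $\rho'\sqrt d$) and why the extra factor of two sits inside $\rho'$ --- these constants are what make the final expression land exactly on the stated bound. The remaining ingredients --- the gradient estimates, the Lipschitzness of the non-smooth $\max$, and bounding an integral below by an infimum times a volume --- are routine; notably, no decoupling or other probabilistic argument enters here, since this is a purely deterministic lower bound on the log-normalizer and the expectation over $S_T$ plays no role.
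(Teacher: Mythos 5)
Your proof is correct and lands exactly on the stated bound, but it takes a genuinely different (and simpler) route than the paper. The paper's proof splits $\Delta L^j$ into its conditional expectation given $(x_t,a_t^1,a_t^2)$ and a mean-zero deviation: the expectation part is controlled by a second-order Taylor expansion of $\sigma$ together with the bound \eqref{eq:FGtj_bound} on the Feel-Good term, giving $\tfrac{\eta}{2}\|\btheta-\btheta_*\|_2^2+2\mu\|\btheta-\btheta_*\|_2$, while the deviation is bounded via $\EE\,\epsilon_t\le\tfrac12$, so the expectation over $S_T$ genuinely enters and the choice $\delta\le2$ is needed to absorb the quadratic term. You instead use a single pointwise Lipschitz estimate, $|\Delta L^j|\le 2(\eta+\mu)\|\tilde\btheta-\btheta_*\|_2$ for every data point, which makes the lower bound on $W_T^j$ fully deterministic and renders the outer expectation vacuous; after coarsening, the paper's exponent is the same $2(\eta+\mu)T\delta$, so nothing is lost, and your constants work out to exactly the claimed inequality. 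What the paper's finer decomposition buys is a quadratic (rather than linear) dependence on $\eta\delta$ that could yield sharper constants under a different tuning, but it is immediately discarded; what your route buys is brevity and, notably, an explicit treatment of the prior's support --- by shifting the cube inward so that $U\subseteq\{\|\btheta\|_2\le B\}$ you correctly pay the factor $2$ in the distance bound, whereas the paper centers its cube at $\btheta_*$ and silently assumes the Lipschitz lower bound on $p_0$ applies on the whole neighbourhood even when $\btheta_*$ sits on the boundary. One piece of fine print: your placement needs $\rho'\sqrt d=d/(2(L+2(\mu+\eta)T))\le B$, i.e.\ roughly $T\gtrsim d/B$ under the theorem's tuning; the paper's own proof has an analogous unstated restriction hidden in its choice $\delta=\min\{d/(L+2(\mu+\eta)T),B,2\}$, so this is not a gap relative to the paper, but it is worth stating the condition explicitly.
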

The proofs of Lemma \ref{lemma:main} and Lemma \ref{lemma:ZT_bound} are presented in Section \ref{subsection:main} and Section \ref{subsection:ZT_bound}, respectively. We now present the proof of Theorem \ref{theorem:main}:
\begin{proof}[Proof of Theorem \ref{theorem:main}]
Combining Lemma \ref{lemma:main} and Lemma \ref{lemma:ZT_bound}, we have
\begin{align*}
&\sum_{t=1}^T\EE\bigg[r_*(x_t, a_t^*)-\frac{r_*(x_t, a_t^1)+r_*(x_t, a_t^2)}2\bigg]\\
&\le\frac{9de^{2B}}{2\eta}\bigg(1+\frac{64B^2\eta}{de^{2B}}\bigg)\mu T+\frac1\mu\bigg[d-\log p_0(\btheta_*)+d\log\frac{L+2(\mu+\eta)T}{\sqrt d}\bigg]\\
&\le\frac{9de^{2B}}{2\eta}\bigg(1+\frac{32}{e^2}\bigg)\mu T+\frac1\mu\bigg[d-\log p_0(\btheta_*)+d\log\frac{L+2(\mu+\eta)T}{\sqrt d}\bigg]\\
&\le\frac{25e^{2B}dT\mu}{\eta}+\frac1\mu\bigg[d-\log p_0(\btheta_*)+d\log\frac{L+2(\mu+\eta)T}{\sqrt d}\bigg]\\
&=10e^Bd\sqrt T\bigg[2-\frac{\log p_0(\btheta_*)}{d}+\log\bigg(\frac{L}{\sqrt d}+\frac{1}{5e^B}\sqrt{\frac Td}+\frac{T}{2\sqrt d}\bigg)\bigg],
\end{align*}
where the first inequality holds due to Lemma \ref{lemma:main} and Lemma \ref{lemma:ZT_bound}, the second inequality holds because $B/e^B\le1/e$, the third inequality holds because $9/2\cdot(1+32/e^2)\le25$, and the equality holds by substituting $\eta=0.25$ and $\mu=1/(10e^B\sqrt T)$.
\end{proof}

\subsection{Proof of Theorem \ref{theorem:lower_bound}}

In this section, we provide the proof for Theorem~\ref{theorem:lower_bound}.
Instead of applying using the proof that is similar to the case of bounded $\ell_\infty$ norm, which is the approach used in the proof of Theorem 3.1 of~\citep{bengs2022stochastic}, we follow the proof for the case of bounded $\ell_2$-norm in the standard contextual bandit setting~\citep{lattimore2020bandit}.

\noindent\textbf{Notations.} For $\btheta\in\Theta$, let $\PP_{\btheta}$ and $\EE_{\btheta}$ be the distribution and expectation over the trajectory generated by the algorithm, respectively. Let $\phi_{ti}^j$ be the shorthand notation for $(\bphi(x_t, a_t^j))_i$. For two probability distributions $\PP_1$ and $\PP_2$, let $\DKL(\PP_1||\PP_2)$ be their KL-divergence.

\begin{proof}[Proof of Theorem~\ref{theorem:lower_bound}]
    We fix $i\in[d]$, and define
    \begin{align*}
    \tau_i\coloneqq T\wedge\min\bigg\{\tau:\sum_{t=1}^\tau[(\phi_{ti}^1)^2+(\phi_{ti}^2)^2]\ge\frac{2T}{d}\bigg\}
    \end{align*}
    For $x\in\{\pm1\}$, we define
    \begin{align*}
    U_{\btheta, i}(x)=\EE_{\btheta}\bigg[\sum_{t=1}^{\tau_i}\bigg(\frac1{\sqrt d}-\phi_{ti}^1x\bigg)^2+\sum_{t=1}^{\tau_i}\bigg(\frac1{\sqrt d}-\phi_{ti}^2x\bigg)^2\bigg],
    \end{align*}
    We fix $\btheta\in\Theta=\{\pm\Delta\}^d$ for some $\Delta$ to be determined. For the fixed $i$, define $\btheta'$ to be the vector such that $\theta'_i=-\theta_i$ and $\theta'_j=\theta_j$ for all $j\neq i$. We then have
    \begin{align*}
    U_{\btheta, i}(\sign(\theta_i))+U_{\btheta', i}(\sign(\theta'_i))&=\underbrace{U_{\btheta, i}(\sign(\theta_i))-U_{\btheta', i}(\sign(\theta_i))}_{I_1}+\underbrace{U_{\btheta', i}(\sign(\theta_i))+U_{\btheta', i}(\sign(\theta'_i))}_{I_2}.
    \end{align*}
    For $I_1$, note that
    \begin{align*}
    &\sum_{t=1}^{\tau_i}\bigg(\frac1{\sqrt d}-\phi_{ti}^1\sign(\theta_i)\bigg)^2+\sum_{t=1}^{\tau_i}\bigg(\frac1{\sqrt d}-\phi_{ti}^2\sign(\theta_i)\bigg)^2\\
    &=\frac{2\tau_i}{d}-\frac{2\sign(\theta_i)}{\sqrt d}\sum_{t=1}^{\tau_i}(\phi_{ti}^1+\phi_{ti}^2)+\sum_{t=1}^{\tau_i}[(\phi_{ti}^1)^2+(\phi_{ti}^2)^2]\\
    &\le\frac{4\tau_i}{d}+2\sum_{t=1}^{\tau_i}[(\phi_{ti}^1)^2+(\phi_{ti}^2)^2]\\
    &\le\frac{4T}d+2\cdot\frac{2T}{d}+2\cdot(1+1)=\frac{8T}{d}+4,
    \end{align*}
    where first inequality holds due to Cauchy-Schwarz inequality, and the second inequality holds due to the definition of $\tau_i$ and $|\phi_{ti}^j|\le1$. We thus bound $I_1$ as follows:
    \begin{align}
    I_1&=\EE_{\btheta}\bigg[\sum_{t=1}^{\tau_i}\bigg(\frac1{\sqrt d}-\phi_{ti}^1\sign(\theta_i)\bigg)^2+\sum_{t=1}^{\tau_i}\bigg(\frac1{\sqrt d}-\phi_{ti}^2\sign(\theta_i)\bigg)^2\bigg]\nonumber\\
    &\qquad-\EE_{\btheta'}\bigg[\sum_{t=1}^{\tau_i}\bigg(\frac1{\sqrt d}-\phi_{ti}^1\sign(\theta_i)\bigg)^2+\sum_{t=1}^{\tau_i}\bigg(\frac1{\sqrt d}-\phi_{ti}^2\sign(\theta_i)\bigg)^2\bigg]\nonumber\\
    &\ge-4(2T/d+1)\sqrt{\DKL(\Ber(\PP_{\theta}||\PP_{\theta'}))/2}\nonumber\\
    &\ge-(2T/d+1)\sqrt{\EE_{\btheta}\bigg[\sum_{t=1}^{\tau_i}\langle\btheta-\btheta', \bphi(x_t, a_t^1)-\bphi(x_t, a_t^2)\rangle^2\bigg]}\nonumber\\
    &\ge-2\Delta(2T/d+1)\sqrt{2\EE_{\btheta}\bigg[\sum_{t=1}^{\tau_i}((\phi_{ti}^1)^2+(\phi_{ti}^2)^2)\bigg]}\nonumber\\
    &\ge-4\Delta(2T/d+1)\sqrt{T/d+1}\nonumber\\
    &\ge-12\sqrt2\Delta(T/d)^{1.5},\label{eq:bound_I1}
    \end{align}
    where the first ineequality holds due to Pinker's inequality, the second inequality holds due to Lemma \ref{lemma:DKL}, the third inequality holds due to the definition of $\btheta'$ and because $|\phi_{ti}^1-\phi_{ti}^2|^2\le2(\phi_{ti}^1)^2+2(\phi_{ti}^2)^2$, the fourth inequality holds due to the definition of $\tau_i$ and $|\phi_{ti}^j|\le1$, and the last inequality holds because $T\ge d$.
    For $I_2$, we have
    \begin{align}
    I_2&=\EE_{\btheta'}\bigg[\sum_{t=1}^{\tau_i}\bigg(\frac1{\sqrt d}-\phi_{ti}^1\bigg)^2+\sum_{t=1}^{\tau_i}\bigg(\frac1{\sqrt d}-\phi_{ti}^2\bigg)^2\bigg]+\EE_{\theta'}\bigg[\sum_{t=1}^{\tau_i}\bigg(\frac1{\sqrt d}+\phi_{ti}^1\bigg)^2+\sum_{t=1}^{\tau_i}\bigg(\frac1{\sqrt d}+\phi_{ti}^2\bigg)^2\bigg]\nonumber\\
    &=\frac{4\tau_i}{d}+2\EE_{\btheta'}\bigg[\sum_{t=1}^{\tau_i}[(\phi_{ti}^1)^2+(\phi_{ti}^2)^2]\bigg]\ge\frac{4T}{d},\label{eq:bound_I2}
    \end{align}
    where the inequality holds due to the definition of $\tau_i$. Combining \eqref{eq:bound_I1} and \eqref{eq:bound_I2}, we have
    \begin{align*}
    U_{\btheta, i}(\sign(\theta_i))+U_{\btheta', i}(\sign(\theta_i'))\ge\frac{4T}{d}-12\sqrt2\Delta(T/d)^{1.5}.
    \end{align*}
    Thus, by pairing each $\btheta$ with the corresponding $\btheta'$ for each $i$, we have
    \begin{align*}
    \frac1{|\Theta|}\sum_{\btheta\in\Theta}\sum_{i=1}^dU_{\btheta, i}(\sign(\theta_i))\ge2T-6\sqrt2\Delta T^{1.5}d^{-0.5}.
    \end{align*}
    Therefore, there exists $\btheta\in\Theta$ such that
    \begin{align}
    \sum_{i=1}^dU_{\btheta, i}(\sign(\theta_i))\ge2T-6\sqrt2\Delta T^{1.5}d^{-0.5}.\label{eq:sum_U}
    \end{align}
    For this $\btheta$, the regret is
    \begin{align}
    \regret(T)&=\frac\Delta2\cdot\EE_{\btheta}\bigg[\sum_{t=1}^T\sum_{i=1}^d\bigg(\frac1{\sqrt d}-\phi_{ti}^1\sign(\theta_i)\bigg)\bigg]+\frac\Delta2\cdot\EE_{\btheta}\bigg[\sum_{t=1}^T\sum_{i=1}^d\bigg(\frac1{\sqrt d}-\phi_{ti}^2\sign(\theta_i)\bigg)\bigg].\label{eq:regret_theta}
    \end{align}
    Note that
    \begin{align}
    \sum_{i=1}^d\bigg(\frac1{\sqrt d}-\phi_{ti}^j\sign(\theta_i)\bigg)&=\sum_{i=1}^d\bigg(\frac1{2\sqrt d}-\phi_{ti}^j\sign(\theta_i)\bigg)+\frac{\sqrt d}2\nonumber\\
    &\ge\sum_{i=1}^d\bigg(\frac1{2\sqrt d}-\phi_{ti}^j\sign(\theta_i)\bigg)+\frac{\sqrt d}2\sum_{i=1}^d(\phi_{ti}^j)^2\nonumber\\
    &=\frac{\sqrt d}{2}\sum_{i=1}^d\bigg(\frac1{\sqrt d}-\phi_{ti}^j\sign(\theta_i)\bigg)^2,\label{eq:to_square}
    \end{align}
    where the inequality holds because $\sum_{i=1}^d(\phi_{ti}^j)^2\le1$. Plugging \eqref{eq:to_square} into \eqref{eq:regret_theta}, we have
    \begin{align*}
    \regret(T)&\ge\frac{\Delta\sqrt d}4\sum_{i=1}^d\EE_{\btheta}\bigg[\sum_{t=1}^T\bigg(\frac1{\sqrt d}-\phi_{ti}^1\sign(\theta_i)\bigg)^2+\sum_{t=1}^T\bigg(\frac1{\sqrt d}-\phi_{ti}^1\sign(\theta_i)\bigg)^2\bigg]\\
    &\ge\frac{\Delta\sqrt d}4\sum_{i=1}^d\EE_{\btheta}\bigg[\sum_{t=1}^{\tau_i}\bigg(\frac1{\sqrt d}-\phi_{ti}^1\sign(\theta_i)\bigg)^2+\sum_{t=1}^{\tau_i}\bigg(\frac1{\sqrt d}-\phi_{ti}^1\sign(\theta_i)\bigg)^2\bigg]\\
    &=\frac{\Delta\sqrt d}4\sum_{i=1}^dU_{\btheta, i}(\sign(\theta_i))\\
    &\ge\frac{\Delta\sqrt d}{2}(T-3\sqrt 2\Delta T^{1.5}d^{-0.5}),
    \end{align*}
    where the second inequality holds because $\tau_i\le T$, and the last inequality holds due to \eqref{eq:sum_U}. Let $\Delta=\frac16\sqrt{\frac{d}{2T}}$, then
    \begin{align*}
    \regret(T)\ge\frac{d\sqrt T}{24\sqrt2}.
    \end{align*}
\end{proof}

\subsection{Proof of Theorem \ref{theorem:nonlinear}}\label{section:proof_nonlinear}

Similar to the proof of Theorem \ref{theorem:main}, we make the following notations:
\begin{gather*}
\BE_t^j\coloneqq\Delta r_{\theta_t^j}(x_t, a_t^j, a_t^{3-j})-\Delta r_{\theta_*}(x_t, a_t^j, a_t^{3-j}),\\
\FG_t^j(\theta)=\max_{a\in\cA}\Delta r_{\theta}(x_t, a, a_t^{3-j})-\Delta r_{\theta_*}(x_t, a_t^*, a_t^{3-j}),\\
\LS_t(\theta)=(\Delta r_\theta(x_t, a_t^j, a_t^{3-j})-\Delta r_{\theta_*}(x_t, a_t^j, a_t^{3-j}))^2.
\end{gather*}
We first present the following restatement of Theorem \ref{theorem:nonlinear}:
\begin{theorem}[Restatement of Theorem \ref{theorem:nonlinear}]\label{theorem:nonlinear_restate}
Assume that the hyperparameters are selected as $\eta=0.25$, and
\[
\mu=\frac{e^{-B}}{5}\sqrt{\frac{\log1/[p_0(\theta_*)\bar\mu(\{\theta\in\Theta:d(\theta, \theta_*)\le2/(L_0T)\})]}{T\dc}},
\]
We also assume that the $\log p_0$ is $L$-Lipschitz and that $r_\theta$ is $L_0$-Lipschitz in $\theta$. Then the regret is bound by
\begin{align*}
\EE[\regret(T)]&\le4+5e^B\sqrt{T\dc\log1/[p_0(\theta_*)\bar\mu(\{\theta\in\Theta:d(\theta, \theta_*)\le2/(L_0T)\})]}\\
&\qquad\cdot\bigg[2+\frac{1+2L/(L_0T)}{\log1/[p_0(\theta_*)\bar\mu(\{\theta\in\Theta:d(\theta, \theta_*)\le2/(L_0T)\})]}\bigg].
\end{align*}
\end{theorem}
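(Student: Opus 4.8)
The plan is to mirror the three-part structure of the proof of Theorem~\ref{theorem:main}: decompose the regret and decouple the sampled models from the played arms; bound the resulting ``least-squares minus Feel-Good'' quantity by a telescoping difference of potentials; and lower bound the terminal potential $Z_T^j$, now via a small-ball estimate for the prior mass near $\theta_*$ in place of a Euclidean volume computation. The regret identity
\[
r_*(x_t, a_t^*) - \frac{r_*(x_t, a_t^1) + r_*(x_t, a_t^2)}{2} = \frac12\bigl[\BE_t^1 + \BE_t^2 - \FG_t^1(\theta_t^1) - \FG_t^2(\theta_t^2)\bigr]
\]
is purely algebraic, so it holds verbatim with $\BE_t^j,\FG_t^j,\LS_t$ as defined above the theorem. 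Both decoupling facts of Section~\ref{subsection:RegretDecomp} survive the move to general rewards: applying the definition of the decoupling coefficient $\dc$ with $P$ the conditional law of $(\theta_t^j, a_t^1, a_t^2)$ gives, for any $\lambda>0$, $\EE[\BE_t^j]\le\lambda\dc+\tfrac1{4\lambda}\EE_{S_{t-1},x_t,a_t^1,a_t^2}\bigl[\EE_{\tilde\theta\sim p^j(\cdot|S_{t-1})}\LS_t(\tilde\theta)\bigr]$; and since $\FG_t^j(\cdot)$ depends on the trajectory only through $a_t^{3-j}$, which is a deterministic function of $\theta_t^{3-j}$, and $\theta_t^j\perp\theta_t^{3-j}$ given $S_{t-1}$ under Algorithm~\ref{algorithm:FGTS.CDB}, we get $\EE[\FG_t^j(\theta_t^j)]=\EE_{S_{t-1},x_t,a_t^1,a_t^2}[\EE_{\tilde\theta\sim p^j(\cdot|S_{t-1})}\FG_t^j(\tilde\theta)]$. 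Neither step uses linearity.

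Next I would establish the nonlinear analog of Lemma~\ref{lemma:LS-FG_main},
\[
\EE_{S_{t-1},x_t,a_t^1,a_t^2}\EE_{\tilde\theta\sim p^j(\cdot|S_{t-1})}\Bigl[\tfrac{e^{-2B}\eta}{18\mu}\LS_t(\tilde\theta)-\FG_t^j(\tilde\theta)\Bigr]\le\mu^{-1}(Z_{t-1}^j-Z_t^j)+32\mu B^2,
\]
whose proof copies the linear one: it invokes only the potential identity $Z_t^j-Z_{t-1}^j=\EE_{S_t}\log\EE_{\tilde\theta\sim p^j(\cdot|S_{t-1})}\exp(-\Delta L^j(\tilde\theta,x_t,a_t^1,a_t^2,y_t))$, the uniform bound $|\Delta r_\theta|\le 2B$ from Assumption~\ref{assumption:bounded}, and elementary inequalities for $\sigma(z)=\log(1+e^{-z})$. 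Taking $\lambda=9\mu e^{2B}/(2\eta)$ to match the $\LS_t$ coefficient, adding the two decoupling bounds, telescoping over $t$ with $Z_0^j=0$, and averaging over $j=1,2$ produces the nonlinear counterpart of Lemma~\ref{lemma:main},
\[
\EE[\regret(T)]\le\Bigl(\frac{9e^{2B}\dc}{2\eta}+32B^2\Bigr)\mu T-\frac{Z_T^1+Z_T^2}{2\mu}.
\]

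The one genuinely new ingredient is the lower bound on $Z_T^j$. I would restrict the outer expectation over $\tilde\theta\sim p_0$ in the definition of $Z_T^j$ to the ball $\{\theta\in\Theta:d(\theta,\theta_*)\le\epsilon\}$. For $\theta$ in this ball, $L_0$-Lipschitzness of $r_\theta$ in $\theta$ gives $|\Delta r_\theta(x,a^1,a^2)-\Delta r_{\theta_*}(x,a^1,a^2)|\le 2L_0\epsilon$ for all $x,a^1,a^2$; since $\sigma$ is $1$-Lipschitz and taking maxima is $1$-Lipschitz in sup-norm, this forces $|\Delta L^j|\le 2L_0\epsilon(\eta+\mu)$, hence $W_T^j(\theta|S_T)\ge e^{-2L_0\epsilon(\eta+\mu)T}$, while $L$-Lipschitzness of $\log p_0$ gives $p_0(\theta)\ge p_0(\theta_*)e^{-L\epsilon}$. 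Combining these,
\[
Z_T^j\ge\log p_0(\theta_*)+\log\bar\mu(\{\theta\in\Theta:d(\theta,\theta_*)\le\epsilon\})-L\epsilon-2L_0\epsilon(\eta+\mu)T.
\]
Choosing $\epsilon=2/(L_0T)$ turns the last term into $4(\eta+\mu)$ and the $L\epsilon$ term into $2L/(L_0T)$; writing $\kappa\coloneqq-\log\bigl(p_0(\theta_*)\bar\mu(\{\theta\in\Theta:d(\theta,\theta_*)\le 2/(L_0T)\})\bigr)$, we obtain $-Z_T^j\le\kappa+2L/(L_0T)+4(\eta+\mu)$.

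Finally, substitute this into the bound of the second step and plug in $\eta=1/4$ and $\mu=\tfrac{e^{-B}}{5}\sqrt{\kappa/(T\dc)}$: the term $\tfrac{9e^{2B}\dc}{2\eta}\mu T=18e^{2B}\dc\mu T$ equals $\tfrac{18}{5}e^B\sqrt{T\dc\kappa}$; the term $32B^2\mu T$ is at most $\tfrac{32}{5}e^{-2}e^B\sqrt{T\dc\kappa}$ using $B^2e^{-2B}\le e^{-2}$ (and $\dc\ge1$ in the cases of interest); and $-(Z_T^1+Z_T^2)/(2\mu)\le(\kappa+1+2L/(L_0T))/\mu+4=5e^B\sqrt{T\dc\kappa}\bigl(1+(1+2L/(L_0T))/\kappa\bigr)+4$. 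Collecting the $\sqrt{T\dc\kappa}$ coefficients gives $\tfrac{18}{25}+\tfrac{32}{25}e^{-2}+1<2$, so $\EE[\regret(T)]\le 4+5e^B\sqrt{T\dc\kappa}\,\bigl[2+(1+2L/(L_0T))/\kappa\bigr]$, which is exactly the claimed bound. I expect the main obstacle to be confirming that neither the proof of Lemma~\ref{lemma:LS-FG_main} nor the decoupling argument secretly relies on the inner-product structure rather than just boundedness of $\Delta r_\theta$ and the shape of $\sigma$; once that is checked, the small-ball estimate is routine, and the only remaining care is bookkeeping the numerical constants so they fit under the advertised $4$ and $5e^B$.
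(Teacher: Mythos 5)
Your proposal is correct and follows essentially the same route as the paper: the same regret decomposition and decoupling via the definition of $\dc$, direct reuse of Lemma \ref{lemma:bound_LS-FG} (which indeed needs only boundedness of $\Delta r_\theta$ and properties of $\sigma$), telescoping of the potentials, and the small-ball lower bound on $Z_T^j$ with radius $2/(L_0T)$ followed by the same choice of $\mu$. The only (harmless) difference is in the $Z_T^j$ step, where you bound $|\Delta L^j|$ directly by Lipschitzness of $\sigma$ and of the max, whereas the paper splits $\Delta L^j$ into its conditional mean and deviation before applying the Lipschitz bounds; both yield exactly the statement of Lemma \ref{lemma:ZT_bound_nonlinear}, and your constant bookkeeping lands under the advertised bound as in the paper.
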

Note that Lemma \ref{lemma:bound_LS-FG} does not require linearity of the reward function and can be directly applied in the proof. We only require the following lemma as a counterpart of Lemma \ref{lemma:ZT_bound}:
\begin{lemma}\label{lemma:ZT_bound_nonlinear}
Assume that $\log p_0$ is $L$-Lipschitz and that $r_\theta$ is $L_0$-Lipschitz in $\theta$. Then for $j=1, 2$, we have
\[
Z_T^j\ge\log p_0(\theta_*)+\log\bar\mu(\{\theta\in\Theta:d(\theta, \theta_*)\le2/(L_0T)\})-\frac{2L}{L_0T}-4(\eta+\mu).
\]
\end{lemma}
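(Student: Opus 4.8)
\textbf{Proof proposal for Lemma~\ref{lemma:ZT_bound_nonlinear}.}
The plan is to mirror the proof of Lemma~\ref{lemma:ZT_bound} (equivalently Section~5.2 of \citet{zhang2022feel}): I would lower-bound the prior expectation defining $Z_T^j$ by restricting it to a small ball around $\theta_*$ on which the weight $W_T^j$ can be controlled uniformly, regardless of the realized history $S_T$. Set $\mathcal{B}\coloneqq\{\theta\in\Theta:d(\theta,\theta_*)\le 2/(L_0T)\}$. Since $W_T^j(\cdot\,|\,S_T)\ge0$ and $p_0\ge0$,
\[
\EE_{\tilde\theta\sim p_0}W_T^j(\tilde\theta\,|\,S_T)\;\ge\;\int_{\mathcal{B}}W_T^j(\theta\,|\,S_T)\,p_0(\theta)\,\rd\bar\mu(\theta),
\]
so it suffices to (i) obtain a lower bound on $W_T^j(\theta\,|\,S_T)$ valid for all $\theta\in\mathcal{B}$ and \emph{independent of} $S_T$, and (ii) lower-bound $p_0$ on $\mathcal{B}$.

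For (i), I would control $\Delta L^j$ pointwise on $\mathcal{B}$. By the $L_0$-Lipschitzness of $r_\theta$ in $\theta$, for every $\theta\in\mathcal{B}$ and every $(x,a)$ we have $|r_\theta(x,a)-r_{\theta_*}(x,a)|\le L_0\,d(\theta,\theta_*)\le 2/T$, hence $|\Delta r_\theta(x,a^1,a^2)-\Delta r_{\theta_*}(x,a^1,a^2)|\le 4/T$ for all $a^1,a^2$. Now $\sigma$ is $1$-Lipschitz, so the logistic part of $\Delta L^j$ is at most $4\eta/T$ in absolute value; and since the max operator is nonexpansive,
\[
\Big|\max_{a'\in\cA}\Delta r_\theta(x,a',a^{3-j})-\max_{a'\in\cA}\Delta r_{\theta_*}(x,a',a^{3-j})\Big|\le\max_{a'\in\cA}\big|\Delta r_\theta(x,a',a^{3-j})-\Delta r_{\theta_*}(x,a',a^{3-j})\big|\le\frac{4}{T},
\]
so the Feel-Good part is at most $4\mu/T$ in absolute value. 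Thus $|\Delta L^j(\theta,x,a^1,a^2,y)|\le 4(\eta+\mu)/T$ for all $\theta\in\mathcal{B}$ and every round, and summing the $T$ terms in the definition of $W_T^j$ gives $\big|\sum_{i=1}^T\Delta L^j(\theta,x_i,a_i^1,a_i^2,y_i)\big|\le 4(\eta+\mu)$, i.e. $W_T^j(\theta\,|\,S_T)\ge e^{-4(\eta+\mu)}$ for all $\theta\in\mathcal{B}$, uniformly over $S_T$.

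For (ii), the $L$-Lipschitzness of $\log p_0$ gives, for $\theta\in\mathcal{B}$, $\log p_0(\theta)\ge\log p_0(\theta_*)-L\,d(\theta,\theta_*)\ge\log p_0(\theta_*)-2L/(L_0T)$, so $p_0(\theta)\ge p_0(\theta_*)e^{-2L/(L_0T)}$. Plugging both bounds into the displayed inequality,
\[
\EE_{\tilde\theta\sim p_0}W_T^j(\tilde\theta\,|\,S_T)\;\ge\;e^{-4(\eta+\mu)}\,p_0(\theta_*)\,e^{-2L/(L_0T)}\,\bar\mu(\mathcal{B}),
\]
and since the right-hand side is a deterministic constant, taking $\log$ and then $\EE_{S_T}$ yields exactly the claimed bound $Z_T^j\ge\log p_0(\theta_*)+\log\bar\mu(\mathcal{B})-2L/(L_0T)-4(\eta+\mu)$. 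The only step that needs a little care is the Feel-Good term: because it contains a maximum over a possibly infinite action set, one must invoke nonexpansiveness of the max rather than Lipschitzness of a single reward evaluation; and one must check that the resulting lower bound on $W_T^j$ genuinely does not depend on $S_T$, which is what lets it be pulled out of $\EE_{S_T}$ with no concentration argument. Everything else is routine measure bookkeeping.
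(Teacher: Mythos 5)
Your proof is correct and delivers exactly the claimed constant $4(\eta+\mu)$, but it takes a noticeably more elementary route in the one step that carries the content. The paper follows the same skeleton (restrict the prior integral to the ball $\{\theta : d(\theta,\theta_*)\le\delta\}$, control the exponent there uniformly over the history, and use $L$-Lipschitzness of $\log p_0$), but it controls the exponent by first decomposing $\Delta L^j$ into its conditional expectation over $y_t$ and a deviation term, bounding the expectation part minus $\mu\FG_t^j$ by a quadratic $\tfrac{L_0^2\eta}{2}d(\theta,\theta_*)^2$ (via the curvature of $\sigma$, as in the linear case), bounding $\FG_t^j(\theta)\le 2L_0 d(\theta,\theta_*)$ through the optimality of $a_t^*$, and bounding the deviation linearly, after which the radius $\delta=2/(L_0T)$ is chosen at the end (yielding $\tfrac{2\eta}{T}+2\eta+4\mu\le 4(\eta+\mu)$ using $T\ge1$). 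You instead bound $|\Delta L^j(\theta,x,a^1,a^2,y)|\le 4(\eta+\mu)/T$ pointwise on the ball of radius $2/(L_0T)$, using only the $1$-Lipschitzness of $\sigma$ and the nonexpansiveness of the maximum over actions, which bypasses the $I_1$/$I_2$ decomposition, the second-derivative argument, and any reference to the distribution of $y_t$ or to $a_t^*$; this is shorter and makes the uniformity in $S_T$ transparent, at the cost of not producing the quadratic-in-$d(\theta,\theta_*)$ refinement that the paper's intermediate bound provides (which is irrelevant for this lemma anyway, since the final radius makes the linear terms dominate). Both arguments are valid and give the same statement.
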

The proof of Lemma \ref{lemma:ZT_bound_nonlinear} is given in Appendix \ref{subsection:ZT_bound_nonlinear}. We now present the proof of Theorem \ref{theorem:nonlinear_restate}:
\begin{proof}[Proof of Theorem \ref{theorem:nonlinear_restate}]
$\BE_t^j$ can be bounded as
\begin{align}
\EE[\BE_t^j-\FG_t^j(\theta_t^j)]&\le\frac{9e^{2B}\mu\dc}{2\eta}+\EE_{S_{t-1}, x_t, a_t^1, a_t^2}\EE_{\tilde\theta\sim p^j(\cdot|S_{t-1})}\bigg[\frac{e^{-2B}\eta}{18\mu}\LS_t(\tilde\theta)-\FG_t^j(\tilde\theta)\bigg]\nonumber\\
&\le\frac{9e^{2B}\mu\dc}{2\eta}+\frac{Z_{t-1}^j-Z_t^j}{\mu}+32\mu B^2,\nonumber
\end{align}
where the first inequality holds due to the definition of $\dc$, and the second inequality holds due to Lemma \ref{lemma:bound_LS-FG}. Taking the sum over $t$ and substituting $\eta=0.25$, we have
\begin{align*}
&\sum_{t=1}^T\EE[\BE_t^j-\FG_t^j(\theta_t^j)]\le(18e^{2B}\dc+32B^2)\mu T+\frac{e^{-2B}\epsilon}{72\mu}-\frac{Z_T^j}{\mu}\le(18+32/e^2)e^{2B}\dc\mu T-\frac{Z_T^j}\mu\\
&\le25e^{2B}\dc\mu T+\frac1\mu+\frac{2L}{L_0T\mu}+4-\frac{\log p_0(\theta_*)}\mu+\frac{\log1/\bar\mu(\{\theta\in\Theta:d(\theta, \theta_*)\le2/(L_0T)\})}{\mu},
\end{align*}
where the second inequality holds because $e^{-B}B\le1/e$, the second inequality holds because $18+32/e^2\le25$, and the last inequality holds due to Lemma \ref{lemma:ZT_bound_nonlinear}. Taking
\begin{align*}
\mu=\frac{e^{-B}}{5}\sqrt{\frac{\log1/[p_0(\theta_*)\bar\mu(\{\theta\in\Theta:d(\theta, \theta_*)\le2/(L_0T)\})]}{T\dc}},
\end{align*}
then we have
\begin{align*}
\sum_{t=1}^T\EE[\BE_t^j-\FG_t^j(\theta_t^j)]&\le4+5e^B\sqrt{T\dc\log1/[p_0(\theta_*)\bar\mu(\{\theta\in\Theta:d(\theta, \theta_*)\le2/(L_0T)\})]}\\
&\qquad\cdot\bigg[2+\frac{1+2L/(L_0T)}{\log1/[p_0(\theta_*)\bar\mu(\{\theta\in\Theta:d(\theta, \theta_*)\le2/(L_0T)\})]}\bigg].
\end{align*}
\end{proof}

\section{Proof of Lemmas in Appendix \ref{section:proof_linear&nonlinear}}\label{section:proof_A&B}

\subsection{Proof of Lemma \ref{lemma:main}}\label{subsection:main}

In order to prove Lemma \ref{lemma:main}, we first present the following lemma, which connects $\LS_t$ and $\FG_t^j$ with the difference of the potential $Z_t^j$:
\begin{lemma}[Restatement of Lemma \ref{lemma:LS-FG_main}]\label{lemma:bound_LS-FG}
Under the same assumptions as Lemma \ref{lemma:main}, we have
\[
\EE_{S_{t-1}, x_t, a_t^1, a_t^2}\EE_{\tilde\btheta\sim p^j(\cdot|S_{t-1})}\bigg[\frac{e^{-2B}\eta}{18\mu}\LS_t(\tilde\btheta)-\FG_t^j(\tilde\btheta)\bigg]\le\mu^{-1}(Z_{t-1}^j-Z_t^j)+32\mu B^2.
\]
\end{lemma}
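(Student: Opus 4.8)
The plan is to turn the potential decrement $Z_{t-1}^j-Z_t^j$ into an expectation of a posterior moment, strip off the logarithm, exploit the conditional independence between the feedback $y_t$ and a fresh posterior draw, and finally reduce the whole statement to a pointwise inequality in $\btheta$ that splits into a logistic-loss estimate and a boundedness estimate for the Feel-Good term.

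\textbf{Step 1 (unfold the potential).} Write $\Delta\bphi_t\coloneqq\bphi(x_t,a_t^1)-\bphi(x_t,a_t^2)$. Since $a_t^*$ maximizes $\langle\btheta_*,\bphi(x_t,\cdot)\rangle$, the $\btheta_*$-part of the maximum in $L^j$ equals $\langle\btheta_*,\bphi(x_t,a_t^*)-\bphi(x_t,a_t^{3-j})\rangle$, so $-\Delta L^j(\btheta,x_t,a_t^1,a_t^2,y_t)=-\eta\big(\sigma(y_t\langle\btheta,\Delta\bphi_t\rangle)-\sigma(y_t\langle\btheta_*,\Delta\bphi_t\rangle)\big)+\mu\,\FG_t^j(\btheta)$. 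Feeding this into the identity $Z_t^j-Z_{t-1}^j=\EE_{S_t}\log\EE_{\tilde\btheta\sim p^j(\cdot|S_{t-1})}\exp(-\Delta L^j(\tilde\btheta,x_t,a_t^1,a_t^2,y_t))$ from the excerpt, multiplying the claimed inequality by $\mu$, and using that $\LS_t,\FG_t^j$ do not involve $y_t$, the statement reduces to
\[
\EE_{S_t}\EE_{\tilde\btheta}\Big[\tfrac{e^{-2B}\eta}{18}\LS_t(\tilde\btheta)-\mu\FG_t^j(\tilde\btheta)\Big]+\EE_{S_t}\log\EE_{\tilde\btheta}\exp\!\big(-\Delta L^j(\tilde\btheta,\ldots)\big)\le32\mu^2B^2.
\]

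\textbf{Step 2 (strip the log, decouple $y_t$, go pointwise).} Conditioning on $\cF\coloneqq(S_{t-1},x_t,\cA_t,a_t^1,a_t^2)$, the feedback obeys $\PP(y_t=y)=\exp(-\sigma(y\langle\btheta_*,\Delta\bphi_t\rangle))$ and is independent of the dummy draw $\tilde\btheta\sim p^j(\cdot|S_{t-1})$, whose law is $\cF$-measurable. Using $\log z\le z-1$ together with Tonelli, I would bound $\EE_{y_t|\cF}\log\EE_{\tilde\btheta}\exp(-\Delta L^j)\le\EE_{\tilde\btheta}\big[h_\cF(\langle\tilde\btheta,\Delta\bphi_t\rangle)\,e^{\mu\FG_t^j(\tilde\btheta)}\big]-1$, where $h_\cF(s)\coloneqq\EE_{y_t|\cF}e^{-\eta(\sigma(y_ts)-\sigma(y_ts_*))}$ and $s_*\coloneqq\langle\btheta_*,\Delta\bphi_t\rangle$. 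Since the prior lives on $\{\|\btheta\|_2\le B\}$, it then suffices to prove, for every $\cF$ and every such $\btheta$ — writing $s=\langle\btheta,\Delta\bphi_t\rangle\in[-2B,2B]$, $G=\FG_t^j(\btheta)$, $\ell=\LS_t(\btheta)=(s-s_*)^2$ — the pointwise inequality
\[
h_\cF(s)\,e^{\mu G}\le1+\mu G-\tfrac{e^{-2B}\eta}{18}\,\ell+32\mu^2B^2.
\]

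\textbf{Step 3 (the two ingredients and the combination).} For the logistic factor, put $a=1/(1+e^{-s_*})$ and $b=1/(1+e^{-s})$; the identity $e^{-\sigma(z)}=1/(1+e^{-z})$ gives $h_\cF(s)=a^{1-\eta}b^\eta+(1-a)^{1-\eta}(1-b)^\eta=g(b)$, where $g(u)\coloneqq a^{1-\eta}u^\eta+(1-a)^{1-\eta}(1-u)^\eta$ satisfies $g(a)=1$, $g'(a)=0$, and $|g''(u)|=\eta(1-\eta)\big(a^{1-\eta}u^{\eta-2}+(1-a)^{1-\eta}(1-u)^{\eta-2}\big)\ge\tfrac{\eta}{4}$ on the segment between $a$ and $b$ (using $\max\{a,1-a\}\ge\tfrac12$, $u^{\eta-2},(1-u)^{\eta-2}\ge1$, and $\eta\le\tfrac12$); hence $h_\cF(s)=g(b)\le1-\tfrac{\eta}{8}(b-a)^2$. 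Since the sigmoid derivative is at least $e^{-2B}/4$ on $[-2B,2B]$, one has $(b-a)^2\ge c\,e^{-O(B)}\,\ell$, and tracking the $B$-dependence and numerical constants carefully yields $h_\cF(s)\le1-\tfrac{e^{-2B}\eta}{9}\,\ell$, so in particular $h_\cF(s)\in[0,1]$. For the Feel-Good factor, the inner products $\langle\btheta,\bphi(x_t,\cdot)\rangle$ and $\langle\btheta_*,\bphi(x_t,\cdot)\rangle$ lie in $[-B,B]$, which forces $G\in[-2B,2B]$, whence $e^{\mu G}=1+\mu G+\rho$ with $0\le\rho\le\tfrac12\mu^2G^2e^{\mu|G|}\le32\mu^2B^2$ provided $\mu B$ is below a small universal constant (true for the choice $\mu=\Theta(1/(e^B\sqrt{T}))$ used later). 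Multiplying the two estimates, $h_\cF(s)e^{\mu G}\le\big(1-\tfrac{e^{-2B}\eta}{9}\ell\big)(1+\mu G+\rho)$; expanding, dropping the nonpositive term $-\tfrac{e^{-2B}\eta}{9}\ell\,\rho$, using $1+\mu G+\rho\ge1-2\mu B\ge\tfrac12$ to get $-\tfrac{e^{-2B}\eta}{9}\ell(1+\mu G)\le-\tfrac{e^{-2B}\eta}{18}\ell$, and $\rho\le32\mu^2B^2$, we arrive at exactly the pointwise bound of Step~2. Averaging over $\tilde\btheta$ and $\cF$ and dividing by $\mu$ then completes the proof.

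\textbf{Main obstacle.} The delicate part is the logistic estimate: it must exploit the exact two-point form of the BTL likelihood rather than a crude Lipschitz/strong-convexity bound on $\sigma$ (the latter would demand $\eta$ exponentially small in $B$, whereas the lemma must hold for all $\eta\le\tfrac12$), and then the variance-type gain it produces on the probability scale has to be translated back into the Euclidean quantity $\LS_t(\btheta)=\langle\btheta-\btheta_*,\Delta\bphi_t\rangle^2$ that the decoupling Lemma~\ref{lemma:decoupling} expects, all while keeping the power of $e^B$ and the constants in line. A second, milder subtlety is that the logistic factor $h_\cF$ and the Feel-Good factor $e^{\mu G}$ are coupled inside a single exponential, so their product must be bounded carefully — the sign of $1+\mu G$ together with the factorization in the last step of Step~3 is precisely what keeps the cross term at order $O(\mu^2B^2)$ rather than $O(\mu B)$.
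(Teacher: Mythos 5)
Your overall architecture is sound and genuinely different in flavor from the paper's: you linearize the potential increment with $\log z\le z-1$ and reduce everything to a single pointwise inequality $h_\cF(s)e^{\mu G}\le 1+\mu G-\tfrac{e^{-2B}\eta}{18}\ell+32\mu^2B^2$, whereas the paper keeps the log-moment-generating function, splits $\FG$ from $\LS$ by Cauchy--Schwarz, and handles them with Hoeffding's lemma and a quadratic expansion. Your identity $h_\cF(s)=a^{1-\eta}b^\eta+(1-a)^{1-\eta}(1-b)^\eta$ and the reduction in Steps 1--2 are correct. But the central quantitative claim of Step 3 is not established by the argument you sketch, and this is a genuine gap, not a bookkeeping issue. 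Your route bounds the curvature of $g$ uniformly by $|g''|\ge\eta/4$ on the probability scale and then converts back with the sigmoid-derivative bound $\ge e^{-2B}/4$; since the conversion enters \emph{squared}, $(b-a)^2\ge\tfrac{e^{-4B}}{16}\ell$, this chain yields only $h_\cF(s)\le 1-\tfrac{\eta e^{-4B}}{128}\ell$. That is weaker by a factor $e^{2B}$ than the bound $1-\tfrac{e^{-2B}\eta}{9}\ell$ you assert, and the lemma's coefficient $\tfrac{e^{-2B}\eta}{18\mu}$ cannot be recovered from it; "tracking the constants carefully" does not close this, because the loss is structural. The compensation you need is that $|g''(u)|$ blows up like $u^{\eta-2}$ or $(1-u)^{\eta-2}$ exactly in the tail regime where the sigmoid compresses distances, and your uniform bound $|g''|\ge\eta/4$ discards it. The paper's Lemma \ref{lemma:sigma_second_diff} avoids the squaring altogether by working on the log-odds scale: it bounds the concavity defect $\sigma((1-\eta)s_*+\eta s)-(1-\eta)\sigma(s_*)-\eta\sigma(s)\le-\tfrac{e^{-2B}}{8}\eta(1-\eta)(s-s_*)^2$, so $\sigma''\ge e^{-2B}/4$ enters only linearly, giving $h_\cF(s)\le\exp(-\tfrac{e^{-2B}\eta}{16}\ell)$ and hence the $e^{-2B}$ scaling.

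Two smaller points. First, $\FG_t^j(\btheta)$ ranges in $[-4B,2B]$ (each of the two reward differences lies in $[-2B,2B]$ and the $\btheta_*$-term is nonnegative), not $[-2B,2B]$; this only perturbs constants. Second, your expansion $e^{\mu G}=1+\mu G+\rho$ with $\rho\le32\mu^2B^2$, and the step $1+\mu G\ge\tfrac12$, both require $\mu B$ to be below a universal constant. The lemma as stated assumes only $\eta\le1/2$ and must hold for every $\mu>0$; the paper sidesteps any such restriction by applying Hoeffding's lemma to $\log\EE_{\tilde\btheta}\exp(2\mu\FG_t^j(\tilde\btheta))$, which is valid for all $\mu$. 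Your condition is satisfied by the eventual choice $\mu=\Theta(e^{-B}/\sqrt T)$, so it would not break the downstream theorem, but it does weaken the lemma you were asked to prove.
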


The proof of Lemma \ref{lemma:bound_LS-FG} is given in Appendix \ref{subsection:bound_LS-FG}. We now present the proof of Lemma \ref{lemma:main}:
\begin{proof}[Proof of Lemma \ref{lemma:main}]
The regret at step $t$ can be decomposed as
\begin{align}
&r_*(x_t, a_t^*)-\frac{r_*(x_t, a_t^1)+r_*(x_t, a_t^2)}2=\EE\Big\langle\btheta_*, \bphi(x_t, a_t^*)-\frac{\bphi(x_t, a_t^1)+\bphi(x_t, a_t^2)}2\Big\rangle\nonumber\\
&=\frac12\underbrace{\big[\langle\btheta_t^1-\btheta_*, \bphi(x_t, a_t^1)-\bphi(x_t, a_t^2)\rangle-\langle\btheta_t^1, \bphi(x_t, a_t^1)-\bphi(x_t, a_t^2)\rangle+\langle\btheta_*, \bphi(x_t, a_t^*)-\bphi(x_t, a_t^2)\rangle\big]}_{I_1}\nonumber\\
&\qquad+\frac12\underbrace{\big[\langle\btheta_t^2-\btheta_*, \bphi(x_t, a_t^2)-\bphi(x_t, a_t^1)\rangle-\langle\btheta_t^2, \bphi(x_t, a_t^2)-\bphi(x_t, a_t^1)\rangle+\langle\btheta_*, \bphi(x_t, a_t^*)-\bphi(x_t, a_t^1)\rangle\big]}_{I_2}.\label{eq:decompose_regret}
\end{align}
The expectation of $I_1$ can be bounded as
\begin{align}
&\EE\big[\langle\btheta_t^1-\btheta_*, \bphi(x_t, a_t^1)-\bphi(x_t, a_t^2)\rangle-\langle\btheta_t^1, \bphi(x_t, a_t^1)-\bphi(x_t, a_t^2)\rangle+\langle\btheta_*, \bphi(x_t, a_t^*)-\bphi(x_t, a_t^2)\rangle\big]\nonumber\\
&=\EE_{S_{t-1}, x_t}\EE_{\btheta_t^1, a_t^1, a_t^2|S_{t-1}}\langle\btheta_t^1-\btheta_*, \bphi(x_t, a_t^1)-\bphi(x_t, a_t^2)\rangle-\EE_{S_{t-1}, x_t}\EE_{\btheta_t^1, a_t^2|S_{t-1}}\FG_t^1(\btheta_t^1)\nonumber\\
&\le\frac{9e^{2B}\mu d}{2\eta}+\frac{e^{-2B}\eta}{18\mu}\EE_{S_{t-1}, x_t}\EE_{a_t^1, a_t^2|S_{t-1}}\EE_{\tilde\btheta\sim p^1(\cdot|S_{t-1})}\LS_t(\tilde\btheta)-\EE_{S_{t-1}, x_t}\EE_{\btheta_t^1, a_t^2|S_{t-1}}\FG_t^1(\btheta_t^1)\nonumber\\
&=\frac{9e^{2B}\mu d}{2\eta}+\EE_{S_{t-1}, x_t, a_t^1, a_t^2}\EE_{\tilde\btheta\sim p^1(\cdot|S_{t-1})}\Big[\frac{e^{-2B}\eta}{18\mu}\LS_t(\tilde\btheta)-\FG_t^1(\tilde\btheta)\Big]\nonumber\\
&\le\bigg(\frac{9de^{2B}}{2\eta}+32B^2\bigg)\mu+\frac{Z_{t-1}^1-Z_t^1}{\mu},\label{eq:arm1_bound}
\end{align}
where the first equality holds due to the law of total expectation, the first inequality holds due to the decoupling lemma (Lemma \ref{lemma:decoupling}), the second equality holds because $\btheta_t^0$ and $a_t^1$ are independent conditioned on $S_{t-1}, x_t$, and the last inequality holds due to Lemma \ref{lemma:bound_LS-FG}. For $I_2$, we can similarly prove that
\begin{align}
&\EE\big[\langle\btheta_t^2-\btheta_*, \bphi(x_t, a_t^2)-\bphi(x_t, a_t^1)\rangle-\langle\btheta_t^2, \bphi(x_t, a_t^2)-\bphi(x_t, a_t^1)\rangle+\langle\btheta_*, \bphi(x_t, a_t^*)-\bphi(x_t, a_t^1)\rangle\big]\nonumber\\
&\le\bigg(\frac{9de^{2B}}{2\eta}+32B^2\bigg)\mu+\frac{Z_{t-1}^2-Z_t^2}{\mu}\label{eq:arm2_bound}
\end{align}
Plugging \eqref{eq:arm1_bound} and \eqref{eq:arm2_bound} into \eqref{eq:decompose_regret}, taking the sum over $t$, we have
\begin{align*}
\sum_{t=1}^T\EE\bigg[r_*(x_t, a_t^*)-\frac{r_*(x_t, a_t^1)+r_*(x_t, a_t^2)}2\bigg]&\le\bigg(\frac{9de^{2B}}{2\eta}+32B^2\bigg)\mu T+\frac{(Z_0^1+Z_0^2)-(Z_T^1+Z_T^2)}{2\mu}\\
&=\bigg(\frac{9de^{2B}}{2\eta}+32B^2\bigg)\mu T-\frac{Z_T^1+Z_T^2}{2\mu},
\end{align*}
where the equality holds because $Z_0^j=0$.
\end{proof}

\subsection{Proof of Lemma \ref{lemma:ZT_bound}}\label{subsection:ZT_bound}

\begin{proof}[Proof of Lemma \ref{lemma:ZT_bound}]
We first can decompose $\Delta L^j$ into its expectation $I_1$ and a deviation term $I_2$:
\begin{align}
\Delta L^j(\btheta, x_t, a_t^1, a_t^2, y_t)&=\underbrace{\EE_{y_t|x_t, a_t^1, a_t^2}\Delta L^j(\btheta, x_t, a_t^1, a_t^2, y_t)}_{I_1}\nonumber\\
&\qquad+\underbrace{\Delta L^j(\btheta, x_t, a_t^1, a_t^2, y_t)-\EE_{y_t|x_t, a_t^1, a_t^2}\Delta L^j(\btheta, x_t, a_t^1, a_t^2, y_t)}_{I_2}.\label{eq:decompose_DeltaL}
\end{align}
For $I_1$, note that
\[
\frac{1}{1+e^z}=-\sigma'(z),\quad\frac{1}{1+e^{-z}}=1-\frac{1}{1+e^z}=1+\sigma'(z),
\]
so we have
\begin{align}
&I_1-\mu\FG_t^j(\btheta)\nonumber\\
&=[1+\sigma'(\langle\btheta_*, \bphi(x_t, a_t^1)-\bphi(x_t, a_t^2)\rangle)]\nonumber\\
&\qquad\cdot\eta[\sigma(\langle\btheta, \bphi(x_t, a_t^1)-\bphi(x_t, a_t^2)\rangle)-\sigma(\langle\btheta_*, \bphi(x_t, a_t^1)-\bphi(x_t, a_t^2)\rangle)]\nonumber\\
&\qquad-\sigma'(\langle\btheta_*, \bphi(x_t, a_t^1)-\bphi(x_t, a_t^2)\rangle)\nonumber\\
&\qquad\cdot\eta[\sigma(-\langle\btheta, \bphi(x_t, a_t^1)-\bphi(x_t, a_t^2)\rangle)-\sigma(-\langle\btheta_*, \bphi(x_t, a_t^1)-\bphi(x_t, a_t^2)\rangle)]\nonumber\\
&=\eta[\sigma(\langle\btheta, \bphi(x, a^1)-\bphi(x, a^2)\rangle)-\sigma(\langle\btheta_*, \bphi(x, a^1)-\bphi(x, a^2)\rangle)]\nonumber\\
&\qquad-\eta\sigma'(\langle\btheta_*, \bphi(x_t, a_t^1)-\bphi(x_t, a_t^2)\rangle)\cdot\langle\btheta-\btheta_*, \bphi(x_t, a_t^1)-\bphi(x_t, a_t^2)\rangle,\label{eq:I1-muFG}
\end{align}
where the second equality holds because $\sigma(z)-\sigma(-z)=-z$. By Taylor expansion, there exists $\xi$ between $\langle\btheta, \bphi(x_t, a_t^1)-\bphi(x_t, a_t^2)\rangle$ and $\langle\btheta_*, \bphi(x_t, a_t^1)-\bphi(x_t, a_t^2)\rangle$ such that
\begin{align}
&\sigma(\langle\btheta, \bphi(x, a^1)-\bphi(x, a^2)\rangle)-\sigma(\langle\btheta_*, \bphi(x, a^1)-\bphi(x, a^2)\rangle)\nonumber\\
&\qquad-\sigma'(\langle\btheta_*, \bphi(x_t, a_t^1)-\bphi(x_t, a_t^2)\rangle)\cdot\langle\btheta-\btheta_*, \bphi(x_t, a_t^1)-\bphi(x_t, a_t^2)\rangle\nonumber\\
&=\frac{\sigma''(\xi)}2\langle\btheta-\btheta_*, \bphi(x_t, a_t^1)-\bphi(x_t, a_t^2)\rangle^2\nonumber\\
&\le2\sigma''(\xi)\|\btheta-\btheta_*\|_2^2\nonumber\\
&\le\frac12\|\btheta-\btheta_*\|^2,\label{eq:taylor}
\end{align}
where the first inequality holds because $\langle\btheta-\btheta_*, \bphi(x_t, a_t^1)-\bphi(x_t, a_t^2)\rangle^2\le\|\btheta-\btheta_*\|_2^2\cdot\|\bphi(x_t, a_t^1)-\bphi(x_t, a_t^2)\|_2^2\le4\|\btheta-\btheta_*\|^2$, and the second inequality holds because $\sigma''(\xi)=1/(2+e^\xi+e^{-\xi})\le1/4$. Furthermore, for $\FG_t^j(\btheta)$, let $\hat a=\argmax_{a\in\cA}\langle\btheta, \bphi(x_t, a)\rangle$, then
\begin{align}
\FG_t^j(\btheta)&=\max_{a\in\cA}\langle\btheta, \bphi(x_t, a)\rangle-\langle\btheta_*, \bphi(x_t, a_t^*)\rangle+\langle\btheta_*-\btheta, \bphi(x_t, a_t^{3-j})\rangle\nonumber\\
&\le\langle\btheta-\btheta_*, \bphi(x_t, \hat a)-\bphi(x_t, a_t^{3-j})\rangle\nonumber\\
&\le2\|\btheta-\btheta_*\|_2,\label{eq:FGtj_bound}
\end{align}
where the first inequality holds because $\langle\btheta_*, \bphi(x_t, a_t^*)\rangle\ge\langle\btheta_*, \bphi(x_t, \hat a)\rangle$, and the second inequality holds because $\langle\btheta-\btheta_*, \bphi(x_t, \hat a)-\bphi(x_t, a_t^{3-j})\rangle\le\|\btheta-\btheta_*\|_2\cdot\|\bphi(x_t, \hat a)-\bphi(x_t, a_t^{3-j})\|_2\le2\|\btheta-\btheta_*\|_2$. Plugging \eqref{eq:taylor} and \eqref{eq:FGtj_bound} into \eqref{eq:I1-muFG}, we have
\begin{align}\label{eq:exp_bound}
I_1\le\frac\eta2\|\btheta-\btheta_*\|_2^2+2\mu\|\btheta-\btheta_*\|_2.
\end{align}
Finally, for $I_2$, note that for $y\in\{\pm1\}$, we have
\[
\sigma(yp)-\sigma(yq)=\sigma(p)-\sigma(q)+\frac{y-1}{2}(p-q),
\]
so we have
\begin{align}
I_2&=\eta\bigg[\frac{y_t-1}{2}-\sigma'(\langle\btheta_*, \bphi(x_t, a_t^1)-\bphi(x_t, a_t^2)\rangle)\bigg]\langle\btheta-\btheta_*, \bphi(x_t, a_t^1)-\bphi(x_t, a_t^2)\rangle\nonumber\\
&\le\eta\bigg|\frac{y_t-1}{2}-\sigma'(\langle\btheta_*, \bphi(x_t, a_t^1)-\bphi(x_t, a_t^2)\rangle)\bigg|\cdot2\|\btheta-\btheta_*\|_2,\label{eq:dev_bound}
\end{align}
where the inequality holds because $\langle\btheta-\btheta_*, \bphi(x_t, a_t^1)-\bphi(x_t, a_t^2)\rangle\le\|\btheta-\btheta_*\|_2\cdot\|\bphi(x_t, a_t^1)-\bphi(x_t, a_t^2)\|_2\le2\|\btheta-\btheta_*\|_2$. Denote
\[
\epsilon_t\coloneqq\bigg|\frac{y_t-1}{2}-\sigma'(\langle\btheta_*, \bphi(x_t, a_t^1)-\bphi(x_t, a_t^2)\rangle)\bigg|,
\]
then we have
\begin{align}\label{eq:abs_noise_bound}
\EE_{y_t|x_t, a_t^1, a_t^2}\epsilon_t=\frac{2\exp(\langle\btheta_*, \bphi(x_t, a_t^1)-\bphi(x_t, a_t^2)\rangle)}{[1+\exp(\langle\btheta_*, \bphi(x_t, a_t^1)-\bphi(x_t, a_t^2)\rangle)]^2}\le\frac12,
\end{align}
where the inequality holds due to AM-GM inequality. Denote
\[
\hat\btheta=\min_{\btheta:\|\btheta-\btheta_*\|_2\le\delta}p_0(\btheta),
\]
then we have
\begin{align}
\log\int_{\|\tilde\btheta-\btheta_*\|_2\le\delta}p_0(\tilde\btheta)\ud\tilde\btheta&\ge\log\int_{\|\tilde\btheta-\btheta_*\|_\infty\le\delta/\sqrt d}p_0(\tilde\btheta)\ud\btheta\nonumber\\
&\ge\log\Big(p_0(\hat\btheta)(2\delta/\sqrt d)^d\Big)\nonumber\\
&\ge\log p_0(\btheta_*)-L\delta+d\log(2\delta/\sqrt d),\label{eq:central_probability}
\end{align}
where the first inequality holds because $\{\btheta:\|\btheta-\btheta_*\|_\infty\le\delta/\sqrt d\}\subset\{\btheta:\|\btheta-\btheta_*\|_2\le\delta\}$, the second inequality holds due to the definition of $\hat\btheta$, and the last inequality holds because $\log p_0$ is $L$-Lipschitz. Therefore, we have
\begin{align}
Z_T^j&=\EE_{S_T}\log\EE_{\tilde\btheta\sim p_0}\exp\bigg(-\sum_{t=1}^T\Delta L^j(\tilde\btheta, x_t, a_t^1, a_t^2, y_t)\bigg)\nonumber\\
&\ge\EE_{S_T}\log\EE_{\tilde\btheta\sim p_0}\exp\bigg(-\frac{\eta T}{2}\|\tilde\btheta-\btheta_*\|_2^2-2\mu T\|\tilde\btheta-\btheta_*\|_2-2\eta\sum_{t=1}^T\epsilon_t\cdot\|\tilde\btheta-\btheta_*\|_2\bigg)\nonumber\\
&\ge\EE_{S_T}\log\int_{\|\tilde\btheta-\btheta_*\|_2\le\delta}p_0(\tilde\btheta)\exp\bigg(-\frac{\eta T\delta^2}{2}-2\mu T\delta-2\eta\delta\sum_{t=1}^T\epsilon_t\bigg)\ud\tilde\btheta\nonumber\\
&=-\frac{\eta T\delta^2}{2}-2\mu T\delta-2\eta\delta\sum_{t=1}^T\EE\epsilon_t+\log\int_{\|\tilde\btheta-\btheta_*\|_2\le\delta}p_0(\tilde\btheta)\ud\tilde\btheta\nonumber\\
&\ge-2(\mu+\eta)T\delta+\log p_0(\btheta_*)-L\delta+d\log(2\delta/\sqrt d),\nonumber
\end{align}
where the first inequality holds by plugging \eqref{eq:exp_bound} and \eqref{eq:dev_bound} into \eqref{eq:decompose_DeltaL}, the second inequality holds by restricting the domain to $\{\tilde\btheta:\|\tilde\btheta-\btheta_*\|_2\le\delta\}$, and the last inequality holds due to \eqref{eq:abs_noise_bound} and \eqref{eq:central_probability} and because $\delta/2\le1$. Take $\delta=\min\{d/(L+2(\mu+\eta)T), B, 2\}$, then
\[
Z_T^j\ge\log p_0(\btheta_*)-d+d\log\frac{\sqrt d}{L+2(\mu+\eta)T}.
\]
\end{proof}

\subsection{Proof of Lemma \ref{lemma:ZT_bound_nonlinear}}\label{subsection:ZT_bound_nonlinear}
\begin{proof}[Proof of Lemma \ref{lemma:ZT_bound_nonlinear}]
We first can decompose $\Delta L^j$ into its expectation $I_1$ and a deviation term $I_2$:
\begin{align}\label{eq:decompose_DeltaL_nonlinear}
\Delta L^j(\theta, x_t, a_t^1, a_t^2, y_t)&=\underbrace{\EE_{y_t|x_t, a_t^1, a_t^2}\Delta L^j(\theta, x_t, a_t^1, a_t^2, y_t)}_{I_1}\\
&\qquad+\underbrace{\Delta L^j(\theta, x_t, a_t^1, a_t^2, y_t)-\EE_{y_t|x_t, a_t^1, a_t^2}\Delta L^j(\theta, x_t, a_t^1, a_t^2, y_t)}_{I_2}.
\end{align}
For $I_1$, similar to the proof of Lemma \ref{lemma:ZT_bound}, we have
\begin{align}\label{eq:exp_bound_nonlinear}
I_1-\mu\FG_t^j(\theta)&\le\frac\eta8(\Delta r_{\theta}(x_t, a_t^1, a_t^2)-\Delta r_{\theta_*}(x_t, a_t^1, a_t^2))^2\le\frac{L_0^2\eta}{2}d(\theta, \theta_*)^2,
\end{align}
where the second inequality holds because $\Delta r_\theta$ is $2L_0$-Lipschitz. For $\FG_j^t(\btheta)$, let $\hat a=\argmax_{a\in\cA}r_\theta(x_t, a)$, then
\begin{align}
\FG_t^j(\theta)=\Delta r_\theta(x_t, \hat a, a_t^{3-j})-\Delta r_{\theta_*}(x_t, a_t^*, a_t^{3-j})\le\Delta r_\theta(x_t, \hat a, a_t^{3-j})-\Delta r_{\theta_*}(x_t, \hat a, a_t^{3-j})\le 2L_0d(\theta, \theta_*),\label{eq:FG_bound_nonlinear}
\end{align}
where the first inequality holds due to optimality of $a_t^*$, and the second inequality holds because $\Delta r_\theta$ is $2L_0$-Lipschitz. For $I_2$, similar to the proof of Lemma \ref{lemma:ZT_bound}, we have
\begin{align}\label{eq:I2_nonlinear}
|I_2|\le\frac\eta2|\Delta r_\theta(x_t, a_t^1, a_t^2)-\Delta r_{\theta_*}(x_t, a_t^1, a_t^2)|\le L_0\eta d(\theta, \theta_*),
\end{align}
where the second inequality holds because $\Delta r_\theta$ is $2L_0$-Lipschitz. Denote
\[
\hat\theta=\min_{\theta:d(\theta, \theta_*)\le\delta}p_0(\theta),
\]
then we have
\begin{align}
\log\int_{d(\tilde\theta, \theta_*)\le\delta}p_0(\tilde\theta)\ud\theta\ge\log \bigg(p_0(\hat\theta)\int_{d(\tilde\theta, \theta_*)\le\delta}\ud\tilde\theta\bigg)\ge\log p_0(\theta_*)-L\delta+\log\bar\mu(\{\theta\in\Theta:d(\theta, \theta_*)\le\delta\}),\label{eq:measure_nonlinear}
\end{align}
where the first inequality holds due to the definition of $\hat\mu$, and the second inequality holds because $\log p_0$ is $L$-Lipschitz. We thus have
\begin{align*}
Z_T^j&=\EE_{S_T}\log\EE_{\tilde\theta\sim p_0}\exp\bigg(-\sum_{t=1}^T\Delta L^j(\tilde\theta, x_t, a_t^1, a_t^2, y_t)\bigg)\\
&\ge\EE_{S_T}\log\EE_{\tilde\theta\sim p_0}\exp\bigg(-\frac{L_0^2T\eta}{2}d(\tilde\theta, \theta_*)^2-L_0T(\eta+2\mu) d(\tilde\theta, \theta_*)\bigg)\\
&\ge\log\int_{\tilde\theta:d(\tilde\theta, \theta_*)\le\delta}p_0(\tilde\theta)-\frac{L_0^2T\eta\delta^2}{2}-L_0T(\eta+2\mu)\delta\\
&\ge\log p_0(\theta_*)-L\delta+\log\bar\mu(\{\theta\in\Theta:d(\theta, \theta_*)\le\delta\})-\frac{L_0^2T\eta\delta^2}{2}-L_0T(\eta+2\mu)\delta,
\end{align*}
where the first inequality holds due to \eqref{eq:exp_bound_nonlinear}, \eqref{eq:FG_bound_nonlinear} and \eqref{eq:I2_nonlinear}, the second inequality holds because $\{\theta\in\Theta:d(\theta, \theta_*)\le\delta\}\subset\Theta$, and the last inequality holds due to \eqref{eq:measure_nonlinear}. Taking $\delta=2/(L_0T)$, we have
\begin{align*}
Z_T^j&\ge\log p_0(\theta_*)+\log\bar\mu(\{\theta\in\Theta:d(\theta, \theta_*)\le2/(L_0T)\})-\frac{2L}{L_0T}-\frac{2\eta}{T}-2(\eta+2\mu)\\
&\ge\log p_0(\theta_*)+\log\bar\mu(\{\theta\in\Theta:d(\theta, \theta_*)\le2/(L_0T)\})-\frac{2L}{L_0T}-4(\eta+\mu),
\end{align*}
where the second inequality holds because $T\ge1$.
\end{proof}

%\section{Proof of Lemmas in Appendix \ref{section:proof_A&B}}

\section{Proof of Lemma \ref{lemma:bound_LS-FG}}\label{subsection:bound_LS-FG}

\begin{proof}[Proof of Lemma \ref{lemma:bound_LS-FG}]
The difference of the potential between steps can be bounded as
\begin{align}
Z_t^j-Z_{t-1}^j&=\EE_{S_t}\log\frac{\EE_{\tilde\btheta\sim p_0}W_t^j(\tilde\btheta|S_t)}{\EE_{\tilde\btheta\sim p_0}W_{t-1}^j(\tilde\btheta|S_{t-1})}\nonumber\\
&=\EE_{S_t}\log\frac{\EE_{\tilde\btheta\sim p_0}[W_{t-1}^j(\btheta|S_{t-1})\exp(-\Delta L^j(\tilde\btheta, x_t, a_t^1, a_t^2, y_t))]}{\EE_{\tilde\btheta\sim p_0}W_{t-1}^j(\tilde\btheta|S_{t-1})}\nonumber\\
&=\EE_{S_t}\log\EE_{\tilde\btheta\sim p^j(\cdot|S_{t-1})}\exp(-\Delta L^j(\tilde\btheta, x_t, a_t^1, a_t^2, y_t))\nonumber\\
&\le\EE_{S_{t-1}, x_t, a_t^1, a_t^2}\log\EE_{\tilde\btheta\sim p^j(\cdot|S_{t-1})}\EE_{y_t|x_t, a_t^1, a_t^2}\exp(-\Delta L^j(\tilde\btheta, x_t, a_t^1, a_t^2)),\label{eq:Zt-Zt-1_bound0}
\end{align}
where the first equality holds due to the definition of $Z_t$ in \eqref{eq:def_Zt}, the second equality holds due to the definition of definition of $W_t$ in \eqref{eq:def_Wt}, the third equality holds due to \eqref{eq:link_posterior_Wt}, and the inequality holds due to Jensen's inequality. Note that
\begin{align}
&\EE_{y_t|x_t, a_t^1, a_t^2}\exp(-\Delta L(\tilde\btheta, x_t, a_t^1, a_t^2))=\exp(\mu\FG_t^j(\tilde\btheta))\nonumber\\
&\qquad\cdot\bigg[\bigg(\frac{1+\exp(-\langle\btheta_*, \bphi(x_t, a_t^1)-\bphi(x_t, a_t^2)\rangle)}{1+\exp(-\langle\tilde\btheta, \bphi(x_t, a_t^1)-\bphi(x_t, a_t^2)\rangle)}\bigg)^\eta\cdot\frac1{1+\exp(-\langle\btheta_*, \bphi(x_t, a_t^1)-\bphi(x_t, a_t^2)\rangle)}\nonumber\\
&\qquad+\bigg(\frac{1+\exp(\langle\btheta_*, \bphi(x_t, a_t^1)-\bphi(x_t, a_t^2)\rangle)}{1+\exp(\langle\tilde\btheta, \bphi(x_t, a_t^1)-\bphi(x_t, a_t^2)\rangle)}\bigg)^\eta\cdot\frac1{1+\exp(\langle\btheta_*, \bphi(x_t, a_t^1)-\bphi(x_t, a_t^2)\rangle)}\bigg]\nonumber\\
&=\exp(\mu\FG_t^j(\tilde\btheta)+\sigma(\langle(1-\eta)\btheta_*+\eta\tilde\btheta, \bphi(x_t, a_t^1)-\bphi(x_t, a_t^2)\rangle)\nonumber\\
&\qquad-(1-\eta)\sigma(\langle\btheta_*, \bphi(x_t, a_t^1)-\bphi(x_t, a_t^2)\rangle)-\eta\sigma(\langle\tilde\btheta, \bphi(x_t, a_t^1)-\bphi(x_t, a_t^2)\rangle))\nonumber\\
&\le\exp(\mu\FG_t^j(\tilde\btheta)-e^{-2B}/8\cdot\eta(1-\eta)\LS_t(\tilde\btheta))\nonumber\\
&\le\exp(\mu\FG_t^j(\tilde\btheta)-\eta e^{-2B}/16\cdot\LS_t(\tilde\btheta)),\label{eq:expectation_yt}
\end{align}
where the second equality holds due to the definition of $\Delta L^j$ in \eqref{eq:def_DeltaLt}, the first inequality holds due to Lemma \ref{lemma:sigma_second_diff}, and the second inequality holds because $\eta\le1/2$. Plugging \eqref{eq:expectation_yt} into \eqref{eq:Zt-Zt-1_bound0}, we have
\begin{align}
Z_t^j-Z_{t-1}^j&\le\EE_{S_{t-1}, x_t, a_t^1, a_t^2}\log\EE_{\tilde\btheta\sim p^j(\cdot|S_{t-1})}\exp(\mu\FG_t^j(\tilde\btheta)-\eta e^{-2B}/16\cdot\LS_t(\tilde\btheta))\nonumber\\
&\le\frac12\underbrace{\EE_{S_{t-1}, x_t, a_t^1, a_t^2}\log\EE_{\tilde\btheta\sim p^j(\cdot|S_{t-1})}\exp(2\mu\FG_t^j(\tilde\btheta))}_{I_1}\nonumber\\
&\qquad+\frac12\underbrace{\EE_{S_{t-1}, x_t, a_t^1, a_t^2}\log\EE_{\tilde\btheta\sim p^j(\cdot|S_{t-1})}\exp(-\eta e^{-2B}/8\cdot\LS_t(\tilde\btheta))}_{I_2},\label{eq:Zt-Zt-1_bound1}
\end{align}
where the second inequality holds due to Cauchy-Schwarz inequality. For $I_1$, note that $\FG_t^j(\tilde\btheta)\in[-4B, 4B]$, so by Lemma \ref{lemma:Hoeffding}, we have
\begin{align}\label{eq:FG_bound}
I_1\le2\mu\EE_{S_{t-1}, x_t, a_t^1, a_t^2}\EE_{\tilde\btheta\sim p^j(\cdot|S_{t-1})}\FG_t^j(\tilde\btheta)+64\mu^2B^2.
\end{align}
For $I_2$, we have
\begin{align}
I_2&\le\EE_{S_{t-1}, x_t, a_t^1, a_t^2}\log\EE_{\tilde\btheta\sim p^j(\cdot|S_{t-1})}[1-\eta e^{-2B}/8\cdot\LS_t(\tilde\btheta)+\eta^2e^{-4B}/128\cdot(\LS_t(\tilde\btheta))^2]\nonumber\\
&\le\EE_{S_{t-1}, x_t, a_t^1, a_t^2}\log\EE_{\tilde\btheta\sim p^j(\cdot|S_{t-1})}[1-\eta e^{-2B}/8\cdot\LS_t(\tilde\btheta)(1-\eta e^{-2B}/16\cdot(4B)^2)]\nonumber\\
&\le\EE_{S_{t-1}, x_t, a_t^1, a_t^2}\log\EE_{\tilde\btheta\sim p^j(\cdot|S_{t-1})}[1-\eta e^{-2B}/8\cdot\LS_t(\tilde\btheta)(1-1/2\cdot1/e^2)]\nonumber\\
&\le\EE_{S_{t-1}, x_t, a_t^1, a_t^2}\log\EE_{\tilde\btheta\sim p^j(\cdot|S_{t-1})}[1-\eta e^{-2B}/9\cdot\LS_t(\tilde\btheta)]\nonumber\\
&\le-\eta e^{-2B}/9\cdot\EE_{S_{t-1}, x_t, a_t^1, a_t^2}\EE_{\tilde\btheta\sim p^j(\cdot|S_{t-1})}\LS_t(\tilde\btheta),\label{eq:LS_bound}
\end{align}
where the first inequality holds because $e^z\le1+z+z^2/2$ for all $z\le0$, the second inequality holds because $\LS_t(\btheta)\le(4B)^2$, the second inequality holds because $Be^{-B}\le1/e$ for all $B>0$, the fourth inequality holds because $1/8\cdot(1-1/2\cdot1/e^2)\ge1/9$, and the last inequality holds because $\log(1+z)\le z$. Plugging \eqref{eq:FG_bound} and \eqref{eq:LS_bound} into \eqref{eq:Zt-Zt-1_bound1}, we have
\[
Z_t^j-Z_{t-1}^j\le32\mu^2B^2+\EE_{S_{t-1}, x_t, a_t^1, a_t^2}\EE_{\tilde\btheta\sim p^j(\cdot|S_{t-1})}[\mu\FG_t^j(\tilde\btheta)-\eta e^{-2B}/18\cdot\LS_t(\tilde\btheta)].
\]
Rearranging terms, we obtain
\[
\EE_{S_{t-1}, x_t, a_t^1, a_t^2}\EE_{\tilde\btheta\sim p^j(\cdot|S_{t-1})}\bigg[\frac{e^{-2B}\eta}{18\mu}\LS_t(\tilde\btheta)-\FG_t^j(\tilde\btheta)\bigg]\le\mu^{-1}(Z_{t-1}^j-Z_t^j)+32\mu B^2.
\]

\end{proof}

\section{Auxiliary Lemmas}

\begin{lemma}[Decoupling lemma, Lemma D.1 in \citet{fan2023the}]\label{lemma:decoupling}
Let $P$ be a joint distribution over two $\RR^d$ spaces. For any constant $\lambda>0$, we have
\[
\EE_{(\btheta, \bphi)\sim P}\langle\btheta, \bphi\rangle\le d\lambda+\frac1{4\lambda}\EE_{(\btheta, \bphi)\sim P}\EE_{(\btheta', \bphi')\sim P}\langle\btheta', \bphi\rangle^2.
\]
\end{lemma}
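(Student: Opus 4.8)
The plan is to obtain the bound from a single application of Young's inequality carried out in the geometry induced by the second-moment matrix of the $\bphi$-marginal of $P$. Denote by $P_\btheta$ and $P_\bphi$ the two marginals of $P$, and set $\bSigma \coloneqq \EE_{\bphi\sim P_\bphi}[\bphi\bphi^\top]\in\RR^{d\times d}$, which is symmetric and positive semidefinite; assume for the moment that $\bSigma$ is positive definite. For any realization $(\btheta,\bphi)$ we may write $\langle\btheta,\bphi\rangle=\langle\bSigma^{1/2}\btheta,\bSigma^{-1/2}\bphi\rangle$, so Cauchy--Schwarz followed by the scalar bound $uv\le u^2/(4\lambda)+\lambda v^2$ (valid for $u,v\ge0$) gives
\[
\langle\btheta,\bphi\rangle \;\le\; \frac{1}{4\lambda}\,\btheta^\top\bSigma\btheta \;+\; \lambda\,\bphi^\top\bSigma^{-1}\bphi.
\]

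Next I would take $\EE_{(\btheta,\bphi)\sim P}$ of both sides and evaluate the two terms on the right. The second term depends on $\bphi$ only, and
\[
\EE_{\bphi\sim P_\bphi}[\bphi^\top\bSigma^{-1}\bphi] \;=\; \mathrm{tr}\bigl(\bSigma^{-1}\EE_{\bphi\sim P_\bphi}[\bphi\bphi^\top]\bigr) \;=\; \mathrm{tr}(\bSigma^{-1}\bSigma)\;=\;d.
\]
The first term depends on $\btheta$ only, so $\EE_{(\btheta,\bphi)\sim P}[\btheta^\top\bSigma\btheta]=\EE_{\btheta'\sim P_\btheta}[\btheta'^\top\bSigma\btheta']$; plugging in $\bSigma=\EE_{\bphi\sim P_\bphi}[\bphi\bphi^\top]$ and noting that $(\btheta',\bphi')$ is an independent copy of $(\btheta,\bphi)$ (so only the marginal laws of $\btheta'$ and of $\bphi$ enter) gives
\[
\EE_{\btheta'\sim P_\btheta}[\btheta'^\top\bSigma\btheta'] \;=\; \EE_{\btheta'\sim P_\btheta}\EE_{\bphi\sim P_\bphi}\langle\btheta',\bphi\rangle^2 \;=\; \EE_{(\btheta,\bphi)\sim P}\EE_{(\btheta',\bphi')\sim P}\langle\btheta',\bphi\rangle^2.
\]
Combining the three displays yields $\EE_{(\btheta,\bphi)\sim P}\langle\btheta,\bphi\rangle \le d\lambda + \tfrac1{4\lambda}\EE_{(\btheta,\bphi)\sim P}\EE_{(\btheta',\bphi')\sim P}\langle\btheta',\bphi\rangle^2$, which is the claim.

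It remains to drop the invertibility assumption. Since $\bphi\in\mathrm{range}(\bSigma)$ almost surely, one can replace $\bSigma^{-1}$ throughout by the Moore--Penrose pseudoinverse $\bSigma^{+}$, repeat the computation inside $\mathrm{range}(\bSigma)$, and use $\mathrm{tr}(\bSigma^{+}\bSigma)=\mathrm{rank}(\bSigma)\le d$; alternatively one can run the argument with $\bSigma+\epsilon\Ib_d$ in place of $\bSigma$ and let $\epsilon\downarrow0$, which is legitimate because in all our applications $\|\bphi\|_2$ and $\|\btheta\|_2$ are bounded (cf.\ Assumption~\ref{assumption:linear}), so every expectation above is finite. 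I do not expect a genuine obstacle here: the substance of the argument is the $\bSigma$-weighted Young inequality together with the trace identity $\mathrm{tr}(\bSigma^{-1}\bSigma)=d$, and the only step demanding care is this degeneracy and integrability bookkeeping.
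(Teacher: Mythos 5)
Your argument is correct: the $\bSigma$-weighted Cauchy--Schwarz/Young step, the trace identity $\EE[\bphi^\top\bSigma^{-1}\bphi]=\mathrm{tr}(\bSigma^{-1}\bSigma)=d$, and the identification $\EE[\btheta^\top\bSigma\btheta]=\EE_{\btheta'\sim P_\btheta}\EE_{\bphi\sim P_\bphi}\langle\btheta',\bphi\rangle^2$ give exactly the stated bound, and your handling of a singular $\bSigma$ (pseudoinverse with $\mathrm{tr}(\bSigma^{+}\bSigma)=\mathrm{rank}(\bSigma)\le d$, using $\bphi\in\mathrm{range}(\bSigma)$ a.s., or the $\epsilon$-regularization under the boundedness in Assumption~\ref{assumption:linear}) is sound. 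The paper itself does not prove this lemma but imports it from \citet{fan2023the}, and your proof is essentially the standard argument behind that cited result, so there is nothing to add.
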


\begin{lemma}[Hoeffding's lemma]\label{lemma:Hoeffding}
Let $X$ be a random variable that is bounded by $a\le X\le b$. Then for any constant $\lambda$,
\[
\EE\exp(\lambda X)\le\exp(\lambda\EE X+\lambda^2(b-a)^2/8).
\]
\end{lemma}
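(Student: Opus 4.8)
The plan is the classical argument: reduce to a centered random variable, linearize the exponential by convexity, and then control the resulting cumulant generating function via a second-order Taylor expansion.

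\textbf{Step 1: reduction to $\EE X = 0$.} First I would observe that it suffices to treat the centered variable $\tilde X = X - \EE X$. It takes values in the interval $[a - \EE X,\, b - \EE X]$, which has the same length $b - a$, and $\EE\exp(\lambda X) = \exp(\lambda \EE X)\cdot\EE\exp(\lambda \tilde X)$. Hence, once we establish $\EE\exp(\lambda \tilde X)\le \exp(\lambda^2(b-a)^2/8)$, the stated inequality follows immediately.

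\textbf{Step 2: linearize the exponential.} Assuming now $\EE X = 0$ (so $a\le 0\le b$), I would use convexity of $t\mapsto e^{\lambda t}$: for every $x\in[a,b]$,
\[
e^{\lambda x}\le \frac{b-x}{b-a}\,e^{\lambda a}+\frac{x-a}{b-a}\,e^{\lambda b}.
\]
Taking expectations and using $\EE X = 0$ gives $\EE e^{\lambda X}\le \frac{b}{b-a}e^{\lambda a}-\frac{a}{b-a}e^{\lambda b}$. Writing $p = -a/(b-a)\in[0,1]$ and $u = \lambda(b-a)$, the right-hand side equals $e^{\phi(u)}$ where $\phi(u) = -pu + \log\big(1-p+pe^{u}\big)$.

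\textbf{Step 3: bound $\phi$.} It then remains to show $\phi(u)\le u^2/8$ for all $u\in\RR$. I would check $\phi(0)=0$, compute $\phi'(u) = -p + \frac{pe^{u}}{1-p+pe^{u}}$ so that $\phi'(0)=0$, and, writing $q = \frac{pe^{u}}{1-p+pe^{u}}\in[0,1]$,
\[
\phi''(u) = \frac{p(1-p)e^{u}}{(1-p+pe^{u})^2} = q(1-q)\le \tfrac14,
\]
the last inequality by AM--GM. Taylor's theorem with Lagrange remainder then yields $\phi(u) = \tfrac12\phi''(\xi)\,u^2\le u^2/8$ for some $\xi$ between $0$ and $u$. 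Substituting $u = \lambda(b-a)$ gives $\EE e^{\lambda X}\le e^{\lambda^2(b-a)^2/8}$ in the centered case, and Step 1 lifts this to the general statement. This is a standard estimate, so there is no genuine obstacle; the only point needing a brief verification is the uniform bound $\phi''\le 1/4$, which becomes transparent once $\phi''$ is rewritten as the product $q(1-q)$.
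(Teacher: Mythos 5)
Your proof is correct: the centering step, the convexity bound $e^{\lambda x}\le\frac{b-x}{b-a}e^{\lambda a}+\frac{x-a}{b-a}e^{\lambda b}$, and the second-order Taylor bound $\phi''(u)=q(1-q)\le\tfrac14$ together give exactly the claimed bound, and this is the standard textbook proof of Hoeffding's lemma. The paper itself states this lemma as a classical auxiliary result without proof, so there is nothing to compare against beyond noting that your argument is the canonical one (the only cosmetic omission is the degenerate case $a=b$, where the statement is trivial).
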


\begin{lemma}\label{lemma:sigma_second_diff}
Let $\sigma(z)=\log(1+\exp(-z))$. Then for any $\eta\in(0, 1)$ and $p, q\in[-2B, 2B]$, we have
\[
\sigma((1-\eta)p+\eta q)-(1-\eta)\sigma(p)-\eta\sigma(q)\le-\frac{e^{-2B}}{8}\eta(1-\eta)(q-p)^2.
\]
\end{lemma}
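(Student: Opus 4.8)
The plan is to recognize this inequality as a quantitative (strong-convexity) sharpening of Jensen's inequality for the convex function $\sigma$, where the strong-convexity modulus is obtained by lower-bounding $\sigma''$ on the interval $[-2B, 2B]$.

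First I would compute $\sigma'(z) = -1/(1+e^z)$ and $\sigma''(z) = e^z/(1+e^z)^2 = 1/(e^z + 2 + e^{-z})$, which is positive (so $\sigma$ is convex). For $z \in [-2B, 2B]$ we have $e^z + e^{-z} \le e^{2B} + e^{-2B} \le 2e^{2B}$, using $B \ge 0$, hence
\[
\sigma''(z) \ge \frac{1}{2 + 2e^{2B}} \ge \frac{1}{4e^{2B}} = \frac{e^{-2B}}{4},
\]
where the last step again uses $e^{2B} \ge 1$. Set $m := e^{-2B}/4$.

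Next I would establish the generic strong-convexity inequality: if $f$ is twice differentiable on an interval $I$ with $f'' \ge m$ on $I$, then for any $p, q \in I$ and $\eta \in (0,1)$,
\[
f((1-\eta)p + \eta q) \le (1-\eta) f(p) + \eta f(q) - \frac{m}{2}\eta(1-\eta)(p-q)^2.
\]
To prove this, consider $g(z) := f(z) - \frac{m}{2} z^2$, which satisfies $g'' = f'' - m \ge 0$ and is therefore convex on $I$. Applying Jensen's inequality to $g$ at the points $p, q$ with weights $1-\eta$ and $\eta$, then rearranging and using the two-point variance identity $(1-\eta)p^2 + \eta q^2 - ((1-\eta)p+\eta q)^2 = \eta(1-\eta)(p-q)^2$, yields the claim. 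Note that $(1-\eta)p + \eta q \in [-2B, 2B]$ since it is a convex combination of $p$ and $q$, so $f''=\sigma''$ indeed stays bounded below by $m$ over the whole relevant range.

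Finally, I would apply this with $f = \sigma$, $I = [-2B, 2B]$, $m = e^{-2B}/4$, which gives exactly the stated bound $\sigma((1-\eta)p + \eta q) - (1-\eta)\sigma(p) - \eta\sigma(q) \le -\frac{e^{-2B}}{8}\eta(1-\eta)(q-p)^2$. There is essentially no serious obstacle; the only point needing slight care is verifying that the constant $e^{-2B}/4$ in the lower bound on $\sigma''$ is valid for all admissible $B$ — this is where the inequality $2 + 2e^{2B} \le 4e^{2B}$ (equivalently $e^{2B} \ge 1$) is used — so that the final constant is the clean $1/8$ appearing in the statement rather than something weaker.
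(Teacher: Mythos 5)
Your proof is correct. Both you and the paper rely on the same key quantitative fact, namely $\sigma''(\xi)=1/(e^{\xi}+2+e^{-\xi})\ge e^{-2B}/4$ on $[-2B,2B]$, but the routes from there differ: the paper applies Lagrange's mean value theorem to the first derivative, obtaining $\eta\sigma'((1-\eta)p+\eta q)-\eta\sigma'(q)\le-\tfrac{e^{-2B}}{4}\eta(1-\eta)(q-p)$ (after a WLOG reduction to $p\le q$), and then integrates this inequality in $q$ to recover the statement; you instead invoke the standard strong-convexity argument, applying two-point Jensen to $g(z)=\sigma(z)-\tfrac{m}{2}z^2$ with $m=e^{-2B}/4$ and using the variance identity $(1-\eta)p^2+\eta q^2-((1-\eta)p+\eta q)^2=\eta(1-\eta)(p-q)^2$. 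Your version avoids both the WLOG step and the integration-of-an-inequality step, so it is arguably cleaner and more self-contained, and it correctly notes that the convex combination stays in $[-2B,2B]$ so the curvature bound applies throughout; both arguments land on the same constant $e^{-2B}/8$.
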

\begin{proof}
Without loss of generality, we assume that $p\le q$. Otherwise, we substitute $(p, q, \eta)\gets(q, p, 1-\eta)$. By Lagrange‘s mean value theorem, there exists $\xi(\eta, p, q)\in[p, q]$ such that
\[
\eta\sigma'((1-\eta)p+\eta q)-\eta\sigma'(q)=-\eta(1-\eta)(q-p)\sigma''(\xi(\eta, p, q)).
\]
Note that for any $\xi\in[-2B, 2B]$, the second derivative $\sigma''(\xi)$ is bounded by
\[
\sigma''(\xi)=\frac{1}{e^\xi+2+e^{-\xi}}\ge\frac{e^{-2B}}{4},
\]
so
\[
\eta\sigma'((1-\eta)p+\eta q)-\eta\sigma'(q)\le-\frac{e^{-2B}\eta(1-\eta)}{4}(q-p).
\]
Taking integral w.r.t. $q$ on both sides, we have
\[
\sigma((1-\eta)p+\eta q)-(1-\eta)\sigma(p)-\eta\sigma(q)\le-\frac{e^{-2B}}{8}\eta(1-\eta)(q-p)^2.
\]
\end{proof}

\begin{lemma}\label{lemma:DKL}
    For any $\btheta, \btheta'\in\Theta$, we have
    \begin{align*}
    \DKL(\PP_{\btheta}||\PP_{\btheta'})\le\frac18\EE_{\btheta}\bigg[\sum_{t=1}^T\langle\btheta-\btheta', \bphi(x_t, a_t^1)-\bphi(x_t, a_t^2)\rangle^2\bigg].
    \end{align*}
\end{lemma}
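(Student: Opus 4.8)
The plan is to combine the standard divergence decomposition for interactive (bandit) protocols with a second-order Bregman-type bound on the per-round KL divergence between two Bradley–Terry–Luce preference distributions.

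First I would write the full interaction trajectory as $H_T=(x_1,\cA_1,a_1^1,a_1^2,y_1,\dots,x_T,\cA_T,a_T^1,a_T^2,y_T)$ and factorize its law under $\PP_{\btheta}$ (and $\PP_{\btheta'}$) through the chain rule for KL divergence, exactly as in Lemma~15.1 of \citet{lattimore2020bandit}. In this expansion, for each round $t$ there are four conditional contributions: the conditional law of $(x_t,\cA_t)$ given the past (produced by the environment, hence independent of the unknown parameter), the conditional law of $(a_t^1,a_t^2)$ given $(H_{t-1},x_t,\cA_t)$ (produced by the algorithm's — possibly randomized — selection rule, hence also parameter-independent), and the conditional law of $y_t$ given $(x_t,a_t^1,a_t^2)$. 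Only the last of these depends on $\btheta$ versus $\btheta'$, so all but the $y_t$-terms contribute zero, leaving
\[
\DKL(\PP_{\btheta}\|\PP_{\btheta'})=\sum_{t=1}^T\EE_{\btheta}\Big[\DKL\big(\PP_{\btheta}(y_t\mid x_t,a_t^1,a_t^2)\,\big\|\,\PP_{\btheta'}(y_t\mid x_t,a_t^1,a_t^2)\big)\Big].
\]
(If one prefers to sidestep adaptivity concerns entirely, one may simply take the contexts and action spaces fixed in advance, as in the construction underlying Theorem~\ref{theorem:lower_bound}.)

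Next I would bound each per-round term. Setting $z=\langle\btheta,\bphi(x_t,a_t^1)-\bphi(x_t,a_t^2)\rangle$ and $z'=\langle\btheta',\bphi(x_t,a_t^1)-\bphi(x_t,a_t^2)\rangle$, the BTL model makes $y_t\in\{\pm1\}$ a member of a one-parameter exponential family with natural parameter $z$, sufficient statistic $(1+y_t)/2$, and log-partition $A(z)=\log(1+e^z)$, so that $\PP(y_t=1)=1/(1+e^{-z})$. The KL divergence between two members of such a family is the Bregman divergence of $A$, i.e. $\DKL(\PP_z\|\PP_{z'})=A(z')-A(z)-A'(z)(z'-z)$; a Taylor expansion gives $\DKL(\PP_z\|\PP_{z'})=\tfrac12A''(\xi)(z-z')^2$ for some $\xi$ between $z$ and $z'$, and since $A''(\xi)=e^{\xi}/(1+e^{\xi})^2\le 1/4$ we obtain $\DKL(\PP_z\|\PP_{z'})\le\tfrac18(z-z')^2=\tfrac18\langle\btheta-\btheta',\bphi(x_t,a_t^1)-\bphi(x_t,a_t^2)\rangle^2$. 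Substituting this into the decomposition above and pulling the finite sum inside the expectation yields the claimed inequality.

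I do not expect a genuine obstacle here: the decomposition is the routine chain-rule argument, and the per-round estimate is elementary calculus on the logistic log-partition (the constant $1/8$ coming precisely from $\sup_\xi A''(\xi)=1/4$). The only point deserving a line of care is the assertion that the $(x_t,\cA_t)$- and $(a_t^1,a_t^2)$-terms drop out of the KL sum, which holds because neither the environment nor the algorithm's internal randomization depends on the hidden parameter.
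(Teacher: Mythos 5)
Your proposal is correct and matches the paper's argument: the paper likewise uses the divergence (chain-rule) decomposition to reduce $\DKL(\PP_{\btheta}\|\PP_{\btheta'})$ to an expected sum of per-round Bernoulli KL terms, and then bounds each term by a second-order Taylor expansion of the logistic function, whose second derivative $e^{\xi}/(1+e^{\xi})^2\le1/4$ gives exactly the factor $1/8$. Your exponential-family/Bregman phrasing of the per-round step is just a repackaging of the same computation, so there is no substantive difference.
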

\begin{proof}
    We define the shorthand notation
    \begin{align*}
    p_{\btheta, t}\coloneqq\PP_{\btheta}[y_t=1|x_t, a_t^1, a_t^2]=\frac{1}{1+\exp(\langle\btheta, \bphi(x_t, a_t^2)-\bphi(x_t, a_t^1)\rangle)},
    \end{align*}
    then by decomposition properties of the relative entropy, we have
    \begin{align}
    \DKL(\PP_{\btheta}||\PP_{\btheta'})&=\EE_{\btheta}\bigg[\sum_{t=1}^T\DKL(\Ber(p_{\btheta, t})||\Ber(p_{\btheta', t}))\bigg].\label{eq:KL_decomp}
    \end{align}
    Denote
    \begin{align*}
    u&=\langle\btheta, \bphi(x_t, a_t^1)-\bphi(x_t, a_t^2)\rangle,\\
    v&=\langle\btheta', \bphi(x_t, a_t^1)-\bphi(x_t, a_t^2)\rangle,
    \end{align*}
    then
    \begin{align}
    \DKL(\Ber(p_{\btheta, t})||\Ber(p_{\btheta', t}))&=\frac1{1+e^{-u}}\cdot\log\frac{1+e^{-v}}{1+e^{-u}}+\frac1{1+e^u}\cdot\log\frac{1+e^v}{1+e^u}\nonumber\\
    &=\log(1+e^{-v})-\log(1+e^{-u})-\frac{-1}{1+e^u}(v-u)\nonumber\\
    &=\frac{e^\xi}{(1+e^\xi)^2}\cdot\frac{(v-u)^2}2\nonumber\\
    &\le\frac{(v-u)^2}8,\label{eq:DKL_bound_single}
    \end{align}
    where the first equality holds due to definition of $p_{\btheta, t}$, the third equality holds due to Taylor expansion with Langragian remainder, and the inequality holds due to AM-GM inequality. Plugging \eqref{eq:DKL_bound_single} into \eqref{eq:KL_decomp}, we derive the desired upper bound for $\DKL(\PP_{\btheta}||\PP_{\btheta'})$.
\end{proof}

\bibliography{arxiv_v2}
\bibliographystyle{ims}

\end{document}